\documentclass[twoside]{article}

%
\usepackage[accepted]{aistats2025}
%


\usepackage[round]{natbib}

\bibliographystyle{apalike}


\usepackage{amsmath,amssymb,amsthm,bm,dsfont}
\usepackage{float}
\usepackage{xcolor}
\usepackage{mathtools}
\usepackage{booktabs,multirow,bigstrut,makecell}
\usepackage{enumitem}
\usepackage{algorithm}
\usepackage{algorithmic}
\usepackage{comment}
\usepackage[hidelinks]{hyperref}
\usepackage{tikz}
\usetikzlibrary{positioning,fit,quotes,arrows}
\usepackage{caption}
\usepackage{subcaption}

\definecolor{dkgreen}{rgb}{0,0.6,0}
\definecolor{gray}{rgb}{0.5,0.5,0.5}
\definecolor{mauve}{rgb}{0.58,0,0.82}

\definecolor{color1}{HTML}{53446B}
\definecolor{color2}{HTML}{807DBA}
\definecolor{color3}{HTML}{C0AECE}
\definecolor{color4}{HTML}{CE9292}
\definecolor{color5}{HTML}{DAB6B6}
\definecolor{color6}{HTML}{A40407}
\definecolor{color7}{HTML}{7484A8}
\definecolor{color8}{HTML}{F5BD1C}

\newcommand{\DG}[1]{
  {\color{orange} [DG: {#1}]}
 }


\newtheorem{thm}{Theorem}[section]
\newtheorem{lem}[thm]{Lemma}

\newtheorem{coro}[thm]{Corollary}

\theoremstyle{definition}
\newtheorem{dfn}{Definition}[section]

\newtheorem{asp}{Assumption}

\theoremstyle{remark}



\newcommand{\bb}{\bm{b}}

\newcommand{\br}{\bm{r}}


\newcommand{\Ib}{\mathbf{I}}

\newcommand{\Lb}{\mathbf{L}}

\newcommand{\Xb}{\mathbf{X}}
\newcommand{\Yb}{\mathbf{Y}}

\newcommand{\bA}{\bm{A}}
\newcommand{\bB}{\bm{B}}

\newcommand{\bH}{\bm{H}}
\newcommand{\bI}{\bm{I}}

\newcommand{\bR}{\bm{R}}
\newcommand{\bS}{\bm{S}}

\newcommand{\bX}{\bm{X}}

\newcommand{\bZ}{\bm{Z}}

\newcommand{\bbeta}{\bm{\beta}}

\newcommand{\balpha}{\bm{\alpha}}

\newcommand{\bmu}{\bm{\mu}}

\newcommand{\btheta}{\bm{\theta}}

\newcommand{\bpi}{\bm{\pi}}

\newcommand{\bphi}{\bm{\phi}}
\newcommand{\bpsi}{\bm{\psi}}
\newcommand{\bvarepsilon}{\bm{\varepsilon}}


\newcommand{\bSigma}{\bm{\Sigma}}
\newcommand{\bPhi}{\bm{\Phi}}
\newcommand{\bPsi}{\bm{\Psi}}



\newcommand{\bbE}{\mathbb{E}}

\newcommand{\bbN}{\mathbb{N}}

\newcommand{\bbP}{\mathbb{P}}

\newcommand{\bbR}{\mathbb{R}}


\newcommand{\cA}{\mathcal{A}}
\newcommand{\cB}{\mathcal{B}}

\newcommand{\cD}{\mathcal{D}}

\newcommand{\cH}{\mathcal{H}}
\newcommand{\cI}{\mathcal{I}}

\newcommand{\cM}{\mathcal{M}}
\newcommand{\cN}{\mathcal{N}}

\newcommand{\cQ}{\mathcal{Q}}

\newcommand{\cS}{\mathcal{S}}
\newcommand{\cT}{\mathcal{T}}
\newcommand{\cU}{\mathcal{U}}
\newcommand{\cV}{\mathcal{V}}


\newcommand{\argmin}{\mathop{\mathrm{argmin}}}
\newcommand{\argmax}{\mathop{\mathrm{argmax}}}


\newcommand{\bbone}{{\mathds{1}}}   


\newcommand{\norm}[1]{\lVert#1\rVert}
\newcommand{\absbig}[1]{\left\lvert#1\right\rvert}
\newcommand{\abs}[1]{\lvert#1\rvert}

\newcommand{\prth}[1]{\left(#1\right)}
\newcommand{\brck}[1]{\left[#1\right]}
\newcommand{\brce}[1]{\left\{#1\right\}}

\newcommand{\indep}{\mathrel{\perp\!\!\!\perp}}
\newcommand{\nindep}{\mathrel{\not\!\perp\!\!\!\perp}}

\DeclareMathOperator{\diag}{diag}
\DeclareMathOperator{\Var}{Var}

\begin{document}

%

%

\twocolumn[

\aistatstitle{Harnessing Causality in Reinforcement Learning With Bagged Decision Times}

\aistatsauthor{ Daiqi Gao \And Hsin-Yu Lai \And  Predrag Klasnja \And Susan A. Murphy }

\aistatsaddress{ Harvard University \And  Allen Institute \And University of Michigan \And Harvard University } ]

\begin{abstract}
We consider reinforcement learning (RL) for a class of problems with bagged decision times. A bag contains a finite sequence of consecutive decision times. The transition dynamics are non-Markovian and non-stationary within a bag. All actions within a bag jointly impact a single reward, observed at the end of the bag. For example, in mobile health, multiple activity suggestions in a day collectively affect a user's daily commitment to being active. Our goal is to develop an online RL algorithm to maximize the discounted sum of the bag-specific rewards. To handle non-Markovian transitions within a bag, we utilize an expert-provided causal directed acyclic graph (DAG). Based on the DAG, we construct states as a dynamical Bayesian sufficient statistic of the observed history, which results in Markov state transitions within and across bags. We then formulate this problem as a periodic Markov decision process (MDP) that allows non-stationarity within a period. An online RL algorithm based on Bellman equations for stationary MDPs is generalized to handle periodic MDPs. We show that our constructed state achieves the maximal optimal value function among all state constructions for a periodic MDP. Finally, we evaluate the proposed method on testbed variants built from real data in a mobile health clinical trial.
\end{abstract}

\section{INTRODUCTION} \label{sec:introduction}

Reinforcement learning (RL) algorithms that maximize rewards on an infinite horizon commonly model the underlying dynamical system as a stationary Markov decision process (MDP).
The state transition and reward function are assumed to be Markovian and time-invariant across all decision times.
However, these assumptions may not hold in many real-world problems when future variables depend on the history beyond the current decision time \citep{tang2024reinforcement} or when the environment changes over time \citep{padakandla2020reinforcement}. 
A special case that violates such assumptions is a sequence of bagged decision times \citep{tang2024reinforcement}.
Each bag is defined as all variables in a finite sequence of consecutive decision times, including the actions, rewards, and other related context variables.  
Here, we consider settings in which the transition dynamics of the variables are non-Markovian and non-stationary within a bag.
Additionally, there is only one reward in a bag, representing the cumulative effect of all actions in the bag.  
Our goal is to construct an online RL algorithm that learns within and between bags to maximize the discounted sum of the bag-specific rewards.
This is in contrast to the more classical goal of constructing an online RL algorithm that learns across decision times to maximize a discounted sum of decision-time-specific rewards.

This paper is motivated by our work on the next implementation of the mobile health (mHealth) intervention, HeartSteps \citep{liao2020personalized,spruijt2022advancing}. 
HeartSteps is designed to help prevent adverse metabolic and cardiac events
by encouraging individuals to increase and maintain physical activity (PA).
A key component focuses on whether or not to deliver an activity suggestion to prompt users to engage in short bouts of PA. 
Our goal is to design an RL algorithm that targets the daily reward of ``commitment to being active.'' 
Here, the daily reward can be impacted by all five decisions regarding the delivery of activity suggestions within a day.
The 30-minute step count after a suggestion is a mediator of the reward and is immediately observed after the intervention.
Each day can be viewed as a bag with five decision times and one reward.
Another example with bagged decision times is online education, where the system may personalize the sequence of learning sessions, materials, or hints within a daily lesson, with the reward being the quiz score at the end of the lesson. 
In autonomous driving, a sequence of actions is taken to complete each task in a series, with the reward of the bag being the successful completion of this task.

\paragraph{Main Contributions}
We consider  RL with bagged decision times.
(1) \textit{RL framework}.
We demonstrate how to utilize an expert-provided causal directed acyclic graph (DAG) to handle non-Markovian transitions within a bag.  Such causal information allows us to construct states such that the state transition is Markovian across and within a bag. 
Further, we show that with the constructed states, the problem can be framed as a periodic MDP, which allows non-stationary state transitions and reward functions in a period. 
Based on the above results, any RL algorithm for stationary MDPs based on Bellman optimality equations can be generalized to handle periodic MDPs and bagged decision times. 
(2) \textit{Optimality}.
There can be multiple constructed states that fit in the framework of a periodic MDP.
We demonstrate that a dynamical Bayesian sufficient statistic of the observed history yields the maximal optimal value function. 
Besides, we extend the Bellman optimality equations for a periodic MDP to accommodate time-varying discount factors.
(3) \textit{Application}.
We construct testbed variants using real data from the HeartSteps clinical trial and evaluate the advantage of the proposed algorithm on them.
We also empirically verify that our model-free RL algorithm is robust to misspecified assumptions in the DAG.

\section{RELATED WORK}

\paragraph{Causal RL.}
Causal RL is a class of methods that model and exploit cause-effect relationships.
\citet{lattimore2016causal,lu2020regret,lu2022efficient} primarily modeled causal relationships between multiple actions to reduce regret in bandits and MDPs.
In our work, causal relationships between all variables in a bag are simultaneously leveraged for state construction and policy learning. 
Mediators are the most useful components for improving the optimal value function.
\citet{zhang2020designing} developed online RL algorithms to find the optimal dynamic treatment regime. 
By leveraging a known causal DAG, they reduced the dimensionality of the policy space by gradually eliminating irrelevant variables while maintaining the optimal value function.
In contrast, we propose a method that directly identifies the best state based on the dynamical Bayesian sufficient statistic in a causal DAG.
To find the optimal mHealth intervention policy, \citet{wang2021optimizing} constructed a stochastic human simulator as a dynamic Bayesian network using psychological insights and empirical data. 
They then learned a policy offline using the simulator.
In our work, we use the causal DAG to develop an RL algorithm that updates the policy online.
\citet{tran2024inferring} estimated the causal effect of long-term treatments using short-term observations in offline RL without assuming mediation.
In contrast, we focus on online policy learning with the help of mediators.
\citet{deng2023causal} provided a review of causal RL for improving sample efficiency,
enhancing knowledge transfer,
and addressing confounding bias.

\paragraph{Causality-Based State Construction}
Previous works have also considered state construction based on causal information.
However, their definitions are insufficient for bagged decision times.
\citet{zhang2020invariant,zhang2021learning} connected causal feature sets \citep{peters2016causal} to state abstractions in block MDP and POMDP, but their definition does not account for time-varying reward functions.
\citet{huang2022action} defined action-sufficient state representation for POMDP, which can include latent (unobserved) states.
In contrast, our definition purposefully distinguishes between observed and unobserved states.
The causal state \citep{shalizi2001computational} refers to the equivalence classes of histories before time $t$ that lead to the same futures after time $t$, but it is not a definition of the state.
We follow \citet{rosas2020causal} to define the state as a dynamical Bayesian sufficient statistic, which is a stochastic process, to emphasize the time-varying nature of states within a bag.
We modified the original definition to ensure that all future rewards, not just the current reward, are independent of the history given the current state and action.

\paragraph{Periodic MDP}
For a periodic MDP, 
\citet{riis1965discounted,veugen1983numerical,hu2014near,wang2023scheduling} assumed the state transition function is known or can be estimated efficiently and find the optimal policy based on value iteration.
Under the general RL framework without a known transition function,
\citet{chen2023learning} proposed a policy gradient method to find optimal policies at different times and implemented it as an online deep RL algorithm.
\citet{aniket2024online} developed an online periodic upper confidence bound reinforcement learning-2 algorithm after augmenting the state with the cycle phase.
All the aforementioned works assume that a reward is observed at each decision time, the transition dynamics are inherently Markovian, and the discount factor is time-invariant.
However, we focus on bagged decision times with non-Markovian transitions and extend a general periodic MDP to allow time-varying discount factors.
A periodic MDP with a time-invariant discount factor can be converted into a stationary MDP by augmenting the state with the time index.
However, for a periodic MDP with time-varying discount factors, a stationary MDP with a properly defined discount factor and reward function can recover its optimal value function but may fail to preserve the ordering of suboptimal policies \citep{pitis2019rethinking}.
This is problematic when, for example, comparing two estimated policies learned from limited data, which are typically suboptimal.

\paragraph{Bagged Rewards}
\citet{tang2024reinforcement} considered a bag of sequential decision times along with a bagged reward. 
They assumed that only the sum of the rewards in the bag is observable, and the state transition function is stationary across all decision times.
They redistributed the bagged reward to each decision time, and the policy is then optimized with off-the-shelf RL algorithms for stationary MDPs.
In contrast, we allow non-stationarity within a bag and do not assume that the observed reward is a sum of instantaneous rewards.




\section{BAGGED DECISION TIMES} \label{sec:settings}

\begin{figure*}[t]
    \centering
    \begin{tikzpicture}[->, thick, main/.style={font=\sffamily}]
    \tikzstyle{fixedwidth} = [draw=none, text width=0.8cm, align=center]
    \matrix [column sep=0.1cm, row sep=0.35cm] {
          \node[fixedwidth] (R0) {$R_{d-1}$};
        & & & & & & & & 
        & \node[fixedwidth] (RK) {$R_{d}$}; 
        & & & & & & & & 
        & \node[fixedwidth] (RRK) {$R_{d + 1}$}; 
        & & & \node[fixedwidth] (R6) {}; \\
        & & \node[fixedwidth] (M1) {$M_{d, 1}$};
        & \node[fixedwidth] (M2) {\dots};
        & & \node[fixedwidth] (MK) {$M_{d, K}$};
        & & & & 
        & & \node[fixedwidth] (MM1) {$M_{d + 1, 1}$};
        & \node[fixedwidth] (MM2) {\dots};
        & & \node[fixedwidth] (MMK) {$M_{d + 1, K}$};
        & \\
        \\
        & \node[fixedwidth] (C1) {$C_{d, 1}$}; & \node[fixedwidth] (A1) {$A_{d, 1}$};
        & \node[fixedwidth] (A2) {\dots};
        & \node[fixedwidth] (CK) {$C_{d, K}$}; & \node[fixedwidth] (AK) {$A_{d, K}$};
        & & & & 
        & \node[fixedwidth] (CC1) {$C_{d + 1, 1}$}; & \node[fixedwidth] (AA1) {$A_{d + 1, 1}$};
        & \node[fixedwidth] (AA2) {\dots};
        & \node[fixedwidth] (CCK) {$C_{d + 1, K}$}; & \node[fixedwidth] (AAK) {$A_{d + 1, K}$};
        & \\
        \\
        & & \node[fixedwidth] (N1) {$N_{d, 1}$};
        & \node[fixedwidth] (N2) {\dots};
        & & \node[fixedwidth] (NK) {$N_{d, K}$};
        & & & & 
        & & \node[fixedwidth] (NN1) {$N_{d + 1, 1}$};
        & \node[fixedwidth] (NN2) {\dots};
        & & \node[fixedwidth] (NNK) {$N_{d + 1, K}$};
        & \\
        \node[fixedwidth] (E0) {$E_{d-1}$};
        & & & & & & & & 
        & \node[fixedwidth] (EK) {$E_{d}$}; 
        & & & & & & & & 
        & \node[fixedwidth] (EEK) {$E_{d + 1}$}; 
        & & & \node[fixedwidth] (E6) {}; \\
    };
    \path[every node/.style={font=\sffamily}]
        (R0) edge [color1] node [right] {} (RK)
        (RK) edge [color1] node [right] {} (RRK)
        (RRK) edge [color1] node [right] {} (R6)
        (E0) edge [color6] node [right] {} (EK)
        (EK) edge [color6] node [right] {} (EEK)
        (EEK) edge [color6] node [right] {} (E6)
        (E0) edge [color6] node [right] {} (R0)
        (EK) edge [color6] node [right] {} (RK)
        (EEK) edge [color6] node [right] {} (RRK)
        (R0) edge [color2] node [right] {} (M1)
        (R0) edge [color2] node [right] {} (MK)
        (M1) edge [color3] node [right] {} (RK)
        (MK) edge [color3] node [right] {} (RK)
        (E0) edge [color4, bend right=15] node [right] {} (M1)
        (E0) edge [color4] node [right] {} (MK)
        (E0) edge [color4] node [right] {} (N1)
        (E0) edge [color4] node [right] {} (NK)
        (N1) edge [color5] node [right] {} (EK)
        (NK) edge [color5] node [right] {} (EK)
        (A1) edge node [right] {} (M1)
        (AK) edge node [right] {} (MK)
        (C1) edge node [right] {} (M1)
        (CK) edge node [right] {} (MK)
        (A1) edge node [right] {} (N1)
        (AK) edge node [right] {} (NK)
        (RK) edge [color2] node [right] {} (MM1)
        (RK) edge [color2] node [right] {} (MMK)
        (MM1) edge [color3] node [right] {} (RRK)
        (MMK) edge [color3] node [right] {} (RRK)
        (EK) edge [color4, bend right=15] node [right] {} (MM1)
        (EK) edge [color4] node [right] {} (MMK)
        (EK) edge [color4] node [right] {} (NN1)
        (EK) edge [color4] node [right] {} (NNK)
        (NN1) edge [color5] node [right] {} (EEK)
        (NNK) edge [color5] node [right] {} (EEK)
        (AA1) edge node [right] {} (MM1)
        (AAK) edge node [right] {} (MMK)
        (CC1) edge node [right] {} (MM1)
        (CCK) edge node [right] {} (MMK)
        (AA1) edge node [right] {} (NN1)
        (AAK) edge node [right] {} (NNK);
    \end{tikzpicture}
    \caption{Causal DAG for bag $d$ and $d + 1$. The arrows pointing to the actions $A_{d, 1:K}$ are omitted.}
    \label{fig:dag}
\end{figure*}
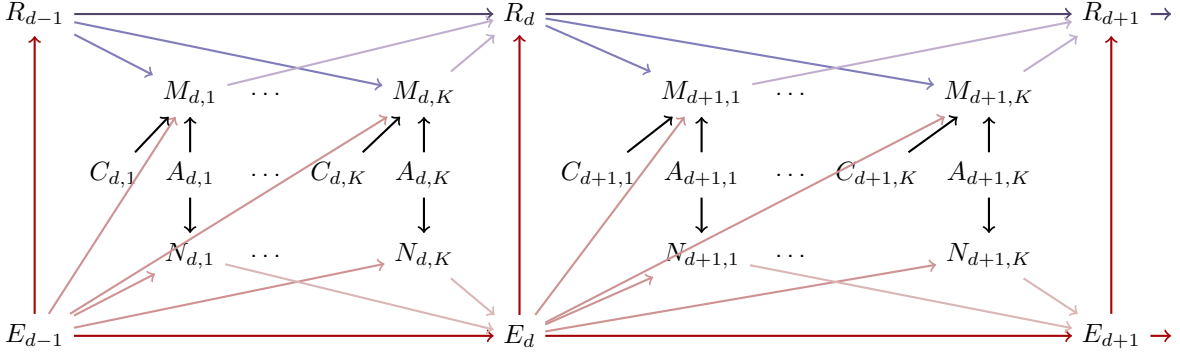

In this section, we define a problem with bagged decision times in RL based on a causal DAG \citep{pearl2009causality}.
In a DAG, a path is said to be \textit{d-separated} by a set of variables $Z$ if it contains a chain or a fork where the middle node is in $Z$, or if it contains a collider where the middle node or any of its descendants is not in $Z$.
A set $X$ and a set $Y$ are said to be d-separated by $Z$ if and only if $Z$ blocks every path from $X$ to $Y$, which guarantees that $X$ is independent of $Y$ given $Z$ in every distribution compatible with the DAG.
Suppose the decision times are indexed by the ordered set $\cI := \{(d, k): d \ge 1, k \in \{1:K\}\}$.
For two indices $(d, k), (t, l) \in \cI$, we have $(d, k) < (t, l)$ if $d = t, k < l$ or $d < t$, and $(d, k) = (t, l)$ if $d = t, k = l$.
We use the notation $i:j$ to represent the index set $i, \dots, j$, where the indices $i, j \in \bbN$.
When $i > j$, $i:j$ is an empty set.

\begin{dfn}[Bagged Decision Times] \label{dfn:bagged}
In a problem with bagged decision times, each variable $V_{d, k}$ is indexed by some $(d, k) \in \cI$.
A variable $V_{d, k}$ occurs before $V_{t, l}$ for any $(t, l) > (d, k)$.
A variable $V_{d, K}$ at time $k = K$ can be denoted as $V_d$.
A bag indexed by $d \ge 0$ contains a sequence of $K$ consecutive decision times $(d, 1:K)$ and a vector $\bB_d$ of all variables indexed by $(d, 1:K)$.
There is an action $A_{d, k} \in \cA_k$ at each time $(d, k)$ and a reward $R_d$ at the end of bag $d$.
The multivariate distribution of all variables $\bB_{0:\infty}$ can be described by a causal DAG.
There exists at least one variable $V_{d, k} \in \bB_d$ s.t. the reward $R_d$ is a descendant of $V_{d, k}$.
Let $\widetilde{\bB}_d \subseteq \bB_d$ be a vector of dimension $p_{\widetilde{\bB}}$ s.t. $\widetilde{\bB}_d$ d-separates $\bB_{d + 1}$ and $\bB_{0:d}$, but any proper subset of $\widetilde{\bB}_d$ does not d-separate $\bB_{d + 1}$ and $\bB_{0:d}$.
Given any value $\bb \in \bbR^{p_{\widetilde{\bB}}}$ and any fixed action vector $a_{1:K} \in \cA_1 \times \dots \times \cA_K$, the conditional distribution of $\bB_{d}$ is stationary across bags, i.e., $\bbP (\bB_{d + 1} | \widetilde{\bB}_{d} = \bb, A_{d + 1, 1:K} = a_{1:K}) = \bbP (\bB_{d} | \widetilde{\bB}_{d - 1} = \bb, A_{d, 1:K} = a_{1:K})$.
\end{dfn}

Definition~\ref{dfn:bagged} assumes stationarity across bags without making assumptions about dynamics within a bag.
The definition of the d-separator $\widetilde{\bB}_d$ does not introduce additional assumptions but only identifies the key structure in a bag.
We intrinsically assume that causal sufficiency is satisfied in the DAG, i.e. all the common causes of measured variables have been measured \citep{scheines1997introduction,spirtes2001causation}, so that conditioning on the history will not open a non-causal path between the actions and the rewards.

In a problem with bagged decision times, our goal is to construct an RL algorithm that maximizes the discounted sum of rewards $\bbE \{ \sum_{d=1}^{\infty} \bar{\gamma}^{d - 1} R_d \}$ given any $\widetilde{\bB}_0 \sim \nu$, where the discount factor $\bar{\gamma} \in [0, 1)$, and the initial distribution of the first d-separator $\widetilde{\bB}_0$ is $\nu$.
Define $H_{d, k}$ as the observed data since the previous action ($A_{d, k - 1}$ if $k \ge 2$ or $A_{d - 1, K}$ if $k = 1$) and before the current action $A_{d, k}$.
Let $\cH_{d, k} := \{ H_{1, 1}, \dots, H_{d, k} \}$ represent all the observed history before $A_{d, k}$.

Figure~\ref{fig:dag} is an example of a causal DAG that represents bagged decision times.
In Figure~\ref{fig:dag},
$A_{d, k} \in \cA_k$ is the action, 
$R_{d} \in \bbR$ is the reward,
$C_{d, k} \in \bbR^p$ is a $p$-dimensional context, 
$E_{d} \in \bbR$ is a mediator of the reward occurring at the end of the bag,
$M_{d, k} \in \bbR$ is a mediator of the reward $R_{d}$ occurring immediately after $A_{d, k}$,
and $N_{d, k} \in \bbR$ is a mediator of $E_{d}$ occurring immediately after $A_{d, k}$.
The reward $R_d$ and the mediator $E_d$ reflect the delayed effects of actions in a bag and evolve slowly from their last values.
The context $C_{d, k}$ is assumed to be exogenous.
The action $A_{d, k}$ impacts $R_{d}$ only through the mediators $M_{d, k}$ and $N_{d, k}$.
The mediators $N_{d, k}$ and $E_d$ are not directly affected by $C_{d, k}$ or $R_{d-1}$.
The variables $C_{d, k}, M_{d, k}, N_{d, k}$ do not directly affect $C_{d, k'}, M_{d, k'}, N_{d, k'}$ at another decision time.
The subset $\widetilde{\bB}_d = \{E_d, R_d\}$ blocks all causal paths from $\bB_{0:d}$ to $\bB_{d+1}$.
An episode consists of the sequence, 
$\{E_0, R_0\} \cup \{ C_{d, 1:K}, \allowbreak A_{d, 1:K}, \allowbreak M_{d, 1:K}, \allowbreak N_{d, 1:K}, \allowbreak E_{d}, \allowbreak R_{d} \}_{d=0}^{\infty}$.
The observed data after $A_{d, k - 1}$ and before $A_{d, k}$ is $H_{d, k} = [A_{d, k - 1}, M_{d, k - 1}, N_{d, k - 1}, C_{d, k}]$ when $k \ge 2$
and $H_{d, 1} = [A_{d - 1, K}, M_{d - 1, K}, N_{d - 1, K}, E_{d - 1}, R_{d - 1}, C_{d, 1}]$ when $k = 1$.



\section{METHODOLOGY} \label{sec:methodology}

In this section, we demonstrate how to define a state such that the state transition is Markovian within and across bags.
We then fit the problem with bagged decision times into the framework of a periodic MDP and compare different methods of state construction.

\subsection{Markov State Transition} \label{sec:state.construction}


Definition~\ref{dfn:bagged} guarantees that the transition across bags is Markovian, since $\bB_{d + 1} \indep \bB_{0:d} | \widetilde{\bB}_d$.
We construct a state at each decision time based on the dynamical Bayesian sufficient statistic \citep[D-BaSS,][]{rosas2020causal} and show that the state transition based on the D-BaSS is Markovian within a bag.

\begin{dfn}[D-BaSS]
    For two stochastic processes $\bH, \bR$, a process $\bS$ is a D-BaSS of $\bH$ with respect to (w.r.t) $\bR$ given a process $\bA$ if, for all index $t \in \cT$, the following conditions hold:
    (1) Precedence: there exists a function $F_t (\cdot)$ s.t. $S_t = F_t (H_{1:t})$ for all $t \in \cT$.
    (2) Sufficiency: $R_{t'} \indep H_{1:t} | S_t, A_t$ for any $t' \ge t$.
    Moreover, a stochastic process $\bZ$ is a minimal D-BaSS of $\bH$ w.r.t. $\bR$ if it is itself a D-BaSS, and for any D-BaSS $\bS$ there exists a function $f_t (\cdot)$ s.t. $f_t (S_t) = Z_t$ for all $t \in \cT$.
\end{dfn}


Now we define the state by taking $\bA = \{A_{d, k}\}_{\cI}$, $\bH = \{H_{d, k}\}_{\cI}$, and $\bR = \{R_{d} \}_{d \ge 1}$. 
Since $R_d$ is observed at the end of a bag, it is associated with time $(d, K)$.
For a D-BaSS $\{S_{d, k}\}_{\cI}$, the sufficiency condition thus requires $R_{d:\infty} \indep \cH_{d, k} | S_{d, k}, A_{d, k}$ for all $k \in \{1:K\}$.
Note that $S_{d, k}$ needs to block all the paths to the future rewards $R_{d:\infty}$.
Definition~\ref{dfn:bagged} guarantees that in a problem with bagged decision times, a D-BaSS always exists at any time $(d, k)$ and can be constructed as $(H_{d, 1} \cap (\bB_d \cup \widetilde{\bB}_{d - 1})) \cup \dots \cup H_{d, k}$, although it may not be minimal.

\begin{asp} \label{asp:state.D.BaSS}
    In a problem with bagged decision times as defined in Definition~\ref{dfn:bagged}, let $\{S_{d, k}\}_{\cI}$ be the D-BaSS of $\{H_{d, k}\}_{\cI}$ w.r.t. $\{R_{d}\}_{d \ge 1}$ given $\{A_{d, k}\}_{\cI}$, and $S_{d, 1:K} \subseteq \bB_d \cup \widetilde{\bB}_{d - 1}$.
\end{asp}

A D-BaSS contains ancestors but not necessarily parents of future rewards, as some parents may not be observed before taking an action.
The assumption $S_{d, 1:K} \subseteq \bB_d \cup \widetilde{\bB}_{d - 1}$ requires that $S_{d, 1}$ does not contain any variable from the last bag except for the d-separator $\widetilde{\bB}_{d - 1}$. 
It will be used to guarantee the stationary state transition across bags.
This is a weak assumption since any variable in $\cH_{d - 1, K} \backslash (\bB_d \cup \widetilde{\bB}_{d - 1})$ is d-separated from $S_{d, 1:K}$ by $\widetilde{\bB}_{d - 1}$.

\begin{lem} \label{lem:markov.transition}
    Under Assumption~\ref{asp:state.D.BaSS}, the state transition is Markovian within and across bags, and we have $R_{d} \indep S_{t, l}, A_{t, l} | S_{d, k}, A_{d, k}$ for any $(t, l) < (d, k)$. Further, $S_{d, k + 1} \indep \widetilde{\bB}_{d - 1} | S_{d, k}, A_{d, k}$ for $k \le K - 1$ and $R_d, S_{d + 1, 1} \indep \widetilde{\bB}_{d - 1} | S_{d, K}, A_{d, K}$.
\end{lem}

Lemma~\ref{lem:markov.transition} shows that a D-BaSS $S_{d,k}$ will d-separate the next state $S_{d, k + 1}$ or $S_{d + 1, 1}$ and the next reward $R_d$ from the previous states and the last bag separator $\widetilde{\bB}_{d - 1}$. 
Although we focus on the bagged decision times, the results in Lemma~\ref{lem:markov.transition} can be generalized to problems where a reward is observed at each time $k$.

Reading off the DAG in Figure~\ref{fig:dag}, we have that
\begin{equation}
    S_{d, k} = [E_{d - 1}, R_{d - 1}, M_{d, 1:(k-1)}, N_{d, 1:(k-1)}, C_{d, k}] \label{equ:state.definition}
\end{equation}
is a D-BaSS (see a proof in Appendix~\ref{sec:proof.example.dag}).
However, $S_{d, k}$ is not the only state that has Markov state transitions.
Some other possible states are
\begin{align}
    S^{\prime}_{d, k} &= [E_{d - 1}, R_{d - 1}, M_{d, 1:(k-1)}, A_{d, 1:(k-1)}, C_{d, k}], \label{equ:state.definition.2} \\
    S^{\prime \prime}_{d, k} &= [E_{d - 1}, R_{d - 1}, A_{d, 1:(k-1)}], \label{equ:state.definition.3}
\end{align}
which also have the properties in Lemma~\ref{lem:markov.transition}.
Note that $A_{d, 1:(k-1)}$ is excluded from the state $S_{d, k}$ since $M_{d, k}$ blocks the path from $A_{d, k}$ to $R_d$ and $N_{d, k}$ blocks the path from $A_{d, k}$ to $E_d$.  
All the above states $S_{d, k}, S^{\prime}_{d, k}, S^{\prime \prime}_{d, k}$ contain the d-separator $\widetilde{\bB}_{d - 1}$.
The advantage of a D-BaSS will be discussed in Theorem~\ref{thm:best.state}.

\subsection{$K$-Periodic MDP} \label{sec:periodic.mdp}

To define a $K$-periodic MDP, we momentarily consider the setting in which there is a state $S_{d, k}$, an action $A_{d, k}$, and a reward $R_{d, k}$ at each time $k$ in bag $d$. 
We extend the definition of \citet{riis1965discounted,hu2014near} to accommodate time-varying discount factors.
The space of probability distributions over a set $\cB$ is denoted as $\Delta (\cB)$.

\begin{dfn}[$K$-Periodic MDP] \label{def:periodic.mdp}
    A $K$-periodic MDP is a tuple $\cM := (\cS_{1:K}, \cA_{1:K}, \bbP_{1:K}, r_{1:K}, \gamma_{1:K}, \nu)$,
    where 
    $\nu \in \Delta (\cS_1)$ is the initial state distribution for $S_{1, 1}$.
    At time $k \in \{1:K\}$,
    $\cS_k$ is the state space, 
    $\cA_k$ is the action space, 
    $r_k: \cS_k \times \cA_k \mapsto \Delta(\bbR)$ is the reward function with $r_k (s_{d, k}, a_{d, k}) = \bbP (R_{d, k} | S_{d, k} = s_{d, k}, A_{d, k} = a_{d, k})$ for any $s_{d, k} \in \cS_k$ and $a_{d, k} \in \cA_k$,
    and $\gamma_k$ is the discount factor applied to rewards after time $k$.
    The $K$ discount factors satisfy that their product $\bar{\gamma} := \prod\nolimits_{k = 1}^K \gamma_k \in [0, 1)$.
    $\bbP_{1:K}$ are $K$ transition probability functions with $\bbP_{k}: \cS_{k - 1} \times \cA_{k - 1} \mapsto \Delta(\cS_{k})$ if $k > 1$ and $\bbP_{1}: \cS_{K} \times \cA_{K} \mapsto \Delta(\cS_{1})$ if $k = 1$. 
    When $k > 1$, the next state $S_{d, k}$ is sampled from $\bbP_{k} (\cdot | S_{d, k - 1}, A_{d, k - 1})$, and when $k = 1, d \ge 2$, the next state $S_{d, k}$ is sampled from $\bbP_{1} (\cdot | S_{d - 1, K}, A_{d - 1, K})$.
\end{dfn}

Definition~\ref{def:periodic.mdp} allows all components $\cS_k, \cA_k, \bbP_k, r_k, \gamma_k$ of an MDP to vary across $K$ decision times.
The discount factors capture time-varying preferences.
Notice that each $\gamma_k$ is unconstrained as long as $\bar{\gamma} \in [0, 1)$, which guarantees long-run discounting \citep{pitis2019rethinking}.
A $K$-Periodic MDP is stationary on the bag level with an infinite horizon, but contains a non-stationary MDP of horizon $K$ within a bag.
When all of $\cS_k, \cA_k, \bbP_k, r_k, \gamma_k$ do not depend on $k$, a $K$-Periodic MDP degenerates to a stationary MDP.

Let $\Pi$ be the class of all possibly history-dependent, stochastic policies.
Let $\widetilde{\Pi}$ be the class of all Markov, stationary, and stochastic policies $\widetilde{\bpi}$, where $\widetilde{\bpi} = \{ \widetilde{\pi}_{1:K} \}$ and $\widetilde{\pi}_{k}: \cS_k \mapsto \Delta (\cA_k)$.
In Appendix~\ref{sec:additional.theory}, we start by defining the value functions for a general policy $\bpi \in \Pi$, but will show that it is sufficient to restrict to $\widetilde{\Pi}$.
Write $\bbP^{\widetilde{\bpi}}$ and $\bbE^{\widetilde{\bpi}}$ as the probability and expectation under a policy $\widetilde{\bpi}$.

The Q-function of a policy $\widetilde{\bpi} \in \widetilde{\Pi}$ at time $k \in \{1:K\}$ in bag $d \ge 1$ for all $s_{d, k} \in \cS_{k}$, $a_{d, k} \in \cA_{k}$ is defined as 
\begin{equation} \label{equ:definition.Q}
    \begin{split}
        \cQ_k^{\widetilde{\bpi}} (s_{d, k}, a_{d, k}) := 
        \bbE^{\widetilde{\bpi}} \{ 
            & \textstyle{\sum_{(t, l): (t, l) \ge (d, k)}} \eta_{d, k}^{t, l} R_{t, l} | \\
            & S_{d, k} = s_{d, k}, A_{d, k} = a_{d, k} \},
    \end{split}
\end{equation}
where $\eta_{d, k}^{t, l} = \allowbreak \prod_{(m, n): (d, k) \le (m, n) < (t, l)} \gamma_n$.
For example, $\cQ_2^{\widetilde{\bpi}} (s_{d, 2}, a_{d, 2}) = \allowbreak \bbE^{\widetilde{\bpi}} \{ R_{d, 2} + \gamma_2 R_{d, 3} + \gamma_2 \gamma_3 R_{d, 4} + \dots | \allowbreak s_{d, 2}, a_{d, 2} \}$.
The optimal Q-function is 
$\cQ_k^{*} (s_{d, k}, a_{d, k}) = \sup_{\widetilde{\bpi} \in \Pi}\cQ_k^{\widetilde{\bpi}} (s_{d, k}, a_{d, k})$.
The value function of $\widetilde{\bpi} \in \widetilde{\Pi}$ is defined as
$\cV_k^{\widetilde{\bpi}} (s_{d, k}) := \bbE^{\widetilde{\bpi}} \{\textstyle{\sum_{(t, l): (t, l) \ge (d, k)}} \eta_{d, k}^{t, l} R_{t, l} | S_{d, k} = s_{d, k} \}$
for all $k \in \{1:K\}$ and all $s_{d, k} \in \cS_k$.
The optimal value function is $\cV_k^{*} (s_{d, k}) := \sup_{\widetilde{\bpi} \in \Pi} \cV_k^{\widetilde{\bpi}} (s_{d, k})$.
A policy $\widetilde{\bpi} \in \Pi$ is optimal if $\cV_1^{\widetilde{\bpi}} (s_{d, 1}) = \cV_1^{*} (s_{d, 1})$ for all $s_{d, 1} \in \cS_1$.

\begin{thm}[Bellman Optimality Equations] \label{thm:optimal.policy}
    We say that functions $\bm{\cQ} := \{ Q_k: \cS_k \times \cA_k \mapsto \bbR: k \in \{1:K\} \}$ satisfy the Bellman optimality equations if
    \begin{equation} \label{equ:Bellman.optimality.equations}
    \begin{split}
        & \cQ_K (s_{d, K}, a_{d, K}) 
        = \bbE \{ R_{d, K} + \gamma_K \max_{a_{d + 1, 1} \in \cA_1} \\
        & \quad \cQ_1 (S_{d + 1, 1}, a_{d + 1, 1}) |
        S_{d, K} = s_{d, K}, A_{d, K} = a_{d, K} \}, \\
        & \cQ_k (s_{d, k}, a_{d, k}) 
        = \bbE \{ R_{d, k} + \gamma_k \max_{a_{d, k + 1} \in \cA_{k + 1}} \\
        & \quad \cQ_{k + 1} (S_{d, k + 1}, a_{d, k + 1}) | 
        S_{d, k} = s_{d, k}, A_{d, k} = a_{d, k} \}
    \end{split}
    \end{equation}
    for $k \in \{1:(K - 1)\}$.
    For a $K$-periodic MDP,
    the functions $\bm{\cQ}$ are the optimal Q-functions, i.e. $Q_k = Q_k^*$ for all $k$, if and only if $\bm{\cQ}$ satisfy the Bellman optimality equations.
    In addition, 
    $\cV^*_k (s_{d, k}) = \max_{a_{d, k} \in \cA_k} \cQ_k^{*} (s_{d, k}, a_{d, k})$
    for all k.
    Define the policy $\bpi^* = \{\pi_{1:K}^*\}$, where
    $\pi_k^* (s_{d, k}) = \argmax_{a_{d, k} \in \cA_k} \cQ_k^{*} (s_{d, k}, a_{d, k})$.
    Then $\bpi^* \in \Pi$ is an optimal policy (ties are broken arbitrarily).
\end{thm}

We only need to find $K$ functions $\bm{\cQ}$ that satisfy the Bellman optimality equations.
Theorem~\ref{thm:optimal.policy} indicates that $\bm{\cQ}$ must be the Q-functions of the optimal policy.

In the following assumption, we provide the condition for a general state $U_{d, k}$ in a problem with bagged decision times to fit in the framework of a periodic MDP.

\begin{asp} \label{asp:markovian.state}
    In a problem with bagged decision times as defined in Definition~\ref{dfn:bagged}, let $\{U_{d, k}\}_{\cI}$ be the states s.t. 
    (1) $U_{d, k} \in \cU_k$ where $\cU_k$ is bag-invariant, 
    (2) the state transition is Markovian within and across bags, 
    (3) $R_{d} \indep U_{t, l}, A_{t, l} | U_{d, K}, A_{d, K}$ for any $(t, l) < (d, K)$, 
    (4) $U_{d, 1:K} \subseteq \bB_d \cup \widetilde{\bB}_{d - 1}$, $U_{d, k + 1} \indep \widetilde{\bB}_{d - 1} | U_{d, k}, A_{d, k}$ for $k \le K - 1$, and $R_d, U_{d + 1, 1} \indep \widetilde{\bB}_{d - 1} | U_{d, K}, A_{d, K}$,
    and (5) precedence: there exists a function $F_{d, k} (\cdot)$ s.t. $U_{d, k} = F_{d, k} (\cH_{d, k})$ for all $(d, k) \in \cI$.
\end{asp}

In Assumption~\ref{asp:markovian.state}, conditions (2) and (3) ensure that the state transition and reward function are Markovian. 
Conditions (1) and (4) ensure that the state transition and reward function are stationary across bags.
Condition (4) assumes that $U_{d, k}, A_{d, k}$ block the paths from $\widetilde{\bB}_{d - 1}$ to the next state $U_{d, k + 1}$ in the bag.
If condition (4) does not hold and the marginal distribution of $\widetilde{\bB}_{d}$ is non-stationary across bags, the effect of $\widetilde{\bB}_{d - 1}$ on $U_{d, k + 1}$ may cause the state transition probability $\bbP_k (U_{d, k + 1} | U_{d, k}, A_{d, k})$ to be non-stationary across bags.
Lemma~\ref{lem:markov.transition} guarantees that any state satisfying Assumption~\ref{asp:state.D.BaSS} will satisfy conditions (2)-(4) in Assumption~\ref{asp:markovian.state}.

\begin{lem} \label{lem:rl.framework}
    With the states $\{U_{d, k}\}_{\cI}$ satisfying Assumption~\ref{asp:markovian.state}, the rewards defined as $R_{d, 1:(K - 1)} = 0$, $R_{d, K} = R_d$, and the discount factors defined as $\gamma_{1:(K - 1)} = 1$, $\gamma_K = \bar{\gamma}$, the problem with bagged decision times is a $K$-periodic MDP.
\end{lem}

Based on Theorem~\ref{thm:optimal.policy} and Lemma~\ref{lem:rl.framework}, the optimal policy for a problem with bagged decision times is $\bpi^* = \{\pi_{1:K}^*\}$, where
\begin{equation} \label{equ:optimal.policy}
    \begin{split}
        \pi_K^* (& u_{d, K}) = \argmax_{a_{d, K} \in \cA_K}
        \bbE \{ R_{d} + \bar{\gamma} \max_{a_{d + 1, 1} \in \cA_{1}} \cQ^*_1 \\
        & (U_{d + 1, 1}, a_{d + 1, 1}) |
        U_{d, K} = u_{d, K}, A_{d, K} = a_{d, K} \}, \\
        \pi_k^* (& u_{d, k}) = \argmax_{a_{d, k} \in \cA_k}
        \bbE \{ 
        \max_{a_{d, k + 1} \in \cA_{k + 1}} \cQ^*_{k + 1} \\
        & (U_{d, k + 1}, a_{d, k + 1}) | U_{d, k} = u_{d, k}, A_{d, k} = a_{d, k} \}
    \end{split}
\end{equation}
for $k \in \{1:(K - 1)\}$.

\subsection{State Construction} \label{sec:compare.state}

Now we provide conditions for the best state construction.

\begin{asp} \label{asp:definition.S.U}
    Suppose the states $\{S_{d, k}\}_{\cI}$ is the minimal D-BaSS of $\{H_{d, k}\}_{\cI}$ w.r.t. $\{R_{d}\}_{d \ge 1}$ given $\{A_{d, k}\}_{\cI}$ with $S_{d, k} \in \cS_k$. 
    Besides, $\cS_k$ is bag-invariant and $S_{d, 1:K} \subseteq \bB_d \cup \widetilde{\bB}_{d - 1}$.
\end{asp}

Assumption~\ref{asp:definition.S.U} differs from Assumption~\ref{asp:state.D.BaSS} by additionally requiring the minimal D-BaSS and the stationarity of $\cS_k$.
Lemmas~\ref{lem:markov.transition} and~\ref{lem:rl.framework} guarantee that $\{S_{d, k}\}_{\cI}$ and $\{U_{d, k}\}_{\cI}$ satisfy the definition of a $K$-periodic MDP.
As discussed in Section~\ref{sec:state.construction}, a D-BaSS always exists, which further guarantees that $\{U_{d, k}\}_{\cI}$ exists.
Consider two policy classes: 
$\Pi^{\cS} := \{ \bpi^{\cS} = \{\pi^{\cS}_{1:K}\}: \pi^{\cS}_k: \cS_k \mapsto \Delta(\cA_k), \forall k \}$ and
$\Pi^{\cU} := \{ \bpi^{\cU} = \{\pi^{\cU}_{1:K}\}: \pi^{\cU}_k: \cU_k \mapsto \Delta(\cA_k), \forall k \}$.
The value function $\cV_k^{\bpi} (u_{d, k})$ refers to 
$\bbE^{\bpi} \{\textstyle{\sum_{(t, l): (t, l) \ge (d, k)}} \eta_{d, k}^{t, l} R_{t, l} | U_{d, k} = u_{d, k} \}$
for $u_{d, k} \in \cU_k$,
though the policy $\bpi$ may not directly depend on $u_{d, k}$.
Define the optimal value functions in each policy class as $\cV_k^{\cU*} (u_{d, k}) := \max_{\bpi^{\cU} \in \Pi^{\cU}} \cV_k^{\bpi^{\cU}} (u_{d, k})$ and $\cV_k^{\cS*} (u_{d, k}) := \max_{\bpi^{\cS} \in \Pi^{\cS}} \cV_k^{\bpi^{\cS}} (u_{d, k})$, respectively.
By Theorem~\ref{thm:optimal.policy}, for the class $\Pi^{\cS}$,
$\cV_k^{\cS*} (u_{d, k}) = \bbE_{S_{d, k} \sim \bbP (\cdot | U_{d, k})} \{\cQ^{\cS*} (S_{d, k}, \pi_k^{\cS*} (S_{d, k})) | U_{d, k} = u_{d, k}\}$.

\begin{thm} \label{thm:best.state}
    Suppose $\{S_{d, k}\}_{\cI}$ satisfies Assumption~\ref{asp:definition.S.U} and $\{U_{d, k}\}_{\cI}$ satisfies Assumption~\ref{asp:markovian.state} with $U_{d, k} \in \cU_k$.
    Then, we have 
    $\cV_k^{\cU*} (u_{d, k}) \le \cV_k^{\cS*} (u_{d, k})$ for all $k \in \{1:K\}$ and all $u_{d, k} \in \cU_k$.
    The inequality strictly holds if and only if 
    $P \{ \cQ_k (S_{d, k}, a) = \max_{a' \in \cA_k} \cQ_k (S_{d, k}, a') | U_{d, k} = u_{d, k} \} < 1$ for all $a \in \cA_k$.
    For any D-BaSS $\{U_{d, k}\}_{\cI}$ of $\{H_{d, k}\}_{\cI}$ w.r.t. $\{R_{d}\}_{d \ge 1}$ given $\{A_{d, k}\}_{\cI}$, we have $\cV_1^{\cU*} (u_{1, 1}) = \cV_1^{\cS*} (u_{1, 1})$ for all $u_{1, 1} \in \cU_1$.
\end{thm}

Theorem~\ref{thm:best.state} suggests that 
the value function of $\Pi^{\cU}$ is strictly less than that of $\Pi^{\cS}$ if no single action is optimal for all states $S_{d, k}$ given $u_{d, k}$, i.e., different states $S_{d, k}$ in the support of $\bbP (S_{d, k} | u_{d, k})$ may have different optimal actions.
Given $u_{d, k}$, we can only make decisions based on the conditional distribution $\bbP (S_{d, k} | u_{d, k})$.
However, if we can observe the outcome $S_{d, k}$ of $u_{d, k}$, we can choose better actions since $S_{d, k}$ explains part of the randomness on the path $U_{d, k} \to R_{d, k}$.

Although any D-BaSS of the observed history yields the maximal optimal value function, the optimal policy may be easier to learn with a minimal D-BaSS.
The regret of an RL algorithm based on a D-BaSS is generally larger than the regret based on a minimal D-BaSS if the minimal D-BaSS has a smaller dimension.
The upper and lower regret bounds for MDPs \citep{auer2008near,he2021nearly,jin2020provably} and the upper bound for periodic MDPs \citep{aniket2024online} have been shown to depend on the size of the state, suggesting that reducing the state dimension will reduce the regret.

In Figure~\ref{fig:dag}, the state $S_{d, k}$ in (\ref{equ:state.definition}) is a D-BaSS.
However, the states $S^{\prime}_{d, k}$ in (\ref{equ:state.definition.2}) and $S^{\prime \prime}_{d, k}$ in (\ref{equ:state.definition.3}) are not a D-BaSS if the causal faithfulness or stability condition holds \citep[Section 2.4]{pearl2009causality}, which requires that any conditional independency is implied by the DAG through d-separation \citep{ramsey2012adjacency}.
A side note is that $S_{d, k}$ may not be the minimal D-BaSS if there exists a function $f_{d, k}$ s.t. $f_{d, k}$ is not invertible and $f_{d, k} (S_{d, k})$ remains a D-BaSS, e.g. when there exists a discretization function $f_{d, k}$ s.t. $R_{d:\infty} \indep \cH_{d, k} | f_{d, k} (S_{d, k}), A_{d, k}$.
In the following Corollary~\ref{lem:state.construction.HS}, assume that $S_{d, k} \in \cS_k$, $S^{\prime}_{d, k} \in \cS^{\prime}_k$, and $S^{\prime \prime}_{d, k} \in \cS^{\prime \prime}_k$.

\begin{coro} \label{lem:state.construction.HS}
    For states $S_{d, k}$ and $S^{\prime}_{d, k}$ in Figure~\ref{fig:dag}, the optimal value function satisfies $\cV_k^{\cS^{\prime}*} (s^{\prime}_{d, k}) < \cV_k^{\cS*} (s^{\prime}_{d, k})$ if and only if $P \{ \cQ_k (S_{d, k}, a) = \max_{a' \in \cA_k} \cQ_k (S_{d, k}, a') | S^{\prime}_{d, k} = s^{\prime}_{d, k} \} < 1$ for all $a \in \cA_k$.
    Similarly, for states $S^{\prime}_{d, k}$ and $S^{\prime \prime}_{d, k}$, $\cV_k^{\cS^{\prime \prime}*} (s^{\prime \prime}_{d, k}) < \cV_k^{\cS^{\prime}*} (s^{\prime \prime}_{d, k})$ if and only if $P \{ \cQ_k (S^{\prime}_{d, k}, a) = \max_{a' \in \cA_k} \cQ_k (S^{\prime}_{d, k}, a') | S^{\prime \prime}_{d, k} = s^{\prime \prime}_{d, k} \} < 1$ for all $a \in \cA_k$.
\end{coro}

Corollary~\ref{lem:state.construction.HS} suggests that it is helpful to identify and leverage the mediators in RL, like $M_{d, k}$ and $N_{d, k}$ in Figure~\ref{fig:dag}.
For example, in HeartSteps, the post 30-minute step count is the mediator between activity suggestions and commitment to being active, and user engagement is the mediator between social media marketing activities and self-brand connection in marketing \citep{ibrahim2024role}. 
Using observed mediators for state construction may improve the value function of the optimal policy.

\subsection{RL Algorithm} \label{sec:intervention.algorithm}

We adapt our algorithm from Randomized Least-Squares Value Iteration (RLSVI) \citep{osband2016generalization}, a Bayesian algorithm for finite-horizon MDPs.
RLSVI updates the posterior distribution of the parameters of the optimal Q-functions based on value iteration, which is derived from the Bellman optimality equation.
It then selects the greedy action with respect to a sample drawn from the posterior distribution.
This approach is model-free and thus robust to misspecified assumptions in the DAG.
For bagged decision times, we update the posterior distribution based on (\ref{equ:optimal.policy}).

\begin{algorithm}[tb]
    \caption{Bagged RLSVI (BRLSVI)}
    \label{alg:brlsvi}
    \textbf{Input}: Hyperparameters $L, \lambda_{d}, \sigma^2$.
    
    \begin{algorithmic}[1] 
    \STATE Warm-up: Randomly take actions $A_{d, k} \sim \text{Bernoulli} (0.5)$ in bag $d \in \{1:L\}$ for $k \in \{1:K\}$.
    \FOR{$d \ge L + 1$}
        \FOR{$k = K, \dots, 1$}
            \STATE Construct $\bX_{1:(d - 1), k}$, $Y_{1:(d - 1), k}$ with $\widetilde{\bbeta}_{d - 1, 1}$ when $k = K$ and with   $\widetilde{\bbeta}_{d, k + 1}$ when $k < K$ using (\ref{equ:BRLSVI.variables}). Obtain $\bmu_{d, k}, \Sigma_{d, k}$ using (\ref{equ:update.beta.normal}). 
            \STATE Draw $\widetilde{\bbeta}_{d, k} \sim N(\bmu_{d, k}, \Sigma_{d, k})$. 
        \ENDFOR
        \FOR{$k = 1, \dots, K$}
            \STATE Observe $H_{d, k}$ and construct $S_{d, k}$. 
            \STATE Take $A_{d, k} = \argmax_{a \in \cA_k} \phi_k (S_{d, k}, a)^T \widetilde{\bbeta}_{d, k}$.
        \ENDFOR
    \ENDFOR
    \end{algorithmic}
\end{algorithm}

We model the Q-functions as a linear function
\begin{equation} \label{equ:q.function}
    \cQ_k (s_{d, k}, a_{d, k}) = \phi_k (s_{d, k}, a_{d, k})^T \bbeta_k,
\end{equation}
where $\phi_k (s_{d, k}, a_{d, k})$ is the basis function of $s_{d, k} \in \cS_k$ and $a_{d, k} \in \cA_k$, and $\bbeta_k$ is the parameter to be learned.
For $t \in \{1:(d - 1)\}$, let
\begin{equation} \label{equ:BRLSVI.variables}
\begin{split}
    \bX_{t, k} =& \phi_k (S_{t, k}, A_{t, k}), \\
    Y_{t, k} =& 
    \begin{cases}
        \max\limits_{a \in \cA_{k + 1}} \phi_{k + 1} (S_{t, k + 1}, a)^T \widetilde{\bbeta}_{d, k + 1} 
        & \text{if } k < K, \\
        R_{t} + \bar{\gamma} \max\limits_{a \in \cA_1} \phi_{1} (S_{t + 1, 1}, a)^T \widetilde{\bbeta}_{d - 1, 1}
        & \text{if } k = K,
    \end{cases}
\end{split}
\end{equation}
where $\widetilde{\bbeta}_{d, 2:K}$ and $\widetilde{\bbeta}_{d - 1, 1}$ are drawn from the posterior distribution.
Define $\Xb_{d, k} = [\bX_{1:(d - 1), k}]^T$, $\Yb_{d, k} = [Y_{1:(d - 1), k}]^T$.
We fit a Bayesian linear regression model for $\Yb_{d, k}$ using $\Xb_{d, k}$.
The posterior of $\bbeta_{d, k}$ is given by a normal distribution $N(\bmu_{d, k}, \Sigma_{d, k})$, where
\begin{equation} \label{equ:update.beta.normal}
    \begin{split}
        \bSigma_{d, k} &= \prth{\Xb_{d, k}^T \Xb_{d, k} / \sigma^2 + \lambda_{d} \Ib}^{-1}, \\
        \bmu_{d, k} &= \bSigma_{d, k} (\Xb_{d, k}^T \Yb_{d, k} / \sigma^2),
    \end{split}
\end{equation}
and $\Ib$ is the identity matrix.
The algorithm Bagged RLSVI for a problem with bagged decision times is summarized in Algorithm~\ref{alg:brlsvi}.

\section{APPLICATION} \label{sec:experiment}

In this section, we present an overview of the motivating problem concerning the redevelopment of the HeartSteps intervention
\citep{liao2020personalized,spruijt2022advancing}. 
While this overview highlights the key elements of the planned intervention, it is a simplified version. 
Additionally, we construct a testbed using real data from HeartSteps V2 and assess the performance of the proposed algorithm across different variants of the testbed, which represent potential environments in the upcoming trial.

\subsection{mHealth Intervention}

Figure~\ref{fig:dag} can be viewed as a simplified version of the expert-formed DAG describing the causal relationships in HeartSteps.
In the currently planned implementation, each user experiences the intervention for 36 weeks, equivalent to $D = 252$ days.
Each day is a bag containing $K = 5$ decision times.
All $K$ actions may impact the reward for the bag.

The action $A_{d, k} \in \cA_k = \{0, 1\}$ represents whether a notification is sent (1) or not (0). The notifications are designed to prompt the user to complete short-term tasks, such as a 10-minute walk.
The reward $R_{d} \in \bbR$ is defined as the daily commitment to being active, measured through surveys.
The context $C_{d, k}$ is the logarithm of the 30-minute step count prior to the $k$th decision time.
The proximal outcome $M_{d, k} \in \bbR$ is the logarithm of the 30-minute step count after the $k$th decision time.
The engagement $E_{d} \in \bbR$ is the square root of the weighted average of daily app page view times over the past 7 days. A higher number of views indicates better engagement with the app.
Since the action $A_{d, k}$ is designed to prompt short-term PA, it is expected to impact the daily reward $R_d$ only through the proximal outcome $M_{d, k}$.
The $K$ interventions are separated by several hours, so $M_{d, 1:K}$ are considered conditionally independent given $R_{d-1}$ and $E_{d-1}$.
App engagement $E_d$ is expected to be affected by the notifications received each day, as users tend to disengage from the app if they receive notifications too frequently. 
We expect the positive effect of actions to go through $M_{d, k}$ and the negative effect to go through $E_d$.
The mediator $N_{d, k}$ is not measured in HeartSteps V2 and thus will not be used in the following analysis.

\subsection{Implementation}

Since the mediator $N_{d, k}$ is not measured, a D-BaSS of $\{H_{d, k}\}_{\cI}$ is $S^{\prime}_{d, k} = [E_{d - 1}, \allowbreak R_{d - 1}, \allowbreak M_{d, 1:(k - 1)}, \allowbreak A_{d, 1:(k - 1)}, \allowbreak C_{d, k}]$.
Denote the expanded state vector as
$ \widetilde{S}^{\prime}_{d, k} := [E_{d - 1}, R_{d - 1}, \widetilde{M}_{d, 1:(K - 1)}, \widetilde{A}_{d, 1:(K - 1)}, C_{d, k}, k]$, 
where $\widetilde{M}_{d, j} = \bbone (j < k) M_{d, j}$ and $\widetilde{A}_{d, j} = \bbone (j < k) A_{d, j}$.
Then we define the basis function $\phi_k (S^{\prime}_{d, k}, A_{d, k})$ as
\begin{equation*}
    \begin{split}
    \phi (\widetilde{S}^{\prime}_{d, k}, & A_{d, k})^T
    := [
    1, k, 
    E_{d - 1}, k E_{d - 1}, 
    R_{d - 1}, k R_{d - 1}] \\
    & \frown [\widetilde{M}_{d, 1:(K - 1)}, \widetilde{A}_{d, 1:(K - 1)}, C_{d, k}] \\
    & \frown \bbone (k = 1) A_{d, k} [1, E_{d - 1}, R_{d - 1}, C_{d, k}]
    \frown \dots \\
    & \frown \bbone (k = K) A_{d, k} [1, E_{d - 1}, R_{d - 1}, C_{d, k}],
    \end{split}
\end{equation*}
where $\frown$ represents vector concatenation and the basis function $\phi$ does not depend on the time index.
Now the Q-function becomes $\cQ_k (s^{\prime}_{d, k}, a_{d, k}) = \phi (\widetilde{s}^{\prime}_{d, k}, a_{d, k})^T \bbeta$,
where the parameter $\bbeta$ is the same for all $k$.
Data across different decision times and bags are pooled to estimate treatment-free effects, but interaction effects are allowed to remain time-varying. 
Compared to modeling $\cQ_k$ separately for each $k$, this approach reduces the total number of parameters from 60 to 35.
More importantly, the effective sample size for estimating the main effect is increased by $K$ times, while time-varying policies are maintained.
The effects of unobserved $A_{d, j}$ and $M_{d, j}$ for future decision times $j \ge k$ are integrated into the variables $k, k E_{d-1}, k R_{d-1}$.
Based on the DAG, there does not exist a direct causal effect between the mediators $M_{d, 1:K}$ in a bag.
The basis function thus assumes that the impact of $M_{d, k}$ on $R_d$ and the impact of $A_{d, k}$ on $E_d$ are approximately the same for each $k$, leading to the terms $k, k E_{d - 1}, k R_{d - 1}$.

\subsection{Simulation Testbed} \label{sec:testbed}

HeartSteps V2 contains 42 users after data cleaning.
In the following numerical experiments, all variables are standardized.
Due to user heterogeneity, each user represents a different environment. 
Details regarding the testbed are provided in Appendices~\ref{sec:data}-\ref{sec:construct.testbed.variants}.

\paragraph{Variants of Testbed}
To construct different variants of the testbed, we follow guidelines from behavioral science that define small, medium, and large standardized treatment effects (STE) \citep{cohen1988statistical}.
The STE reflects the difficulty of learning the optimal policy from noisy data.
Based on these guidelines, we create the following four variants to study how the algorithms perform under different positive and negative effects of actions.
\begin{enumerate}[label={(\alph*)}]
    \item Vanilla testbed. Medium STE.
    \item Enhance the positive effects by increasing the effect of $A_{d, k} \to M_{d, k}$ for all $k$. Large STE.
    \item Enhance the negative effects by increasing the effect of $E_d \to R_d$ and decreasing the effect of $A_{d, k} \to E_d$ for all $k$. Small STE.
    \item Enhance the positive and negative effects simultaneously by combining the changes in Variants (\subref{subfig:v2}) and (\subref{subfig:v3}). Medium STE.
\end{enumerate}

To examine the impact of misspecified assumptions in Figure~\ref{fig:dag}, we construct three variants of the testbed that violate the assumptions in the DAG.
\begin{enumerate}[label={(\alph*)}]
    \setcounter{enumi}{4}
    \item The arrow $R_{d-1} \to E_d$ exists. Medium STE.
    \item The arrows $A_{d, 1:K} \to R_d$ exist, representing the effect of another unobserved mediator. Medium STE.
    \item The arrows $R_{d-1}, E_{d-1} \to C_{d, 1:K}$ exist, so that the contexts are not exogenous. Medium STE.
\end{enumerate}

\subsection{Simulation Results} \label{sec:simulation.results}

\begin{figure}[t]
    \centering
    \begin{subfigure}{0.23\textwidth}
        \centering
        \includegraphics[width=\textwidth]{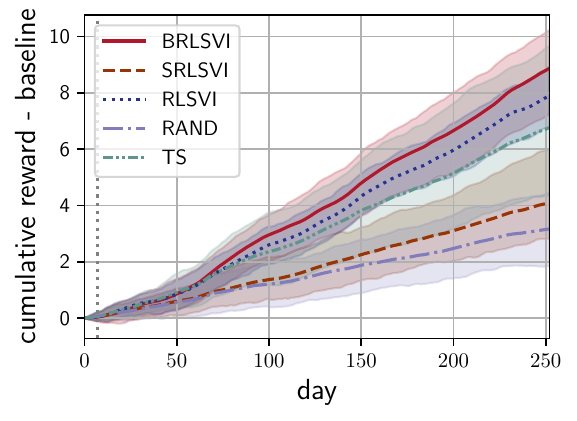}
        \caption{Vanilla testbed.}
        \label{subfig:v1}
    \end{subfigure}
    \hfill
    \begin{subfigure}{0.23\textwidth}
        \centering
        \includegraphics[width=\textwidth]{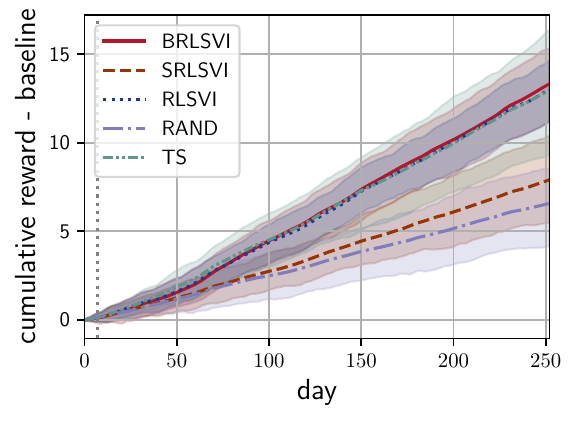}
        \caption{Enhance $M \to R$.}
        \label{subfig:v2}
    \end{subfigure}
    \hfill
    \begin{subfigure}{0.23\textwidth}
        \centering
        \includegraphics[width=\textwidth]{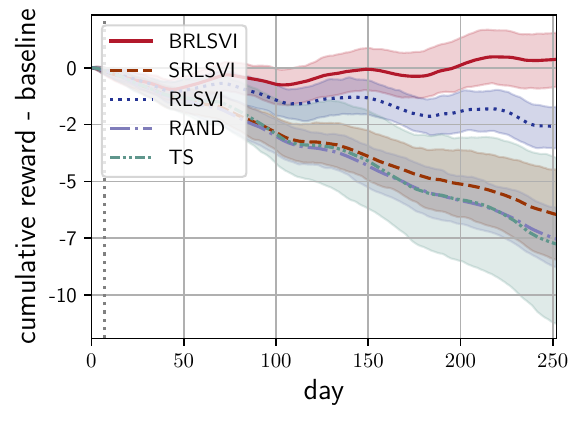}
        \caption{Enhance $A \to E \to R$.}
        \label{subfig:v3}
    \end{subfigure}
    \hfill
    \begin{subfigure}{0.23\textwidth}
        \centering
        \includegraphics[width=\textwidth]{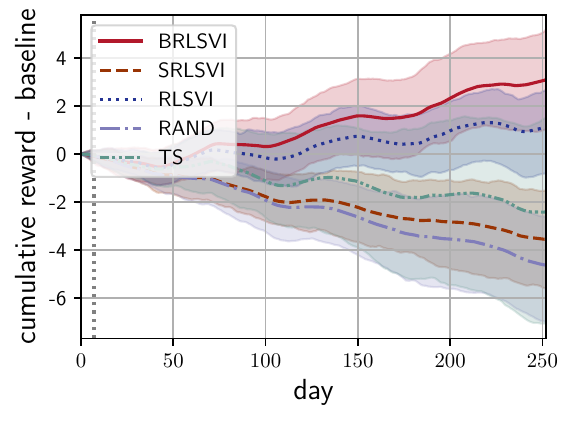}
        \caption{Combine (\subref{subfig:v2}) and (\subref{subfig:v3}).}
        \label{subfig:v4}
    \end{subfigure}
    \hfill
    \begin{subfigure}{0.23\textwidth}
        \centering
        \includegraphics[width=\textwidth]{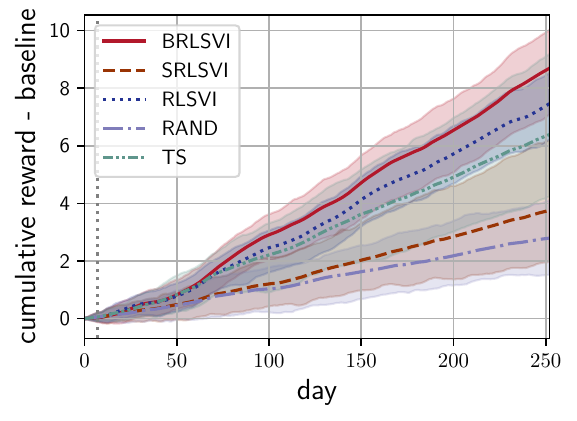}
        \caption{$R_{d-1} \to E_d$ exists.}
        \label{subfig:v5}
    \end{subfigure}
    \hfill
    \begin{subfigure}{0.23\textwidth}
        \centering
        \includegraphics[width=\textwidth]{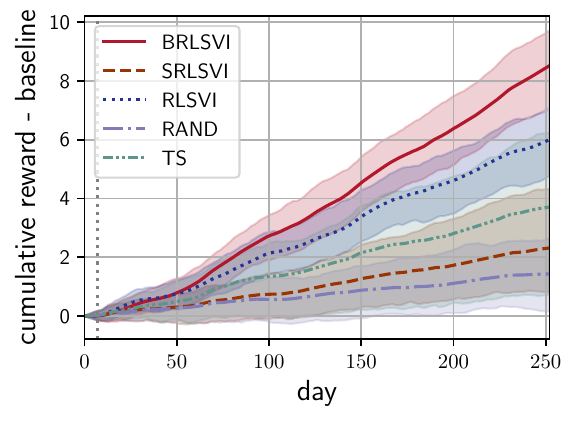}
        \caption{Misspecified mediator.}
        \label{subfig:v6}
    \end{subfigure}
    \hfill
    \begin{subfigure}{0.23\textwidth}
        \centering
        \includegraphics[width=\textwidth]{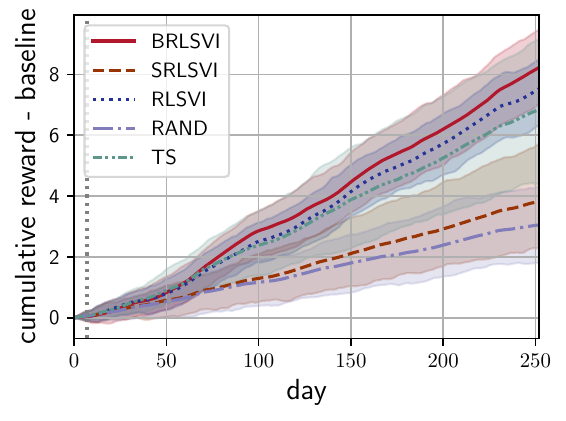}
        \caption{Contexts not exogenous.}
        \label{subfig:v7}
    \end{subfigure}
    \hfill
    \begin{subfigure}{0.23\textwidth}
        \centering
        \caption*{}
        \label{subfig:v8}
    \end{subfigure}
    \caption{
    The average cumulative rewards of BRLSVI, SRLSVI, RLSVI, and RAND, each subtracting the average cumulative rewards of the zero policy.
    }
    \label{fig:simulation.variants}
\end{figure}

The proposed algorithm BRLSVI will be compared against the baseline methods.
Without the periodic MDP framework, one approach is to treat the $K$ actions within a bag as a $K$-dimensional action and apply algorithms designed for a stationary MDP at the bag level.
The bag action $[A_{d, 1:K}]$ has $2^K$ different values when $\cA_k = \{0, 1\}$.
Alternatively, each bag can be treated as a separate episode, allowing the use of episodic RL algorithms.
Therefore, the four baseline methods are:
(1) Stationary RLSVI (SRLSVI) \citep[Algorithm 7]{osband2016generalization}, which treats each bag as a decision point in the standard MDP and chooses a vector of $K$ actions at the beginning of the bag. 
(2) RLSVI \citep[Algorithm 1]{osband2016generalization} with a finite horizon $K$.
(3) Random policy (RAND), which takes action 1 with a 0.5 probability.
(4) Thompson sampling (TS), which assumes a stationary bandit setting and maximizes the proximal outcome $M_{d, k}$ at each time.
The details of these methods are presented in Appendix~\ref{sec:baseline}.

Each replication contains 42 different testbed environments, constructed from the 42 users in HeartSteps V2.
The experiment is repeated 100 times for each method.
We report the cumulative reward $\sum_{d = 1}^D R_d$ averaged across the 42 users for each method, subtracting the average cumulative reward of the zero policy, where the action is always zero.
Figure~\ref{fig:simulation.variants} shows the average and 95\% confidence intervals across the 100 replications for the average cumulative reward.

The proposed method outperforms the baseline methods in all testbed variants except for Variant (\subref{subfig:v2}), particularly when the negative effect of actions is enhanced.
SRLSVI cannot account for real-time information $M_{d, 1:(k - 1)}, A_{d, 1:(k - 1)}, C_{d, k}$ at each time $(d, k)$.
Finite-horizon RLSVI ignores dependencies across bags, especially the delayed effect of the last action in a bag.
Although the upper confidence bound of TS is comparable to that of BRLSVI in Variant (\subref{subfig:v1}), TS exhibits a lower average cumulative reward and a higher variance.
In contrast, BRLSVI considers the long-term effects of actions while leveraging the real-time information at each decision time.
The performance of RLSVI, TS, and BRLSVI is comparable in Variant (\subref{subfig:v2}) since the positive effect is much stronger than the negative effect, and the optimal policy is close to always treating. 
RLSVI and TS can still identify the optimal policy even when the delayed effects are ignored.
As shown in Variants (\subref{subfig:v5})-(\subref{subfig:v7}), the proposed BRLSVI is robust against a misspecified DAG, since the intervention policy is constructed using a model-free algorithm and the state vector $\widetilde{S}^{\prime}_{d, k}$ remains the same for these variants (see Appendix~\ref{sec:misspecified.assumption} for details).
In Appendix~\ref{sec:additional.simulation}, we demonstrate that $S^{\prime}_{d, k}$ yields a higher cumulative reward than $S^{\prime \prime}_{d, k}$ on all testbed variants.

\section{DISCUSSION} \label{sec.discussion}

In this work, we design an RL algorithm for bagged decision times based on a causal DAG.
The causal DAG encodes domain knowledge to construct states and speed up learning, but we also demonstrate that the proposed algorithm remains robust to misspecified assumptions in the DAG.
Although we focus on the case where a single reward is observed per bag, the method can be readily extended to problems with multiple rewards and non-Markovian, non-stationary transition dynamics within a bag.

As per our agreements with the health system from which the patient data was collected, we are unable to release the original dataset. 
However, we have published the testbed with our code here: \href{https://github.com/StatisticalReinforcementLearningLab/Bagged-Decision-Times}{github.com/StatisticalReinforcementLearningLab/\\Bagged-Decision-Times}. 
The testbed is derived from the original dataset with small perturbation to maintain confidentiality.

Definition~\ref{dfn:bagged} assumes that the length of each bag is fixed at $K$.
In our motivating problem described in Section~\ref{sec:experiment}, this assumption naturally holds, as one day can be viewed as a bag with $K = 5$ decision times.
Nonetheless, the framework can be extended to problems with a Markovian bag length using the trick of absorbing states, provided that the bag length is bounded by some constant $\bar{K} \in \mathbb{N}^+$.
To define absorbing states, we need to assume that the same set of variables is defined across all decision times.
If variable definitions differ across time, extra variables can be manually defined for padding.
The reward $R_{d, k}$ takes values from $\mathbb{R} \cup [\perp]$, where $\perp$ denotes a missing value.
If $R_{d, k} = \perp$, the process continues as if it were in the middle of a bag.
If $R_{d, k} \in \mathbb{R}$ is observed, it becomes an absorbing state, with $R_{d, k} = R_{d, k+1} = \dots = R_{d, \bar{K}}$.
Similarly, the context and mediator also become absorbing states, and any actions $A_{d, (k+1):\bar{K}}$ have no effect on the reward. 
The objective remains to optimize the final reward $R_{d, \bar{K}}$ at the end of the bag, while intermediate rewards $R_{d, 1:(\bar{K} - 1)}$ may only be used as state variables in the RL algorithm.
Under this formulation, all bags are defined with a length of $\bar{K}$, while the actual bag length $K_d$ is a random variable and is Markovian across bags.

In the proposed algorithm, actions are selected by maximizing the Q-function with respect to a parameter sampled from the posterior distribution.
In practice, obtaining the intervention assignment probability is crucial for post-study analysis \citep{zhang2022statistical}.
\citet{xu2024adaptation} proposed a method to estimate the assignment probability for finite-horizon RLSVI.
How to estimate this probability for BRLSVI remains an open question.

\subsubsection*{Acknowledgments}
This research was funded by NIH grants P50DA054039, P41EB028242, R01HL125440-06A1, UH3DE028723, and P30AG073107-03 GY3 Pilots.
Susan Murphy holds concurrent appointments at Harvard University and as an Amazon Scholar. This paper describes work performed at Harvard University and is not associated with Amazon.

\bibliography{bibfile}

\section*{Checklist}



 \begin{enumerate}

 \item For all models and algorithms presented, check if you include:
 \begin{enumerate}
   \item A clear description of the mathematical setting, assumptions, algorithm, and/or model. Yes, see Sections~\ref{sec:settings} and~\ref{sec:methodology}.
   \item An analysis of the properties and complexity (time, space, sample size) of any algorithm. Yes, see Appendix~\ref{sec:algorithm.detail}.
   \item (Optional) Anonymized source code, with specification of all dependencies, including external libraries. Yes, see Section~\ref{sec.discussion} and Appendix~\ref{sec:algorithm.detail}.
 \end{enumerate}

 \item For any theoretical claim, check if you include:
 \begin{enumerate}
   \item Statements of the full set of assumptions of all theoretical results. Yes, see Section~\ref{sec:methodology}.
   \item Complete proofs of all theoretical results. Yes, see Appendix~\ref{sec:additional.theory}.
   \item Clear explanations of any assumptions. Yes, see Section~\ref{sec:methodology}.
 \end{enumerate}

 \item For all figures and tables that present empirical results, check if you include:
 \begin{enumerate}
   \item The code, data, and instructions needed to reproduce the main experimental results (either in the supplemental material or as a URL). Yes, see Section~\ref{sec.discussion}.
   \item All the training details (e.g., data splits, hyperparameters, how they were chosen). Yes, see Appendix~\ref{sec:algorithm.detail}.
         \item A clear definition of the specific measure or statistics and error bars (e.g., with respect to the random seed after running experiments multiple times). Yes, see Section~\ref{sec:simulation.results}.
         \item A description of the computing infrastructure used. (e.g., type of GPUs, internal cluster, or cloud provider). Yes, see Appendix~\ref{sec:algorithm.detail}.
 \end{enumerate}

 \item If you are using existing assets (e.g., code, data, models) or curating/releasing new assets, check if you include:
 \begin{enumerate}
   \item Citations of the creator If your work uses existing assets. Yes, see Section~\ref{sec:experiment}.
   \item The license information of the assets, if applicable. Not Applicable.
   \item New assets either in the supplemental material or as a URL, if applicable. Not Applicable.
   \item Information about consent from data providers/curators. Not Applicable.
   \item Discussion of sensible content if applicable, e.g., personally identifiable information or offensive content. Not Applicable.
 \end{enumerate}

 \item If you used crowdsourcing or conducted research with human subjects, check if you include:
 \begin{enumerate}
   \item The full text of instructions given to participants and screenshots. Not Applicable.
   \item Descriptions of potential participant risks, with links to Institutional Review Board (IRB) approvals if applicable. Not Applicable.
   \item The estimated hourly wage paid to participants and the total amount spent on participant compensation. Not Applicable.
 \end{enumerate}

 \end{enumerate}

\newpage
\onecolumn
\aistatstitle{Harnessing Causality in Reinforcement Learning With Bagged Decision Times: 
Supplementary Materials}

\appendix

\tableofcontents

\section{TABLE OF NOTATIONS}

In this section, we summarize the definition of notations.

\begin{table}[!htbp]
    \caption{Table of notations.}
    \label{tab:notations}
    \centering
    \begin{tabular}{cp{15cm}}
        \toprule
        Notation    & Definiton \\
        \midrule
        $d$         & Index of a bag \\
        $k$         & Index of a decision time in a bag \\
        $\cI$       & Index set $\cI := \{(d, k): d \ge 1, k \in \{1:K\}\}$ \\
        $V_{d, k}$  & Represent any variable at time $(d, k)$ \\
        $A_{d, k}$  & The action at time $(d, k)$ \\
        $C_{d, k}$  & The context at time $(d, k)$ \\
        $M_{d, k}$  & A mediator of $R_d$, usually representing the proximal positive effect of actions \\
        $N_{d, k}$  & A mediator of $E_d$, usually representing the proximal negative effect of actions \\
        $E_d$       & A mediator of $R_d$, usually representing the delayed negative effect of actions \\
        $R_d$       & The reward at the end of bag $d$ \\
        $\bB_d$     & The vector of all variables in bag $d$ \\
        $\widetilde{\bB}_d$ & A subset of bag $d$ s.t. $\widetilde{\bB}_d$ d-separates $\bB_{d + 1}$ and $\bB_{0:d}$, but any proper subset of $\widetilde{\bB}_d$ does not d-separate $\bB_{d + 1}$ and $\bB_{0:d}$. \\
        $H_{d, k}$  & The observed data since the previous action ($A_{d, k - 1}$ if $k \ge 2$ or $A_{d - 1, K}$ if $k = 1$) and before the current action $A_{d, k}$ \\
        $\cH_{d, k}$& All the observed history before $A_{d, k}$, i.e. $\cH_{d, k} := \{ H_{1, 1}, \dots, H_{d, k} \}$ \\
        $\nu$       & The distribution of the first d-separator $\widetilde{\bB}_0$ \\
        $r_k$       & The reward function at time $k$ \\
        $\gamma_k$  & The discount factor for the rewards after time $k$ \\
        $\bar{\gamma}$  & The discount factor for the reward of each bag \\
        $\bbP_k$    & The transition function at time $k$ \\
        $\cA_K$     & The action space at time $k$ in a bag \\
        $\cS_k$     & The state space at time $k$ in a bag \\
        $\Delta (\cB)$  & The space of probability distributions over a set $\cB$ \\
        $\cM$       & A K-periodic MDP \\
        $\bH$       & A stochastic process representing the observed history \\
        $\bR$       & A stochastic process representing the rewards \\
        $\bS$       & A stochastic process representing a D-BaSS \\
        $\bZ$       & A stochastic process representing a minimal D-BaSS \\
        $\bA$       & A stochastic process representing the actions \\
        $\Pi$       & The class of all possibly history-dependent, stochastic policies \\
        $\widetilde{\Pi}$   & the class of all Markov, stationary, and stochastic policies \\
        $\cQ_k^{\bpi}$  & The Q-function of a policy $\bpi$ at time $k$ \\
        $\cV_k^{\bpi}$  & The value function of a policy $\bpi$ at time $k$ \\
        $\cV_k^{\cS *}$ & The optimal value function of based on the sequence of states $\cS$ at time $k$ \\
        \bottomrule
    \end{tabular}
\end{table}

\section{ADDITIONAL THEORETICAL RESULTS IN SECTION~\ref{sec:methodology}} \label{sec:additional.theory}

This section contains additional theoretical results and proofs for the results in Section~\ref{sec:methodology}.

Let $\Pi$ be the class of all history-dependent, stochastic (HS) policies $\bpi$, where $\bpi = \{ \pi_{d, 1:K} \}_{d = 1}^{\infty}$ and $\pi_{d, k}: (\cS_1 \times \dots \times \cS_K)^{d-1} \times \cS_1 \times \dots \times \cS_k \mapsto \Delta (\cA_k)$.
In this section, we will define the value functions for a general policy $\bpi \in \Pi$, and show that it is enough to restrict on the class of Markov, stationary and deterministic (MSD) policies $\widetilde{\Pi}$.
We will then prove that the definitions of value functions and Q-functions for $\widetilde{\bpi} \in \widetilde{\Pi}$ here is equivalent to the definitions in Section~\ref{sec:periodic.mdp}.

To show that it is enough to restrict on the MSD policy class for a stationary MDP, 
\citet{bertsekas2007dynamic} proved that for any HS policy, there exists an MSD policy s.t. they generate the same state-action visitation probability.
\citet{agarwal2022reinforcement} instead proved that there exists an optimal policy that is MSD.
However, both the proofs rely on the stationary structure s.t. there is only one value function.
Therefore, we need to prove that the results in a stationary MDP holds analogously for a $K$-bagged MDP when there are $K$ time-dependent value functions.

In the following analysis, we will follow the proof idea in \citet{agarwal2022reinforcement}.
Another possible proof method is to represent the $K$-bagged MDP as a stationary MDP by enhancing the state space to include time $k$ as an additional variable.
The transition function, and reward function will be redefined as a combination of the original time-dependent function and the time index.
However, we will need extra indicators to represent state availability and action availability at different times.
Besides, all rewards need to be rescaled to incorporate the time-dependent discounting factor.
Therefore, we will still go through the general proof to show the existence of an MSD policy and prove the Bellman optimality equations.
In the proof, we will also be able to demonstrate the connection between the $K$ value functions.

The value function of a policy $\bpi \in \Pi$ at time $k$ in bag 1 is defined as 
\begin{equation*}
    \begin{split}
        \cV_k^{\bpi} (s_{1, 1:k}) := \bbE^{\bpi} \brce{ 
            \sum_{l=k}^K \eta_{1, k}^{1, l} R_{1, l} + \sum_{t=2}^{\infty} \sum_{l=1}^K \eta_{1, k}^{t, l} R_{t, l} \middle| S_{1, 1:k} = s_{1, 1:k} }
    \end{split}
\end{equation*}
for all $k \in \{1:K\}$ and all $s_{1, 1:k} \in \cS_1 \times \dots \times \cS_k$.
The optimal value function for bag $1$ is $\cV_k^{*} (s_{1, 1:k}) := \sup_{\bpi \in \Pi} \cV_k^{\bpi} (s_{1, 1:k})$.
A policy $\bpi \in \Pi$ is said to be optimal if $\cV_1^{\bpi} (s_{1, 1}) = \cV_1^{*} (s_{1, 1})$ for all $s_{1, 1} \in \cS_1$.
Similarly, we define the action-value function or Q-function for bag $1$ as
\begin{equation*}
    \begin{split}
        \cQ_k^{\bpi} & (s_{1, 1:k}, a_{1, k}) := 
        \bbE^{\bpi} \brce{ 
            \sum_{l=k}^K \eta_{1, k}^{1, l} R_{1, l} + \sum_{t=2}^{\infty} \sum_{l=1}^K \eta_{1, k}^{t, l} R_{t, l} \middle| S_{1, 1:k} = s_{1, 1:k}, A_{1, k} = a_{1, k} }.
    \end{split}
\end{equation*}
Notice that $\cQ_k^{\bpi}$ does not depend on $a_{1, 1:(k - 1)}$ due to the Markov property.
The dependence on $s_{1, 1:(k - 1)}$ is through the policy $\bpi$.
The optimal Q-function is 
$\cQ_k^{*} (s_{1, 1:k}, a_{1, k}) = \sup_{\bpi \in \Pi}\cQ_k^{\bpi} (s_{1, 1:k}, a_{1, k})$
for all $k \in \{1:K\}$, all $s_{1, 1:k} \in \cS_1 \times \dots \times \cS_k$ and $a_{1, k} \in \cA_k$.

\begin{lem} \label{lem:stationary.policy}
    For a $K$-bagged MDP, there exists a Markov and stationary policy $\bar{\bpi} = \{\bar{\pi}_{1:K}\}$ for each bag, where $\bar{\pi}_k: \cS_k \mapsto \cA_k$ is a deterministic policy at decision time $k$, s.t. $\bar{\bpi}$ is optimal, i.e. $\cV_1^{\bar{\bpi}} (s_{1, 1}) = \cV_1^{*} (s_{1, 1})$.
\end{lem}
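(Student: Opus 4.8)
The plan is to follow the fixed-point argument of \citet{agarwal2022reinforcement}, adapted to the periodic structure by treating the whole tuple of value functions $(\cV_1, \dots, \cV_K)$ simultaneously rather than a single one. The key object is a Bellman optimality operator $\mathcal{T}$ acting on $K$-tuples of bounded functions $(V_1, \dots, V_K)$ with $V_k : \cS_k \mapsto \mathbb{R}$, whose components encode the cyclic dependence among the $K$ value functions: for $k < K$,
\[
(\mathcal{T} V)_k(s) = \sup_{a \in \cA_k} \Big\{ \bbE\brce{ \text{reward at } k \mid s, a} + \gamma_k\, \bbE\brce{ V_{k+1}(S_{1, k+1}) \mid s, a} \Big\},
\]
while for $k = K$ the backup wraps around to the next bag, replacing $\gamma_K V_{K+1}$ by $\bar{\gamma}\,\gamma_K V_1$ evaluated at the first state of the next bag. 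I would first derive, directly from the definition of $\cV_k^{\bpi}$ and the tower property, that the optimal value functions $\cV_k^{*}$ satisfy $\mathcal{T}(\cV_1^{*}, \dots, \cV_K^{*}) = (\cV_1^{*}, \dots, \cV_K^{*})$; this is the step that makes the connection among the $K$ value functions explicit, since the $\eta_{1,k}^{t,l}$ coefficients factor through exactly one within-bag step ($\gamma_k$) or one bag boundary ($\bar{\gamma}\gamma_K$).

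The heart of the argument is a contraction estimate. A single application of $\mathcal{T}$ need not contract, because some within-bag discount $\gamma_k$ may equal $1$; however, going once around the cycle contracts in $\|\cdot\|_\infty$ with modulus equal to the product $\bar{\gamma} \prod_{k=1}^K \gamma_k$ of all discounts in a bag, which is strictly less than $1$. I would therefore show that $\mathcal{T}^K$ is a contraction on the complete space of bounded $K$-tuples and invoke the Banach fixed-point theorem together with the standard fact that an operator whose $K$-th power is a contraction has a unique fixed point. This yields a unique bounded tuple $(V_1^{*}, \dots, V_K^{*})$ fixed by $\mathcal{T}$, and the previous step identifies $V_k^{*} = \cV_k^{*}$.

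Next I would construct the candidate MSD policy by taking, in each bag, a greedy selection $\bar{\pi}_k(s) \in \mathop{\mathrm{arg\,max}}_{a \in \cA_k}\{\cdots\}$ of the $k$-th backup evaluated at $V^{*}$; under a finite action space (or compactness of $\cA_k$ with continuity of the backup) this maximizer exists and can be chosen measurably, giving a deterministic, Markov policy that is the same in every bag, hence stationary across bags. Because $\bar{\bpi}$ attains the suprema defining $\mathcal{T}$, its policy-specific operator $\mathcal{T}^{\bar{\bpi}}$ fixes $V^{*}$ as well, and $\mathcal{T}^{\bar{\bpi}}$ is itself a cyclic contraction with the same modulus, so its unique fixed point $\cV_k^{\bar{\bpi}}$ must equal $V_k^{*}$; in particular $\cV_1^{\bar{\bpi}}(s_{1,1}) = V_1^{*}(s_{1,1})$.

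Finally, to conclude optimality over the full class $\Pi$ I would show $\cV_k^{\bpi} \le V_k^{*}$ for every history-dependent stochastic $\bpi$. Truncating the horizon at $N$ bags, I would unroll $\cV_k^{\bpi}$ one decision time at a time and bound each conditional expectation by the corresponding supremum in $\mathcal{T}$; crucially, the resulting upper bound $V_k^{*}$ does not depend on the history, which is exactly what neutralizes the extra generality of history-dependent policies, while the discounted tail beyond $N$ bags vanishes as $N \to \infty$ by boundedness of the rewards and $\bar{\gamma}\prod_k \gamma_k < 1$. Combining $\cV_k^{\bpi} \le V_k^{*} = \cV_k^{\bar{\bpi}}$ with $\bar{\bpi} \in \Pi$ gives $\cV_1^{*} = \cV_1^{\bar{\bpi}}$, establishing the claim. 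I expect the main obstacle to be the contraction step under periodic discounting: unlike the stationary case, no single-step modulus controls all $K$ components, so one must argue at the level of the full cycle via $\mathcal{T}^K$ and carefully track how the $\eta_{1,k}^{t,l}$ coefficients decompose into one $\gamma_k$ per within-bag step and one $\bar{\gamma}$ per bag boundary; the secondary subtlety is the history-dependence in the domination step, handled by the truncation argument above.
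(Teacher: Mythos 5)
Your proposal is correct in substance but follows a genuinely different route from the paper. The paper (following \citet{agarwal2022reinforcement}) never invokes a contraction: it first proves an upper-bound lemma (Lemma~\ref{lem:upper.bound}) showing $\cV_k^{*} \le \max_{a}\bbE\{R_{1,k} + \gamma_k \cV_{k+1}^{*} \mid \cdot\}$ by conditioning on the first action and recognizing the continuation as the value of another policy in $\Pi$; it then defines a greedy policy $\bpi^{\prime}$ with respect to these (a priori history-dependent) optimal value functions and verifies by recursive unrolling that $\bpi^{\prime}$ attains the bound (Lemma~\ref{lem:optimal.nonmarkov.policy}); finally it uses the Markov property of the transition to show the greedy maximizer depends only on $s_{1,k}$, collapsing $\bpi^{\prime}$ to the Markov stationary deterministic $\bar{\bpi}$. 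Your argument instead builds the cyclic Bellman operator $\mathcal{T}$ on $K$-tuples, shows $\mathcal{T}^K$ contracts with the per-bag discount product, and gets the optimal tuple as the unique Banach fixed point, with the greedy policy attaining it and a truncation argument dominating all history-dependent policies. Your route buys more: uniqueness of the Bellman solution (which the paper must prove separately in the sufficiency direction of Theorem~\ref{thm:optimal.policy}) and convergence of value iteration (which the paper later imports by citation in the proof of Theorem~\ref{thm:best.state}) come for free. The paper's route is more elementary — no completeness or fixed-point machinery, and only the boundedness needed for the infinite sums to converge — and it makes explicit exactly where the Markov property of the transition kernel is used to eliminate within-bag history dependence.

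Two caveats you should fix. First, you double-count the discount at the bag boundary: in the paper's setup $\bar{\gamma} = \prod_{k=1}^{K}\gamma_k$ (concretely $\gamma_{1:(K-1)} = 1$, $\gamma_K = \bar{\gamma}$), so the $k=K$ backup is $R_{1,K} + \gamma_K \cV_1^{*}(S_{2,1})$ with no additional factor of $\bar{\gamma}$; the correct cyclic modulus is $\prod_k \gamma_k = \bar{\gamma} < 1$, not $\bar{\gamma}\prod_k\gamma_k$. This does not break the contraction, but the bookkeeping should match. Second, your opening step — asserting that $(\cV_1^{*},\dots,\cV_K^{*})$ is a fixed point of $\mathcal{T}$ — is circular as stated, because $\cV_k^{*}$ is defined as a supremum over history-dependent policies and is a priori a function of the full within-bag history $s_{1,1:k}$, whereas $\mathcal{T}$ acts on functions of $s_{1,k}$ alone; that the optimal value depends only on the current state is precisely part of what is being proved (it is the content of Lemma~\ref{lem:markov.value.function}, which the paper derives from this lemma). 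The fix is organizational: drop that step, let the fixed point $V^{*}$ be defined abstractly by Banach, prove $\cV^{\bar{\bpi}} = V^{*}$ and $\cV_k^{\bpi} \le V_k^{*}$ for all $\bpi \in \Pi$ as you do, and obtain $\cV_k^{*} = V_k^{*}$ as a corollary; alternatively, run the contraction on history-dependent function tuples and observe that $\mathcal{T}$ maps the (closed) subspace of Markov tuples into itself, so the unique fixed point lies there.
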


In the proof of Lemma~\ref{lem:stationary.policy} in Appendix~\ref{sec:proof.stationary.policy}, we first show that there exists an optimal policy which is Markovian across bags but not necessarily within a bag.
Then we demonstrate that there exists another policy which is Markovian both across and within a bag and yields the same actions as the previous policy.
The construction of the optimal policy here does not depend directly on the optimal value functions, although it can be proved that $\bar{\bpi}$ is actually equivalent to the optimal policy defined in Theorem~\ref{thm:optimal.policy}.

Lemma~\ref{lem:stationary.policy} shows that in order to find an optimal policy, it is sufficient to focus on the class $\widetilde{\Pi}$ of all Markov and stationary policies.
Lemma~\ref{lem:markov.value.function} below presents some properties of $\widetilde{\bpi} \in \widetilde{\Pi}$.

\begin{lem} \label{lem:markov.value.function}
    For a policy $\widetilde{\bpi} \in \widetilde{\Pi}$, 
    \begin{equation*}
        \cV_k^{\widetilde{\bpi}} (s_{1, 1:k}) = \bbE^{\widetilde{\bpi}} \brce{ 
        \sum_{l=k}^K \eta_{1, k}^{1, l} R_{1, l} + \sum_{t=2}^{\infty} \sum_{l=1}^K \eta_{1, k}^{t, l} R_{t, l} \middle| S_{1, k} = s_{1, k} }
    \end{equation*}
    for all $s_{1, 1:k} \in \cS_1 \times \dots, \times \cS_k$.
    Therefore, the value function of $\widetilde{\bpi}$ can be denoted as $\cV_k^{\widetilde{\bpi}} (s_{1, k})$.
    In addition, the optimal value function satisfies
    \begin{equation*}
        \cV_k^{*} (s_{1, 1:k}) = \sup_{\bpi \in \Pi} \cV_k^{\bpi} (s_{1, 1:k}) = \sup_{\widetilde{\bpi} \in \widetilde{\Pi}} \cV_k^{\widetilde{\bpi}} (s_{1, k}) = \cV_k^{*} (s_{1, k})
    \end{equation*}
    for all $s_{1, 1:k}$.
    
    Furthermore, the value function of $\widetilde{\bpi} \in \widetilde{\Pi}$ at time $k$ in bag $d \ge 1$,
    defined as
    \begin{equation*}
        \bbE^{\widetilde{\bpi}} \brce{ 
            \sum_{l=k}^K \eta_{d, k}^{d, l} R_{d, l} + \sum_{t=d + 1}^{\infty} \sum_{l=1}^K \eta_{d, k}^{t, l} R_{t, l} \middle| S_{1, 1} = s_{1, 1}, \dots, S_{d, k} = s_{d, k} },
    \end{equation*}
    is equal to $\cV_k^{\widetilde{\bpi}} (s_{d, k})$
    for all $s_{1, 1}, \dots, s_{d, k} \in (\cS_1 \times \dots \times \cS_K)^{d - 1} \times \cS_1 \times \dots \times \cS_k$.
    Besides, the optimal value function, i.e. the supremum of value function over all $\bpi \in \Pi$, is equal to $\cV_k^{*} (s_{d, k})$.

    Similarly, the Q-function of $\widetilde{\bpi} \in \widetilde{\Pi}$ can be denoted as $\cQ_k^{\widetilde{\bpi}} (s_{1, k}, a_{1, k})$, and the optimal Q-function satisfies
    \begin{equation*}
    \begin{split}
        \cQ_k^{*} (s_{1, 1:k}, a_{1, k}) = \sup\nolimits_{\bpi \in \Pi} \cQ_k^{\bpi} (s_{1, 1:k}, a_{1, k})
        = \sup\nolimits_{\widetilde{\bpi} \in \widetilde{\Pi}} \cQ_k^{\widetilde{\bpi}} (s_{1, k}, a_{1, k}) = \cQ_k^{*} (s_{1, k}, a_{1, k}).
    \end{split}
    \end{equation*}
    The Q-function and optimal Q-function at time $k$ in bag $d \ge 1$ is equal to $\cQ_k^{\widetilde{\bpi}} (s_{d, k}, a_{d, k})$ and $\cQ_k^{*} (s_{d, k}, a_{d, k})$.
\end{lem}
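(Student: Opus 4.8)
The plan is to prove the four assertions in order, driving the first with the Markov property of the MDP and the remaining ones with the $K$-periodicity of $\bbP_k, r_k, \gamma_k$ together with the stationarity of $\widetilde{\bpi}$. First I would reduce to the current state within bag $1$. Fix $\widetilde{\bpi} \in \widetilde{\Pi}$. Since each $\bar{\pi}_k$ reads only the current state $S_{1,k}$ and the transitions are Markov, the conditional law of the entire future trajectory $\{ S_{1,k+1:K}, A_{1,k:K} \} \cup \{ S_{t, \cdot}, A_{t, \cdot} : t \ge 2 \}$ given $S_{1,1:k} = s_{1,1:k}$ coincides with its conditional law given only $S_{1,k} = s_{1,k}$: once the present state is fixed, the earlier coordinates $s_{1,1:(k-1)}$ influence nothing. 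Hence every reward entering $\cV_k^{\widetilde{\bpi}}$ has a conditional distribution depending on $s_{1,1:k}$ only through $s_{1,k}$, which yields the first display and licenses the notation $\cV_k^{\widetilde{\bpi}}(s_{1,k})$. Carrying the extra conditioning on $A_{1,k} = a_{1,k}$ through the same computation gives $\cQ_k^{\widetilde{\bpi}}(s_{1,1:k}, a_{1,k}) = \cQ_k^{\widetilde{\bpi}}(s_{1,k}, a_{1,k})$.

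Next I would establish the optimality chain. The inclusion $\widetilde{\Pi} \subseteq \Pi$ gives $\sup_{\widetilde{\bpi}} \cV_k^{\widetilde{\bpi}} \le \sup_{\bpi} \cV_k^{\bpi}$ at once, so the content is the reverse inequality together with the claim that $\sup_{\bpi \in \Pi} \cV_k^{\bpi}(s_{1,1:k})$ is free of $s_{1,1:(k-1)}$. I would obtain both simultaneously by re-running the two-stage dominance argument of Lemma~\ref{lem:stationary.policy}, but now for the objective $\cV_k$ instead of $\cV_1$: given an arbitrary HS policy $\bpi$, first replace it by a policy that is Markov across bags without decreasing $\cV_k$, then by one that is in addition Markov within each bag, arriving at some $\widetilde{\bpi} \in \widetilde{\Pi}$ with $\cV_k^{\widetilde{\bpi}}(s_{1,k}) \ge \cV_k^{\bpi}(s_{1,1:k})$. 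As the dominating policy is MSD, the Markov reduction above shows its value depends only on $s_{1,k}$; taking the supremum over $\bpi$ then forces $\sup_{\bpi} \cV_k^{\bpi}(s_{1,1:k})$ to be a function of $s_{1,k}$ alone, equal to $\sup_{\widetilde{\bpi}} \cV_k^{\widetilde{\bpi}}(s_{1,k}) =: \cV_k^{*}(s_{1,k})$.

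I would then pass to a general bag $d \ge 1$. Applying the Markov reduction at time $(d,k)$ removes the dependence on all states in $s_{1,1}, \dots, s_{d,k}$ except $s_{d,k}$, leaving a function of $s_{d,k}$ and, a priori, of $d$. To eliminate this residual $d$-dependence and identify the function with $\cV_k^{\widetilde{\bpi}}$, I use stationarity: because $\bbP_k, r_k, \gamma_k$ are $K$-periodic and $\widetilde{\bpi}$ is stationary, the law of the forward reward stream emitted from state $s$ at time $(d,k)$ equals the law from the same state at time $(1,k)$, while the $K$-periodic discounting satisfies the translation identity $\eta_{d,k}^{d+j,l} = \eta_{1,k}^{1+j,l}$ for $j \ge 0$. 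Matching the two series term by term shows the bag-$d$ value equals $\cV_k^{\widetilde{\bpi}}(s_{d,k})$; taking suprema and invoking the optimality chain gives the optimal version $\cV_k^{*}(s_{d,k})$. The Q-function claims follow verbatim by carrying the extra conditioning on $A_{d,k} = a_{d,k}$ through all three steps.

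The main obstacle is the optimality chain. Lemma~\ref{lem:stationary.policy} is stated only for $\cV_1$, whereas valuing from an interior decision time $k \ge 2$ is a control problem over a truncated first bag (times $k, \dots, K$) followed by full bags, so one cannot simply invoke periodicity starting from $(1,k)$. The work is to reproduce the Markov-across-then-within reduction for this shifted objective while keeping the discount bookkeeping through $\eta$ consistent across the partial first bag; once the dominance inequality holds, the fact that the optimum depends only on $s_{1,k}$ is immediate from the Markov reduction.
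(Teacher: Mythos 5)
Your proposal is correct and follows essentially the same route as the paper: the Markov-property reduction of the conditioning from $s_{1,1:k}$ to $s_{1,k}$, the existence of an optimal Markov stationary policy to close the optimality chain, a change-of-index/stationarity argument for general bags $d \ge 1$, and identical handling of the Q-functions. The ``main obstacle'' you flag is already dispatched by the paper's intermediate results: Lemma~\ref{lem:optimal.nonmarkov.policy} and the proof of Lemma~\ref{lem:stationary.policy} establish $\cV_k^{\bar{\bpi}} (s_{1, 1:k}) = \cV_k^{*} (s_{1, 1:k})$ and $\cQ_k^{\bar{\bpi}} = \cQ_k^{*}$ for \emph{every} $k$ (not only $k = 1$, which is all the statement of Lemma~\ref{lem:stationary.policy} records), so the paper's proof simply cites this fact rather than re-running the two-stage dominance argument for each shifted objective as you propose.
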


Lemma~\ref{lem:markov.value.function} proves that the value functions and Q-functions defined in this section is equivalent to the definitions in Section~\ref{sec:periodic.mdp}.

\subsection{Proof of Lemma~\ref{lem:stationary.policy}} \label{sec:proof.stationary.policy}

The following Lemma~\ref{lem:upper.bound} provides an upper bound for $\cV_k^* (s_{1, k})$, and is useful for proving Lemma~\ref{lem:stationary.policy}.
\begin{lem} \label{lem:upper.bound}
    For the optimal value functions, we have
    \begin{equation} \label{equ:optimal.V.bound.K}
        \cV_K^* (s_{1, 1:K}) \le \max_{a_{1, K} \in \cA_K} \bbE \brce{R_{1, K} + \gamma_K \cV_1^* (S_{2, 1}) \middle| S_{1, 1:K} = s_{1, 1:K}, A_{1, K} = a_{1, K} },
    \end{equation}
    for all $s_{1, 1:K} \in \cS_1 \times \dots \times \cS_K$
    at decision time $k = K$,
    and
    \begin{equation} \label{equ:optimal.V.bound.k}
        \cV_k^* (s_{1, 1:k}) \le \max_{a_{1, k} \in \cA_k} \bbE \brce{R_{1, k} + \gamma_k \cV_{k + 1}^* (S_{1, 1:(k + 1)}) \middle| S_{1, 1:k} = s_{1, 1:k}, A_{1, k} = a_{1, k} }.
    \end{equation}
    for all $s_{1, 1:k} \in \cS_1 \times \dots \times \cS_k$
    at decision times $k \in \{1:(K - 1)\}$.
    Similarly, for the optimal Q-functions, we have
    \begin{equation} \label{equ:optimal.Q.bound.K}
        \cQ_K^* (s_{1, 1:K}, a_{1, K}) \le \bbE \brce{R_{1, K} + \gamma_K \cV_1^* (S_{2, 1}) \middle| S_{1, 1:K} = s_{1, 1:K}, A_{1, K} = a_{1, K} },
    \end{equation}
    for all $s_{1, 1:K} \in \cS_1 \times \dots \times \cS_K$ and $a_{1, K} \in \cA_K$
    at decision time $k = K$,
    and
    \begin{equation} \label{equ:optimal.Q.bound.k}
        \cQ_k^* (s_{1, 1:k}, a_{1, k}) \le \bbE \brce{R_{1, k} + \gamma_k \cV_{k + 1}^* (S_{1, 1:(k + 1)}) \middle| S_{1, 1:k} = s_{1, 1:k}, A_{1, k} = a_{1, k} }.
    \end{equation}
    for all $s_{1, 1:k} \in \cS_1 \times \dots \times \cS_k$ and $a_{1, k} \in \cA_k$
    at decision times $k \in \{1:(K - 1)\}$.
\end{lem}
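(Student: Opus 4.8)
The plan is to establish each inequality by a one-step ``unrolling'' of the infinite-horizon objective for an arbitrary HS policy, bounding the continuation term by the optimal value function, and only then passing to the supremum over $\bpi \in \Pi$. Fix $k \in \{1 : K-1\}$ and an arbitrary $\bpi \in \Pi$. The first ingredient I would record is the discount-consistency identity $\eta_{1,k}^{t,l} = \gamma_k\, \eta_{1,k+1}^{t,l}$ for every $(t,l)$ at or after $(1, k+1)$, together with $\eta_{1,k}^{1,k} = 1$; these follow directly from the definition of the cumulative discounts $\eta$. Using them, the reward stream defining $\cV_k^{\bpi}$ splits into the immediate reward $R_{1,k}$ plus $\gamma_k$ times the reward stream that defines the value at decision time $k+1$, re-based at $(1, k+1)$.

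Next I would apply the tower property, conditioning on $(S_{1,1:(k+1)}, A_{1,k})$ inside the conditional expectation defining $\cV_k^{\bpi}(s_{1,1:k})$. By the Markov property, once $S_{1,1:(k+1)}$ is fixed the continuation stream is conditionally independent of $A_{1,k}$, and the inner conditional expectation of that stream equals $\cV_{k+1}^{\bpi'}(s_{1,1:(k+1)})$ for the continuation policy $\bpi'$ induced by $\bpi$ on the observed prefix; since $\bpi' \in \Pi$, this is at most $\cV_{k+1}^{*}(s_{1,1:(k+1)})$ by definition of the optimal value function. Substituting gives $\cV_k^{\bpi}(s_{1,1:k}) \le \bbE^{\bpi}[\, R_{1,k} + \gamma_k \cV_{k+1}^{*}(S_{1,1:(k+1)}) \mid S_{1,1:k} = s_{1,1:k}\,]$. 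Because the integrand, after conditioning further on $A_{1,k} = a$, is a functional of the MDP alone and not of $\bpi$, the outer expectation is an average over the action distribution $\pi_{1,k}(\cdot \mid s_{1,1:k})$ of the per-action quantity, hence bounded by its maximum over $a_{1,k} \in \cA_k$. The resulting upper bound is free of $\bpi$, so taking the supremum over $\bpi \in \Pi$ on the left yields \eqref{equ:optimal.V.bound.k}. The case $k = K$ in \eqref{equ:optimal.V.bound.K} is identical in structure, except that the continuation now begins at the first decision time of the next bag; here I would invoke the $K$-periodicity to identify the re-based continuation value at $(2,1)$ with $\cV_1^{\bpi'}(S_{2,1}) \le \cV_1^{*}(S_{2,1})$, using $\eta_{1,K}^{t,l} = \gamma_K\,\eta_{2,1}^{t,l}$. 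The Q-function bounds \eqref{equ:optimal.Q.bound.K} and \eqref{equ:optimal.Q.bound.k} follow by running the same decomposition with $A_{1,k}$ held fixed at $a_{1,k}$: one never takes the maximum over the action, so the per-action quantity is produced directly, and the supremum over $\bpi$ converts $\cQ_k^{\bpi}$ into $\cQ_k^{*}$.

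The main obstacle I anticipate is the step identifying the conditional continuation expectation with a genuine value function bounded by $\cV_{k+1}^{*}$ (or $\cV_1^{*}$ at the bag boundary). This requires constructing the continuation policy $\bpi'$ from $\bpi$ and the realized prefix and checking that $\bpi' \in \Pi$, so that the supremum defining optimality legitimately applies; it also relies on the Markov property to drop the residual dependence on $A_{1,k}$ and on $K$-periodicity to align the re-based continuation at $(2,1)$ with a bag-$1$ value function. A secondary technical point is justifying the interchange of the infinite summation with the conditional expectation and the finiteness of all quantities, which I would handle under the standing boundedness of rewards and discounting that make the series absolutely convergent.
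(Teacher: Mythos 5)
Your proposal is correct and follows essentially the same route as the paper: a one-step unrolling of $\cV_k^{\bpi}$ via the tower property, identification of the conditional continuation expectation with the value of a prefix-induced continuation policy (the paper's $\widetilde{\pi}_{t,l,s_{0,1:K}}$, shown to lie in $\Pi$ after a change of index exploiting exactly your discount identity $\eta_{1,k}^{t,l}=\gamma_k\eta_{1,k+1}^{t,l}$), a bound of that continuation by $\cV_{k+1}^{*}$ (or $\cV_1^{*}$ at the bag boundary), replacement of the policy's action average by the maximum over $\cA_k$, and finally the supremum over $\bpi\in\Pi$, with the Q-function bounds obtained by fixing $a_{1,k}$ and omitting the maximization. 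The anticipated obstacles you name (constructing $\bpi'\in\Pi$ from the realized prefix, using the Markov property and periodicity to re-base the continuation) are precisely the steps the paper carries out.
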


\begin{proof}
    We write the distribution of actions within the conditions of expectations to illustrate how the policies depend on other variables.
    Under consistency assumption in causal inference, the reward given the actions is equal to the potential outcomes of such actions.
    For any policy $\bpi \in \Pi$, denote the conditional cumulative reward starting from bag 2 given $s_{1, 1:K}, a_{1, K}$ as
    \begin{equation*}
        \begin{split}
            \widetilde{\cV}_1^{\bpi} (s_{2, 1} | s_{1, 1:K}, a_{1, K})
            = & \bbE^{\bpi} \brce{\sum_{t=2}^{\infty} \sum_{l=1}^K \eta_{1, K}^{t, l} R_{t, l} \middle| S_{1, 1:K} = s_{1, 1:K}, S_{2, 1} = s_{2, 1}, A_{1, K} = a_{1, K} }.
        \end{split}
    \end{equation*}
    Let $\widetilde{\pi}_{t, l, s_{0, 1:K}} (S_{1, 1}, \dots, S_{t, l})$ denote the policy $\pi_{t + 1, l} (s_{0, 1:K}, S_{1, 1}, \dots, S_{t, l})$, given that $s_{0, 1:K}$ are constants.
    By the Markov property,
    \begin{equation*} \label{equ:v.tilde.reexpress}
        \begin{split}
            & \widetilde{\cV}_1^{\bpi} (s_{2, 1} | s_{1, 1:K}, a_{1, K}) \\
            = & \bbE \brce{
                \sum_{t=2}^{\infty} \sum_{l=1}^K \eta_{1, K}^{t, l} R_{t, l} \middle| 
                S_{2, 1} = s_{2, 1}, A_{t, l} \sim \pi_{t, l} (s_{1, 1:K}, S_{2, 1}, \dots, S_{t, l}) \text{ for } t \ge 2
            }\\
            = & \gamma_K \bbE \brce{
                \sum_{t=1}^{\infty} \sum_{l=1}^K \eta_{1, 1}^{t, l} R_{t, l} \middle| 
                S_{1, 1} = s_{2, 1}, A_{t, l} \sim \pi_{t + 1, l} (s_{1, 1:K}, S_{1, 1}, \dots, S_{t, l}) \text{ for } t \ge 1
            } \\
            = & \gamma_K \bbE \brce{
                \sum_{t=1}^{\infty} \sum_{l=1}^K \eta_{1, 1}^{t, l} R_{t, l} \middle| 
                S_{1, 1} = s_{2, 1}, A_{t, l} \sim \widetilde{\pi}_{t, l, s_{0, 1:K}} (S_{1, 1}, \dots, S_{t, l}) \text{ for } t \ge 1
            }.
        \end{split}
    \end{equation*}
    The second equality follows from the change of index.
    Since $\widetilde{\pi}_{t, l, s_{0, 1:K}} (S_{1, 1}, \dots, S_{t, l}) \in \Pi$,
    we have 
    \begin{equation} \label{equ:bound.U21}
        \widetilde{\cV}_1^{\bpi} (s_{2, 1} | s_{1, 1:K}, a_{1, K})
        \le \gamma_K \cV_1^* (s_{2, 1})
    \end{equation}
    for all $\bpi \in \Pi$, $s_{2, 1} \in \cS_1$, $s_{1, 1:K} \in \cS_1 \times \dots \times \cS_K$, and $a_{1, K} \in \cA_K$.
    At the last decision time $k = K$ in the first bag $d = 1$,
    \begin{equation*}
        \begin{split}
            \cV_K^{\bpi} (s_{1, 1:K})
            = & \bbE^{\bpi} \brce{R_{1, K} + \sum_{t=2}^{\infty} \sum_{l=1}^K \eta_{2, 1}^{t, l} R_{t, l} \middle| S_{1, 1:K} = s_{1, 1:K} } \\
            = & \sum_{a_{1, K} \in \cA_K} \bbP \brce{\pi_{1, K} (s_{1, 1:K}) = a_{1, K}} \bbE^{\bpi} \brce{R_{1, K} + \sum_{t=2}^{\infty} \sum_{l=1}^K \eta_{2, 1}^{t, l} R_{t, l} \middle| S_{1, 1:K} = s_{1, 1:K}, A_{1, K} = a_{1, K} } \\
            = & \sum_{a_{1, K} \in \cA_K} \bbP \brce{\pi_{1, K} (s_{1, 1:K}) = a_{1, K}} \bbE \brce{R_{1, K} + \widetilde{\cV}_1^{\bpi} (S_{2, 1} | s_{1, 1:K}, a_{1, K}) \middle| S_{1, 1:K} = s_{1, 1:K}, A_{1, K} = a_{1, K}}.\\
        \end{split}
    \end{equation*}
    Notice that $R_{1, K}$ and $S_{2, 1}$ are independent of the policy $\bpi$ given $S_{1, 1:K}$ and $A_{1, K}$.
    The second equality follows from the tower property of conditional expectation, and the third equality follows from the definition of $\widetilde{\cV}_1^{\bpi} (s_{2, 1} | s_{1, 1:K}, a_{1, K})$.
    Following inequation~(\ref{equ:bound.U21}),
    \begin{equation} \label{equ:bound.VK.pi}
        \begin{split}
            \cV_K^{\bpi} (s_{1, 1:K})
            \le & \sum_{a_{1, K} \in \cA_K} \bbP \brce{\pi_{1, K} (s_{1, 1:K}) = a_{1, K}} \bbE \brce{R_{1, K} + \gamma_K \cV_1^* (S_{2, 1}) \middle| S_{1, 1:K} = s_{1, 1:K}, A_{1, K} = a_{1, K}}\\
            \le & \max_{a_{1, K} \in \cA_K} \bbE \brce{R_{1, K} + \gamma_K \cV_1^* (S_{2, 1}) \middle| S_{1, 1:K} = s_{1, 1:K}, A_{1, K} = a_{1, K}}.
        \end{split}
    \end{equation}
    Since the above inequation~(\ref{equ:bound.VK.pi}) holds for all $\bpi \in \Pi$, we have
    \begin{equation*}
        \cV_K^{*} (s_{1, 1:K}) = \sup_{\bpi \in \Pi} \cV_K^{\bpi} (s_{1, 1:K}) 
        \le \max_{a_{1, K} \in \cA_K} \bbE \brce{R_{1, K} + \gamma_K \cV_1^* (S_{2, 1}) \middle| S_{1, 1:K} = s_{1, 1:K}, A_{1, K} = a_{1, K}}.
    \end{equation*}
    
    Similarly, for $k = K - 1, \dots, 1$, define the cumulative reward starting from the $(k + 1)$th decision time in bag 1 as
    \begin{equation*}
        \begin{split}
            \widetilde{\cV}_1^{\bpi} (s_{1, k + 1} | s_{1, 1:k}, a_{1, k})
            = \bbE^{\bpi} \brce{
                \sum_{l=k + 1}^K \eta_{1, k}^{1, l} R_{1, l} +
                \sum_{t=2}^{\infty} \sum_{l=1}^K \eta_{1, k}^{t, l} R_{t, l} \middle| 
            S_{1, 1:k} = s_{1, 1:k}, A_{1, k} = a_{1, k}, S_{1, k + 1} = s_{1, k + 1} }.
        \end{split}
    \end{equation*}
    Then, according to the Markov property and the definition of $\cV_{k + 1}^* (s_{1, 1:(k + 1)})$,
    \begin{equation} \label{equ:bound.V1k}
        \begin{split}
            \widetilde{\cV}_1^{\bpi} (s_{1, k + 1} | s_{1, 1:k}, a_{1, k})
            = & \bbE \Bigg\{
            \sum_{l=k + 1}^K \eta_{1, k}^{1, l} R_{1, l} +
            \sum_{t=2}^{\infty} \sum_{l=1}^K \eta_{1, k}^{t, l} R_{t, l} \Bigg| 
            S_{1, k + 1} = s_{1, k + 1}, \\
            & \qquad \qquad \qquad A_{t, l} \sim \pi_{t, l} (s_{1, 1:k}, S_{1, k + 1}, \dots, S_{1, l}) \text{ for } t \ge 1, l \ge k + 1 \text{ or } t \ge 2
            \Bigg\} \\
            \le & \gamma_k \cV_{k + 1}^* (s_{1, 1:(k + 1)}).
        \end{split}
    \end{equation}
    The tower property of conditional expectation and inequation~(\ref{equ:bound.V1k}) yield
    \begin{equation} \label{equ:bound.Vk.pi}
        \begin{split}
            \cV_k^{\bpi} (s_{1, 1:k})
            = & \bbE^{\bpi} \brce{\sum_{l=k}^K \eta_{1, k}^{1, l} R_{1, l} + \sum_{t=2}^{\infty} \sum_{l=1}^K \eta_{1, k}^{t, l} R_{t, l} \middle| S_{1, 1:k} = s_{1, 1:k} } \\
            = & \sum_{a_{1, k} \in \cA_k} \bbP \brce{\pi_{1, k} (s_{1, 1:k}) = a_{1, k}} \bbE \brce{R_{1, k} + \widetilde{\cV}_{k + 1}^{\bpi} (S_{1, k + 1} | s_{1, 1:k}, a_{1, k}) \middle| S_{1, 1:k} = s_{1, 1:k}, A_{1, k} = a_{1, k} } \\
            \le & \sum_{a_{1, k} \in \cA_k} \bbP \brce{\pi_{1, k} (s_{1, 1:k}) = a_{1, k}} \bbE \brce{R_{1, k} + \gamma_{k} \cV_{k + 1}^* (s_{1, 1:k}, S_{1, k + 1}) \middle| S_{1, 1:k} = s_{1, 1:k}, A_{1, k} = a_{1, k}}\\
            \le & \max_{a_{1, k} \in \cA_k} \bbE \brce{R_{1, k} + \gamma_{k} \cV_{k + 1}^* (S_{1, 1:(k + 1)}) \middle| S_{1, 1:k} = s_{1, 1:k}, A_{1, k} = a_{1, k}}.
        \end{split}
    \end{equation}
    Since the above inequation~(\ref{equ:bound.Vk.pi}) holds for all $\bpi \in \Pi$, we have
    \begin{equation*}
        \cV_k^{*} (s_{1, 1:k}) = \sup_{\bpi \in \Pi} \cV_k^{\bpi} (s_{1, 1:k}) 
        \le \max_{a_{1, k} \in \cA_k} \bbE \brce{R_{1, k} + \gamma_{k} \cV_{k + 1}^* (S_{1, 1:(k + 1)}) \middle| S_{1, 1:k} = s_{1, 1:k}, A_{1, k} = a_{1, k}}.
    \end{equation*}

    For Q-functions at the last decision time $k = K$ in the first bag $d = 1$,
    \begin{equation*}
        \begin{split}
            \cQ_K^{\bpi} (s_{1, 1:K}, a_{1, K})
            = & \bbE^{\bpi} \brce{R_{1, K} + \sum_{t=2}^{\infty} \sum_{l=1}^K \eta_{1, k}^{t, l} R_{t, l} \middle| S_{1, 1:K} = s_{1, 1:K}, A_{1, K} = a_{1, K} } \\
            = & \bbE \brce{R_{1, K} + \widetilde{\cV}_1^{\bpi} (S_{2, 1} | s_{1, 1:K}, a_{1, K}) \big| S_{1, 1:K} = s_{1, 1:K}, A_{1, K} = a_{1, K}} \\
            \le & \bbE \brce{R_{1, K} + \gamma_K \cV_1^* (S_{2, 1}) \middle| S_{1, 1:K} = s_{1, 1:K}, A_{1, K} = a_{1, K}}
        \end{split}
    \end{equation*}
    for all $\bpi \in \Pi$.
    Therefore,
    \begin{equation*}
        \cQ_K^{*} (s_{1, 1:K}, a_{1, K}) = \sup_{\bpi \in \Pi} \cQ_K^{\bpi} (s_{1, 1:K}, a_{1, K}) 
        \le \bbE \brce{R_{1, K} + \gamma_K  \cV_1^* (S_{2, 1}) \middle| S_{1, 1:K} = s_{1, 1:K}, A_{1, K} = a_{1, K}}.
    \end{equation*}
    Similarly, for $k = K - 1, \dots, 1$,
    \begin{equation*}
        \begin{split}
            \cQ_k^{\bpi} (s_{1, 1:k}, a_{1, k})
            = & \bbE^{\bpi} \brce{\sum_{l=k}^K \eta_{1, k}^{1, l} R_{1, l} + \sum_{t=2}^{\infty} \sum_{l=1}^K \eta_{1, k}^{t, l} R_{t, l} \middle| S_{1, 1:k} = s_{1, 1:k}, A_{1, k} = a_{1, k} } \\
            = & \bbE \brce{R_{1, k} + \widetilde{\cV}_{k + 1}^{\bpi} (S_{1, k + 1} | s_{1, 1:k}, a_{1, k}) \middle| S_{1, 1:k} = s_{1, 1:k}, A_{1, k} = a_{1, k} } \\
            \le & \bbE \brce{R_{1, k} + \gamma_k \cV_{k + 1}^* (s_{1, 1:k}, S_{1, k + 1}) \middle| S_{1, 1:k} = s_{1, 1:k}, A_{1, k} = a_{1, k}}
        \end{split}
    \end{equation*}
    for all $\bpi \in \Pi$.
    Therefore,
    \begin{equation*}
        \cQ_k^{*} (s_{1, 1:k}, a_{1, k}) = \sup_{\bpi \in \Pi} \cQ_k^{\bpi} (s_{1, 1:k}, a_{1, k}) 
        \le \bbE \brce{R_{1, k} + \gamma_k \cV_{k + 1}^* (s_{1, 1:k}, S_{1, k + 1}) \middle| S_{1, 1:k} = s_{1, 1:k}, A_{1, k} = a_{1, k}}.
    \end{equation*}
\end{proof}

Next, we will show that there exists a Markov policy across bags, but not necessarily within bags, that is optimal with respect to the value function $\cV_1^*$.

\begin{lem} \label{lem:optimal.nonmarkov.policy}
    Define a policy $\bpi^{\prime} = \{ \pi_{1:K}^{\prime} \}$, where 
    \begin{equation*}
        \begin{split}
            \pi_K^{\prime} (s_{1, 1:K}) = \argmax_{a_{1, K} \in \cA_K}
            \bbE \{ R_{1, K} + \gamma_K \cV_1^{*} (S_{2, 1}) |
            S_{1, 1:K} = s_{1, 1:K}, A_{1, K} = a_{1, K} \}
        \end{split}
    \end{equation*}
    at time $k = K$,
    and
    \begin{equation*}
        \begin{split}
            \pi_k^{\prime} (s_{1, 1:k}) = \argmax_{a_{1, k} \in \cA_k}
            \bbE \{ R_{1, k} + \gamma_k \cV_{k + 1}^{*} (S_{1, 1:(k + 1)}) |
            S_{1, 1:k} = s_{1, 1:k}, A_{1, k} = a_{1, k} \}
        \end{split}
    \end{equation*}
    at time $k = K - 1, \dots, 1$.
    Then the policy $\bpi^{\prime}$ is optimal, i.e., $\cV_1^{\bpi^{\prime}} (s_{1, 1}) = \cV_1^{*} (s_{1, 1})$ for all $s_{1, 1} \in \cS_1$.
    In addition, 
    \begin{gather}
        \cV_K^{*} (s_{1, 1:K}) 
        = \cV_K^{\bpi^{\prime}} (s_{1, 1:K}) 
        = \max_{a_{1, K} \in \cA_K}
        \bbE \{ R_{1, K} + \gamma_K \cV_1^{*} (S_{2, 1}) |
        S_{1, 1:K} = s_{1, 1:K}, A_{1, K} = a_{1, K} \}, 
        \label{equ:V.optimal.pi.prime.K} \\
        \cV_k^{*} (s_{1, 1:k}) 
        = \cV_k^{\bpi^{\prime}} (s_{1, 1:k}) 
        = \max_{a_{1, k} \in \cA_k}
        \bbE \{ R_{1, k} + \gamma_k \cV_{k + 1}^{*} (S_{1, 1:(k + 1)}) |
        S_{1, 1:k} = s_{1, 1:k}, A_{1, k} = a_{1, k} \}
        \label{equ:V.optimal.pi.prime.k}
    \end{gather}
    for all $k \in \{1:K\}$ and all $s_{1, 1:k} \in \cS_1 \times \dots \times \cS_k$.
    Similarly, for the same $\bpi^{\prime}$ we have
    \begin{gather}
        \cQ_K^{*} (s_{1, 1:K}, a_{1, K}) 
        = \cQ_K^{\bpi^{\prime}} (s_{1, 1:K}, a_{1, K}) 
        \label{equ:Q.optimal.pi.prime.K} \\
        \cQ_k^{*} (s_{1, 1:k}, a_{1, k}) 
        = \cQ_k^{\bpi^{\prime}} (s_{1, 1:k}, a_{1, k}) 
        \label{equ:Q.optimal.pi.prime.k}
    \end{gather}
    for all $k \in \{1:K\}$, all $s_{1, 1:k} \in \cS_1 \times \dots \times \cS_k$, and all $a_{1, k} \in \cA_k$.
\end{lem}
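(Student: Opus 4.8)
The plan is to prove equality by sandwiching. Since $\cV_k^*(s_{1,1:k}) = \sup_{\bpi \in \Pi} \cV_k^{\bpi}(s_{1,1:k})$, the bound $\cV_k^{\bpi'}(s_{1,1:k}) \le \cV_k^*(s_{1,1:k})$ holds for every $k$ for free, and likewise $\cQ_k^{\bpi'} \le \cQ_k^*$; so in each case it suffices to establish the reverse inequality. The single tool driving this is Lemma~\ref{lem:upper.bound}, evaluated at the greedy action $\pi_k'(s_{1,1:k})$, which by the definition of $\bpi'$ attains the maximum on the right-hand sides of (\ref{equ:optimal.V.bound.K}) and (\ref{equ:optimal.V.bound.k}).

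First I would derive an \emph{exact} backward recursion for the value function of $\bpi'$ itself. The crucial point, and what I expect to be the main obstacle, is to turn the inequality (\ref{equ:bound.U21}) into an equality for this particular policy. Because $\bpi'$ is Markov \emph{across} bags---the same maps $\pi_{1:K}'$ are reused in every bag and $\pi_l'$ reads only the current bag's states---the time-index shift $t \mapsto t-1$ in the re-expression of $\widetilde{\cV}_1^{\bpi}$ (proof of Lemma~\ref{lem:upper.bound}) sends the continuation policy back to $\bpi'$ \emph{itself}, rather than to some history-dependent element of $\Pi$ that must be bounded by a supremum. Hence the continuation from bag $2$ equals $\gamma_K \cV_1^{\bpi'}(S_{2,1})$ exactly, and the tower property yields
\begin{align*}
\cV_K^{\bpi'}(s_{1,1:K}) &= \bbE\brce{ R_{1,K} + \gamma_K \cV_1^{\bpi'}(S_{2,1}) \middle| S_{1,1:K} = s_{1,1:K}, A_{1,K} = \pi_K'(s_{1,1:K}) }, \\
\cV_k^{\bpi'}(s_{1,1:k}) &= \bbE\brce{ R_{1,k} + \gamma_k \cV_{k+1}^{\bpi'}(S_{1,1:(k+1)}) \middle| S_{1,1:k} = s_{1,1:k}, A_{1,k} = \pi_k'(s_{1,1:k}) }
\end{align*}
for $k \in \{1:(K-1)\}$, where the within-bag dependence on $s_{1,1:k}$ is preserved because $\bpi'$ is not Markov within a bag.

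Next I would subtract these identities from the upper bounds of Lemma~\ref{lem:upper.bound} taken at the same greedy actions. Writing $W_k := \cV_k^* - \cV_k^{\bpi'} \ge 0$, inequalities (\ref{equ:optimal.V.bound.K})--(\ref{equ:optimal.V.bound.k}) together with the recursion give, for $k < K$,
\[
0 \le W_k(s_{1,1:k}) \le \gamma_k\, \bbE\brce{ W_{k+1}(S_{1,1:(k+1)}) \middle| S_{1,1:k} = s_{1,1:k}, A_{1,k} = \pi_k'(s_{1,1:k}) },
\]
and $W_K(s_{1,1:K}) \le \gamma_K\, \bbE\{ W_1(S_{2,1}) \mid \cdots \}$. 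Taking suprema $\bar W_k := \sup_{s_{1,1:k}} W_k$ and chaining through $k = 1, \dots, K$ closes the loop back to $\bar W_1$, giving $\bar W_1 \le \gamma_1\gamma_2\cdots\gamma_K\, \bar W_1 = \bar\gamma\, \bar W_1$. Since the rewards are bounded the value functions, hence $\bar W_1$, are finite, and $\bar\gamma < 1$ forces $\bar W_1 = 0$; back-substitution then yields $\bar W_k = 0$ for every $k$, i.e. $\cV_k^{\bpi'} = \cV_k^*$. Equations (\ref{equ:V.optimal.pi.prime.K})--(\ref{equ:V.optimal.pi.prime.k}) are then immediate, since the recursion for $\cV_k^{\bpi'}$ (with $\cV^{\bpi'} = \cV^*$ substituted) is exactly the right-hand expression evaluated at the argmax $\pi_k'$, which equals the maximum; in particular $\cV_1^{\bpi'}(s_{1,1}) = \cV_1^*(s_{1,1})$, so $\bpi'$ is optimal.

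Finally, the Q-function claims (\ref{equ:Q.optimal.pi.prime.K})--(\ref{equ:Q.optimal.pi.prime.k}) require no further contraction. Fixing the first action $a_{1,k}$ and following $\bpi'$ thereafter, the same continuation analysis gives $\cQ_k^{\bpi'}(s_{1,1:k}, a_{1,k}) = \bbE\{ R_{1,k} + \gamma_k \cV_{k+1}^{\bpi'}(S_{1,1:(k+1)}) \mid \cdots, A_{1,k} = a_{1,k}\}$ (with $\gamma_K\cV_1^{\bpi'}(S_{2,1})$ at $k=K$). Substituting the already-proved $\cV^{\bpi'} = \cV^*$ makes this coincide with the upper bounds in (\ref{equ:optimal.Q.bound.K})--(\ref{equ:optimal.Q.bound.k}), so $\cQ_k^{\bpi'} \ge \cQ_k^*$, while $\cQ_k^{\bpi'} \le \cQ_k^*$ holds by the supremum definition; the two together force $\cQ_k^{\bpi'} = \cQ_k^*$.
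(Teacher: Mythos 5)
Your proposal is correct, and it shares the paper's skeleton (sandwiching via the free inequality $\cV_k^{\bpi'} \le \cV_k^*$ plus Lemma~\ref{lem:upper.bound} evaluated at the greedy action), but it closes the argument in a genuinely different way. The paper never forms the gap $W_k = \cV_k^* - \cV_k^{\bpi'}$; instead it chains the inequalities of Lemma~\ref{lem:upper.bound} forward and unrolls them \emph{infinitely}, telescoping $\cV_1^*(s_{1,1}) \le \bbE^{\bpi'}\{R_{1,1} + \gamma_1 R_{1,2} + \gamma_1\gamma_2 R_{1,3} + \dots \mid S_{1,1}=s_{1,1}\} = \cV_1^{\bpi'}(s_{1,1})$, so the reverse inequality appears directly as the infinite-horizon return of $\bpi'$. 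You instead first establish the exact Bellman recursion for $\cV^{\bpi'}$ (exploiting stationarity of $\bpi'$ across bags, which the paper only makes explicit in its Q-function step via a change of index), and then run a sup-norm contraction over one full bag cycle, $\bar W_1 \le \bar\gamma\, \bar W_1$, to force $\bar W_k = 0$. The two routes need essentially the same regularity: your contraction requires bounded value functions and $\bar\gamma = \prod_k \gamma_k < 1$ explicitly, while the paper's ``$\le \dots$'' unrolling needs the same conditions implicitly so that the discounted $\cV^*$ tail vanishes; your version has the merit of making that dependence visible, and of delivering all $K$ equalities $\cV_k^{\bpi'} = \cV_k^*$ simultaneously from $\bar W_k = 0$ rather than by repeating the unrolling for each $k$. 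Your Q-function step coincides with the paper's. One point worth stating explicitly if you write this up: the exact recursion for $\cV^{\bpi'}$ at $k = K$ needs both the Markov property across bags and the index shift $t \mapsto t - 1$ sending $\bpi'$ to itself, which is exactly the observation the paper records in its display for $\cV_1^{\bpi'}(s_{2,1})$ inside the Q-function argument.
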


\begin{proof}
    According to inequation~(\ref{equ:optimal.V.bound.K}) in Lemma~\ref{lem:upper.bound} and the definition of $\pi_K^{\prime} (s_{1, K})$,
    \begin{equation} \label{equ:final.bound.VK}
        \begin{split}
            \cV_K^{*} (s_{1, 1:K}) 
            \le & \bbE \brce{R_{1, K} + \gamma_K \cV_1^* (S_{2, 1}) \middle| S_{1, 1:K} = s_{1, 1:K}, A_{1, K} \sim \pi_K^{\prime} (s_{1, 1:K})} \\
            = & \bbE^{\bpi^{\prime}} \brce{R_{1, K} + \gamma_K \cV_1^* (S_{2, 1}) \middle| S_{1, 1:K} = s_{1, 1:K}}.
        \end{split}
    \end{equation}
    Similarly, by inequation~(\ref{equ:optimal.V.bound.k}) in Lemma~\ref{lem:upper.bound} and the definition of $\pi_k^{\prime} (s_{1, k})$,
    \begin{equation} \label{equ:final.bound.Vk}
        \begin{split}
            \cV_k^{*} (s_{1, 1:k}) 
            \le & \bbE \brce{R_{1, k} + \gamma_k \cV_{k + 1}^* (S_{1, 1:(k + 1)}) \middle| S_{1, 1:k} = s_{1, 1:k}, A_{1, k} \sim \pi_k^{\prime} (s_{1, 1:k})} \\
            = & \bbE^{\bpi^{\prime}} \brce{R_{1, k} + \gamma_k \cV_{k + 1}^* (S_{1, 1:(k + 1)}) \middle| S_{1, 1:k} = s_{1, 1:k}}.
        \end{split}
    \end{equation}
    Combining inequalities~(\ref{equ:final.bound.VK}) and~(\ref{equ:final.bound.Vk}) and applying the same arguments recursively, we have
    \begin{equation*} \label{equ:optimal.pi.prime.1}
    \begin{split}
        \cV_1^{*} (s_{1, 1}) 
        \le & \bbE^{\bpi^{\prime}} \brce{ R_{1, 1} + \gamma_1 \cV_{2}^* (S_{1, 1:2}) \middle| S_{1, 1} = s_{1, 1}} \\
        \le & \bbE^{\bpi^{\prime}} \brce{ R_{1, 1} + \gamma_1 \bbE^{\bpi^{\prime}} \brce{R_{1, 2} + \gamma_2 \cV_{3}^* (S_{1, 1:3}) \middle| S_{1, 1:2}} \middle| S_{1, 1} = s_{1, 1}} \\
        =   & \bbE^{\bpi^{\prime}} \brce{ R_{1, 1} + \gamma_1 R_{1, 2} + \gamma_1 \gamma_2 \cV_{3}^* (S_{1, 1:3}) \middle| S_{1, 1} = s_{1, 1}} \\
        \le & \dots \\
        \le & \bbE^{\bpi^{\prime}} \brce{ R_{1, 1} + \gamma_1 R_{1, 2} + \gamma_1 \gamma_2 R_{1, 3} + \dots \middle| S_{1, 1} = s_{1, 1}} \\
        =  & \cV_1^{\bpi^{\prime}} (s_{1, 1}).
    \end{split}
    \end{equation*}
    Since $\bpi^{\prime} \in \Pi$, we have $\cV_1^{\bpi^{\prime}} (s_{1, 1}) \le \cV_1^{*} (s_{1, 1})$.
    Therefore, $\cV_1^{\bpi^{\prime}} (s_{1, 1}) = \cV_1^{*} (s_{1, 1})$ and the policy $\bpi^{\prime}$ is optimal.
    Furthermore, with the same arguments, we have that
    \begin{equation*}
    \begin{split}
        \cV_k^{*} (s_{1, 1:k}) 
        \le \cV_k^{\bpi^{\prime}} (s_{1, 1:k})
    \end{split}
    \end{equation*}
    for $k = 2, \dots, K$.
    Since $\cV_k^{\bpi^{\prime}} (s_{1, 1:k}) \le \cV_k^{*} (s_{1, 1:k})$, we also have $\cV_k^{\bpi^{\prime}} (s_{1, 1:k}) = \cV_k^{*} (s_{1, 1:k})$.
    Equations~(\ref{equ:V.optimal.pi.prime.K}) and~(\ref{equ:V.optimal.pi.prime.k}) follow from the definition of $\bpi^{\prime}$.

    For Q-functions, inequation~(\ref{equ:optimal.Q.bound.k}), equations~(\ref{equ:V.optimal.pi.prime.K}) and~(\ref{equ:V.optimal.pi.prime.k}), and the definition of $\cV_{k + 1}^{\bpi^{\prime}}$ yield
    \begin{equation*}
    \begin{split}
        \cQ_k^* (s_{1, 1:k}, a_{1, k}) 
        \le & \bbE \brce{R_{1, k} + \gamma_k \cV_{k + 1}^* (S_{1, 1:(k + 1)}) \middle| S_{1, 1:k} = s_{1, 1:k}, A_{1, k} = a_{1, k} } \\
        = & \bbE \brce{R_{1, k} + \gamma_k \cV_{k + 1}^{\bpi^{\prime}} (S_{1, 1:(k + 1)}) \middle| S_{1, 1:k} = s_{1, 1:k}, A_{1, k} = a_{1, k} } \\
        = & \bbE^{\bpi^{\prime}} \brce{\sum_{l=k}^K \eta_{1, k}^{1, l} R_{1, l} + \sum_{t=2}^{\infty} \sum_{l=1}^K \eta_{1, k}^{t, l} R_{t, l} \middle| S_{1, 1:k} = s_{1, 1:k}, A_{1, k} = a_{1, k} } 
        = \cQ_k^{\bpi^{\prime}} (s_{1, 1:k}, a_{1, k}).
    \end{split}
    \end{equation*}
    for $k \in \{1:K\} - 1$ and for all $s_{1, 1:k}$ and $a_{1, k}$. 
    Similarly, inequation~(\ref{equ:optimal.Q.bound.K}) and equation~(\ref{equ:V.optimal.pi.prime.k}) yield
    \begin{equation*}
    \begin{split}
        \cQ_K^* (s_{1, 1:K}, a_{1, K}) 
        \le & \bbE \brce{R_{1, K} + \gamma_K \cV_{1}^* (S_{2, 1}) \middle| S_{1, 1:K} = s_{1, 1:K}, A_{1, K} = a_{1, K} } \\
        = & \bbE \brce{R_{1, K} + \gamma_K \cV_{1}^{\bpi^{\prime}} (S_{2, 1}) \middle| S_{1, 1:K} = s_{1, 1:K}, A_{1, K} = a_{1, K} }
    \end{split}
    \end{equation*}
    at the decision time $k = K$.
    Notice that 
    \begin{equation*}
    \begin{split}
        \cV_{1}^{\bpi^{\prime}} (s_{2, 1})
        = & \bbE^{\bpi^{\prime}} \brce{\sum_{t=1}^{\infty} \sum_{l=1}^K \eta_{1, 1}^{t, l} R_{t, l} \middle| S_{1, 1} = s_{2, 1} } \\
        = & \bbE \brce{\sum_{t=2}^{\infty} \sum_{l=1}^K \eta_{2, 1}^{t, l} R_{t, l} \middle| S_{2, 1} = s_{2, 1}, A_{t, l} \sim \pi_{t, l} (S_{t, 1:l}) \text{ for } t \ge 2 } \\
        = & \bbE \brce{\sum_{t=2}^{\infty} \sum_{l=1}^K \eta_{2, 1}^{t, l} R_{t, l} \middle| S_{1, 1:K} = s_{1, 1:K}, A_{1, K} = a_{1, K}, S_{2, 1} = s_{2, 1}, A_{t, l} \sim \pi_{t, l} (S_{t, 1:l}) \text{ for } t \ge 2 } \\
        = & \bbE^{\bpi^{\prime}} \brce{\sum_{t=2}^{\infty} \sum_{l=1}^K \eta_{2, 1}^{t, l} R_{t, l} \middle| S_{1, 1:K} = s_{1, 1:K}, A_{1, K} = a_{1, K}, S_{2, 1} = s_{2, 1} },
    \end{split}
    \end{equation*}
    where the second line follows from the change of index, and the third line follows from the Markov property.
    Therefore,
    \begin{equation*}
    \begin{split}
        & \cQ_K^* (s_{1, 1:K}, a_{1, K}) \\
        \le & \bbE \brce{R_{1, K} + \gamma_K \bbE^{\bpi^{\prime}} \brce{\sum_{t=2}^{\infty} \sum_{l=1}^K \eta_{2, 1}^{t, l} R_{t, l} \middle| S_{1, 1:K} = s_{1, 1:K}, A_{1, K} = a_{1, K}, S_{2, 1} } \middle| S_{1, 1:K} = s_{1, 1:K}, A_{1, K} = a_{1, K} } \\
        = & \bbE^{\bpi^{\prime}} \brce{R_{1, K} + \gamma_K \sum_{t=2}^{\infty} \sum_{l=1}^K \eta_{2, 1}^{t, l} R_{t, l} \middle| S_{1, 1:K} = s_{1, 1:K}, A_{1, K} = a_{1, K} } 
        = \cQ_K^{\bpi^{\prime}} (s_{1, 1:K}, a_{1, K})
    \end{split}
    \end{equation*}
    for all $s_{1, 1:K}$ and $a_{1, K}$.
    The results follow by noticing that $\cQ_k^* (s_{1, 1:k}, a_{1, k}) \ge \cQ_k^{\bpi^{\prime}} (s_{1, 1:k}, a_{1, k})$ for all $k$.
\end{proof}

Now we are ready to prove Lemma~\ref{lem:stationary.policy}.

\begin{proof}
    According to the Markov property, $R_{1, K}$ and $S_{2, 1}$ are independent of $S_{1, 1:(K - 1)}$ and $A_{1, 1:(K - 1)}$ given $S_{1, K}$ and $A_{1, K}$.
    Therefore,
    \begin{equation} \label{equ:markov.value.K}
        \bbE \{ R_{1, K} + \gamma_K \cV_1^{*} (S_{2, 1}) | S_{1, 1:K} = s_{1, 1:K}, A_{1, K} = a_{1, K} \}
    = \bbE \{ R_{1, K} + \gamma_K \cV_1^{*} (S_{2, 1}) | S_{1, K} = s_{1, K}, A_{1, K} = a_{1, K} \}
    \end{equation}
    is a function of only $s_{1, K}$ and $a_{1, K}$.
    Define
    \begin{equation*}
        \begin{split}
            \bar{\pi}_K (s_{1, K}) = \argmax_{a_{1, K} \in \cA_K}
            \bbE \{ R_{1, K} + \gamma_K \cV_1^{*} (S_{2, 1}) |
            S_{1, K} = s_{1, K}, A_{1, K} = a_{1, K} \}.
        \end{split}
    \end{equation*}
    Notice that $\bar{\pi}_K (s_{1, K}) = \pi_K^{\prime} (s_{1, 1:K})$ for all $s_{1, 1:K} \in \cS_1 \times \dots, \cS_K$ by (\ref{equ:markov.value.K}).
    Then, according to equation~(\ref{equ:V.optimal.pi.prime.K}) in Lemma~\ref{lem:optimal.nonmarkov.policy},
    \begin{equation} \label{equ:Markov.value.function.K}
        \begin{split}
            \cV_K^{*} (s_{1, 1:K})
            = & \bbE \{ R_{1, K} + \gamma_K \cV_1^{*} (S_{2, 1}) | S_{1, 1:K} = s_{1, 1:K}, A_{1, K} \sim \pi_K^{\prime} (s_{1, 1:K}) \} \\
            = & \bbE \{ R_{1, K} + \gamma_K \cV_1^{*} (S_{2, 1}) | S_{1, 1:K} = s_{1, 1:K}, A_{1, K} \sim \bar{\pi}_{K} (s_{1, K}) \} \\
            = & \bbE \{ R_{1, K} + \gamma_K \cV_1^{*} (S_{2, 1}) | S_{1, K} = s_{1, K}, A_{1, K} \sim \bar{\pi}_{K} (s_{1, K}) \}
            =: \bar{\cV}_K^{\bar{\pi}_K} (s_{1, K})
        \end{split}
    \end{equation}
    for any $s_{1, 1:K} \in \cS_1 \times \dots, \cS_K$,
    where $\bar{\cV}_K^{\bar{\pi}_K} (s_{1, K})$ is some function that only depends on $s_{1, K}$.
    The second line follows from the definition of $\bar{\pi}_K$, and the third line follows from the Markov property.
    Here, we have not shown that $\bar{\cV}_K^{\bar{\pi}_K} (s_{1, K})$ is a value function of any policy.

    Similarly, for $k = K - 1$, define
    \begin{equation*}
        \begin{split}
            \bar{\pi}_k (s_{1, k}) 
            = \argmax_{a_{1, k} \in \cA_k}
            \bbE \{ \bar{\cV}_K^{\bar{\pi}_K} (S_{1, K})  |
            S_{1, k} = s_{1, k}, A_{1, k} = a_{1, k} \}.
        \end{split}
    \end{equation*}
    Notice that
    \begin{equation*}
        \begin{split}
            \bbE \{ \bar{\cV}_K^{\bar{\pi}_K} (S_{1, K}) | S_{1, k} = s_{1, k}, A_{1, k} = a_{1, k} \}
            = & \bbE \{ \bar{\cV}_K^{\bar{\pi}_K} (S_{1, K}) | S_{1, 1:k} = s_{1, 1:k}, A_{1, k} = a_{1, k} \} \\
            = & \bbE \{ \cV_K^{*} (s_{1, 1:k}, S_{1, K}) | S_{1, 1:k} = s_{1, 1:k}, A_{1, k} = a_{1, k} \},
        \end{split}
    \end{equation*}
    where the first equality follows from the Markov property for $S_{1, K}$, and the second equality is based on equation~(\ref{equ:Markov.value.function.K}).
    Therefore,
    \begin{equation*}
        \begin{split}
            \bar{\pi}_k (s_{1, k}) 
            = \argmax_{a_{1, k} \in \cA_k}
            \bbE \{ \cV_K^{*} (s_{1, 1:k}, S_{1, K}) |
            S_{1, 1:k} = s_{1, 1:k}, A_{1, k} = a_{1, k} \}
            = \pi^{\prime}_k (s_{1:k})
        \end{split}
    \end{equation*}
    for any $s_{1, 1:k} \in \cS_1 \times \dots, \cS_k$.
    Then, according to equation~(\ref{equ:V.optimal.pi.prime.k}) in Lemma~\ref{lem:optimal.nonmarkov.policy},
    \begin{equation*} \label{equ:Markov.value.function.k}
        \begin{split}
            \cV_k^{*} (s_{1, 1:k})
            = & \bbE \{ \cV_K^{*} (s_{1, 1:k}, S_{1, K}) | S_{1, 1:k} = s_{1, 1:k}, A_{1, k} \sim \pi_k^{\prime} (s_{1, 1:k}) \} \\
            = & \bbE \{ \bar{\cV}_K^{\bar{\pi}_K} (S_{1, K}) | S_{1, k} = s_{1, k}, A_{1, k} \sim \bar{\pi}_{k} (s_{1, k}) \}
            =: \bar{\cV}_k^{\bar{\pi}_k} (s_{1, k})
        \end{split}
    \end{equation*}
    for all $s_{1, 1:k} \in \cS_1 \times \dots, \cS_k$,
    where $\bar{\cV}_k^{\bar{\pi}_k} (s_{1, k})$ is some function that only depends on $s_{1, k}$.

    With the same arguments, we can use induction to define $\bar{\cV}_k^{\bar{\pi}_k} (s_{1, k})$ and 
    \begin{equation*}
        \begin{split}
            \bar{\pi}_k (s_{1, k}) 
            = \argmax_{a_{1, k} \in \cA_k}
            \bbE \{ \bar{\cV}_{k + 1}^{\bar{\pi}_{k + 1}} (S_{1, k + 1})  |
            S_{1, k} = s_{1, k}, A_{1, k} = a_{1, k} \}
        \end{split}
    \end{equation*}
    for all $k \in \{1:K\}$,
    and prove that $\bar{\cV}_k^{\bar{\pi}_k} (s_{1, k}) = \cV_k^{*} (s_{1, 1:k})$ for all $s_{1, 1:k} \in \cS_1 \times \dots, \cS_k$.

    Now we define the policy
    $\bar{\bpi} := \{ \bar{\pi}_{1:K} \}$.
    By definition, $\bar{\pi}_k (s_{1, k}) = \pi^{\prime}_k (s_{1:k})$ for all $k \in \{1:K\}$.
    Hence, the value function of $\bar{\bpi}$ is 
    $\cV_k^{\bar{\bpi}} (s_{1, 1:k}) = \cV_k^{\bpi^{\prime}} (s_{1, 1:k}) = \cV_k^{*} (s_{1, 1:k})$
    by Lemma~\ref{lem:optimal.nonmarkov.policy}, and we can conclude that $\bar{\cV}_k^{\bar{\pi}_k} (s_{1, k}) = \cV_k^{\bar{\bpi}} (s_{1, 1:k})$ is indeed the value function of $\bar{\bpi}$ and is optimal.
    Specifically, at time $k = 1$, we have $\cV_1^{\bar{\bpi}} (s_{1, 1}) = \cV_1^{*} (s_{1, 1})$, indicating that $\bar{\bpi}$ is an optimal policy.
    Similarly, $\cQ_k^{\bar{\bpi}} (s_{1, 1:k}, a_{1, k}) = \cQ_k^{\bpi^{\prime}} (s_{1, 1:k}, a_{1, k}) = \cQ_k^{*} (s_{1, 1:k}, a_{1, k})$ for all $k$ by Lemma~\ref{lem:optimal.nonmarkov.policy}.
\end{proof}

\subsection{Proof of Lemma~\ref{lem:markov.value.function}}

\begin{proof}
    Following the notations in Appendix~\ref{sec:proof.stationary.policy}, we have
    \begin{equation*}
    \begin{split}
        \cV_k^{\widetilde{\bpi}} (s_{1, 1:k}) 
        = & \bbE^{\bpi} \brce{\sum_{l=k}^K \eta_{1, k}^{1, l} R_{1, l} + \sum_{t=2}^{\infty} \sum_{l=1}^K \eta_{1, k}^{t, l} R_{t, l} \middle| S_{1, 1:k} = s_{1, 1:k} } \\
        = & \bbE \brce{\sum_{l=k}^K \eta_{1, k}^{1, l} R_{1, l} + \sum_{t=2}^{\infty} \sum_{l=1}^K \eta_{1, k}^{t, l} R_{t, l} \middle| S_{1, 1:k} = s_{1, 1:k}, A_{t, l} \sim \widetilde{\pi}_{l} (S_{t, l}) \text{ for } t = 1, l \ge k \text{ or } t \ge 2 } \\
        = & \bbE \brce{\sum_{l=k}^K \eta_{1, k}^{1, l} R_{1, l} + \sum_{t=2}^{\infty} \sum_{l=1}^K \eta_{1, k}^{t, l} R_{t, l} \middle| S_{1, k} = s_{1, k}, A_{t, l} \sim \widetilde{\pi}_{l} (S_{t, l}) \text{ for } t = 1, l \ge k \text{ or } t \ge 2 } \\
        = & \bbE^{\bpi} \brce{\sum_{l=k}^K \eta_{1, k}^{1, l} R_{1, l} + \sum_{t=2}^{\infty} \sum_{l=1}^K \eta_{1, k}^{t, l} R_{t, l} \middle| S_{1, k} = s_{1, k} }.
    \end{split}
    \end{equation*}
    The first and second equalities are based on the definition of $\cV_k^{\widetilde{\bpi}} (s_{1, 1:k})$ and the definition of $\bbE^{\bpi}$.
    The third equality follows from the Markov property, i.e. $R_{t, l}$ for $t = 1, l \ge k$ or $t \ge 2$ is independent of $S_{1, 1:(k - 1)}$ and $A_{1, 1:(k - 1)}$ given $S_{1, k}$ and $A_{1, k}$.
    Therefore, the value function of $\widetilde{\bpi}$ can be denoted as $\cV_k^{\widetilde{\bpi}} (s_{1, k})$.

    According to Lemma~\ref{lem:stationary.policy}, there exists a Markov and stationary policy $\bar{\bpi}$ that is optimal.
    Since $\bar{\bpi} \in \widetilde{\Pi}$,
    \[ 
    \cV_k^{*} (s_{1, 1:k})
    = \cV_k^{\bar{\bpi}} (s_{1, 1:k}) 
    \le \sup_{\widetilde{\bpi} \in \widetilde{\Pi}} \cV_k^{\widetilde{\bpi}} (s_{1, k}) 
    = \cV_k^{*} (s_{1, k}) 
    \]
    for all $k \in \{1:K\}$ and all $s_{1, 1:k} \in \cS_1 \times \dots, \cS_k$.
    On the other hand, $\cV_k^{*} (s_{1, k}) \le \cV_k^{*} (s_{1, 1:k})$ since $\widetilde{\Pi} \subseteq \Pi$.
    Therefore, $\cV_k^{*} (s_{1, k}) = \cV_k^{*} (s_{1, 1:k})$ for all $k \in \{1:K\}$ and all $s_{1, 1:k} \in \cS_1 \times \dots \times \cS_k$.

    The value function of $\widetilde{\bpi}$ at time $k$ in bag $d \ge 1$ satisfies
    \begin{equation*}
    \begin{split}
        & \bbE^{\widetilde{\bpi}} \brce{\sum_{l=k}^K \eta_{d, k}^{d, l} R_{d, l} + \sum_{t=d + 1}^{\infty} \sum_{l=1}^K \eta_{d, k}^{t, l} R_{t, l} \middle| S_{1, 1} = s_{1, 1}, \dots, S_{d, k} = s_{d, k} } \\
        = & \bbE \brce{\sum_{l=k}^K \eta_{d, k}^{d, l} R_{d, l} + \sum_{t=d + 1}^{\infty} \sum_{l=1}^K \eta_{d, k}^{t, l} R_{t, l} \middle| S_{d, k} = s_{d, k}, A_{t, l} \sim \widetilde{\pi}_{l} (S_{t, l}) \text{ for } (t, l) \ge (d, k) } \\
        = & \bbE \brce{\sum_{l=k}^K \eta_{1, k}^{1, l} R_{1, l} + \sum_{t=2}^{\infty} \sum_{l=1}^K \eta_{1, k}^{t, l} R_{t, l} \middle| S_{1, k} = s_{d, k}, A_{t, l} \sim \widetilde{\pi}_{l} (S_{t, l}) \text{ for } (t, l) \ge (1, k) } \\
        = & \cV_k^{\widetilde{\bpi}} (s_{d, k}),
    \end{split}
    \end{equation*}
    where the second line comes from the Markov property and the third line follows from the change of index.
    Let $\widetilde{\pi}_{t, l, s_{1:(d - 1), 1:K}} (S_{d, k}, \dots, S_{t, l})$ denote the policy $\pi_{t + d - 1, l} (s_{1, 1:K}, \dots, s_{d - 1, 1:K}, S_{d, k}, \dots, S_{t, l})$.
    Then, the optimal value function satisfies
    \begin{equation*}
        \sup_{\bpi \in \Pi} \bbE^{\bpi} \brce{\sum_{l=k}^K \eta_{d, k}^{d, l} R_{d, l} + \sum_{t=d + 1}^{\infty} \sum_{l=1}^K \eta_{d, k}^{t, l} R_{t, l} \middle| S_{1, 1} = s_{1, 1}, \dots, S_{d, k} = s_{d, k} } 
        = \cV_k^{*} (s_{d, 1:k})
        = \cV_k^{*} (s_{d, k}).
    \end{equation*}

    Similar arguments can be used to prove the results about Q-functions.
\end{proof}

\subsection{Proof of Theorem~\ref{thm:optimal.policy}}

\begin{proof}
    We begin by showing that 
    \begin{equation} \label{equ:Bellman.V.Q}
        \cV^*_k (s_{1, k}) = \max_{a_{1, k} \in \cA_k} \cQ_k^{*} (s_{1, k}, a_{1, k}).
    \end{equation}
    Lemma~\ref{lem:stationary.policy} shows that there is a Markov and stationary policy $\bar{\bpi}$, and that
    $\cV_k^{\bar{\bpi}} (s_{1, 1:k}) = \cV_k^{*} (s_{1, 1:k})$ and
    $\cQ_k^{\bar{\bpi}} (s_{1, 1:k}, a_{1, k}) = \cQ_k^{*} (s_{1, 1:k}, a_{1, k})$ 
    for all $k$.
    Since $\bar{\bpi} \in \widetilde{\Pi}$, we have 
    $\cV_k^{\bar{\bpi}} (s_{1, k}) = \cV_k^{*} (s_{1, k})$ and
    $\cQ_k^{\bar{\bpi}} (s_{1, k}, a_{1, k}) = \cQ_k^{*} (s_{1, k}, a_{1, k})$ 
    according to Lemma~\ref{lem:markov.value.function}.
    Now due to the definition that $\cV_k^{*}$ is the optimal value function over all possible policies in $\widetilde{\Pi}$,
    \begin{equation*}
    \begin{split}
        \cV_k^{*} (s_{1, k}) 
        = & \sup_{\widetilde{\bpi} \in \widetilde{\Pi}} \bbE^{\widetilde{\bpi}} \brce{ \sum_{l=k}^K \eta_{1, k}^{1, l} R_{1, l} + \sum_{t=2}^{\infty} \sum_{l=1}^K \eta_{1, k}^{t, l} R_{t, l} \middle| S_{1, k} = s_{1, k} } \\
        \ge & \bbE^{\bar{\bpi}} \brce{ \sum_{l=k}^K \eta_{1, k}^{1, l} R_{1, l} + \sum_{t=2}^{\infty} \sum_{l=1}^K \eta_{1, k}^{t, l} R_{t, l} \middle| S_{1, k} = s_{1, k}, A_{1, k} = a_{1, k} }
        = \cQ_k^{\bar{\bpi}} (s_{1, k}, a_{1, k})
        = \cQ_k^{*} (s_{1, k}, a_{1, k})
    \end{split}
    \end{equation*}
    for any $a_{1, k} \in \cA_k$ and any $k$.
    Hence, we have $\cV_k^{*} (s_{1, k}) \ge \max_{a_{1, k} \in \cA_k} \cQ_k^{*} (s_{1, k}, a_{1, k})$.
    The conclusion follows by showing that
    \begin{equation*}
    \begin{split}
        \cV_k^{*} (s_{1, k}) 
        = \cV_k^{\bar{\bpi}} (s_{1, k}) 
        = \cQ_k^{\bar{\bpi}} (s_{1, k}, \bar{\pi}_k (s_{1, k})) 
        \le \max_{a_{1, k} \in \cA_k} \cQ_k^{\bar{\bpi}} (s_{1, k}, a_{1, k})
        = \max_{a_{1, k} \in \cA_k} \cQ_k^{*} (s_{1, k}, a_{1, k}).
    \end{split}
    \end{equation*}

    We first show the sufficiency, i.e. the optimal Q-function $\cQ^*_k$ satisfies equations~(\ref{equ:Bellman.optimality.equations}).
    For all $s_{1, K}$ and $a_{1, K}$,
    \begin{equation*}
    \begin{split}
        \cQ_K^{*} (s_{1, K}, a_{1, K})
        = & \sup_{\widetilde{\bpi} \in \widetilde{\Pi}} \cQ_K^{\widetilde{\bpi}} (s_{1, K}, a_{1, K}) \\
        = & \sup_{\widetilde{\bpi} \in \widetilde{\Pi}} \bbE \brce{R_{1, K} + \gamma_K \cV^{\widetilde{\bpi}}_1 (S_{2, 1}) \middle| S_{1, K} = s_{1, K}, A_{1, K} = a_{1, K} } \\
        = & \bbE \brce{R_{1, K} \middle| S_{1, K} = s_{1, K}, A_{1, K} = a_{1, K} }
        + \gamma_K \sup_{\widetilde{\bpi} \in \widetilde{\Pi}} \bbE \brce{\cV^{\widetilde{\bpi}}_1 (S_{2, 1}) \middle| S_{1, K} = s_{1, K}, A_{1, K} = a_{1, K} }.
    \end{split}
    \end{equation*}
    The second equality follows from the Bellman equation, which can be proved using standard arguments in stationary MDP.
    The third equality follows from the fact that $R_{1, K}$ is independent of the policy $\widetilde{\bpi}$ given $S_{1, K}$ and $A_{1, K}$.
    For the optimal policy $\bar{\bpi} \in \widetilde{\Pi}$, we have $\cV^{\bar{\bpi}}_1 (s_{2, 1}) = \cV^{*}_1 (s_{2, 1})$ for all $s_{2, 1} \in \cS_1$.
    Notice that 
    \begin{equation*}
    \begin{split}
        \sup_{\widetilde{\bpi} \in \widetilde{\Pi}} \bbE \brce{\cV^{\widetilde{\bpi}}_1 (S_{2, 1}) \middle| S_{1, K} = s_{1, K}, A_{1, K} = a_{1, K} }
        \le & \bbE \brce{\sup_{\widetilde{\bpi} \in \widetilde{\Pi}} \cV^{\widetilde{\bpi}}_1 (S_{2, 1}) \middle| S_{1, K} = s_{1, K}, A_{1, K} = a_{1, K} } \\
        = & \bbE \brce{\cV^{*}_1 (S_{2, 1}) \middle| S_{1, K} = s_{1, K}, A_{1, K} = a_{1, K} }
    \end{split}
    \end{equation*}
    due to the convexity of the $\sup$ function.
    On the other hand,
    \begin{equation*}
    \begin{split}
        \sup_{\widetilde{\bpi} \in \widetilde{\Pi}} \bbE \brce{\cV^{\widetilde{\bpi}}_1 (S_{2, 1}) \middle| S_{1, K} = s_{1, K}, A_{1, K} = a_{1, K} }
        \ge & \bbE \brce{\cV^{\bar{\bpi}}_1 (S_{2, 1}) \middle| S_{1, K} = s_{1, K}, A_{1, K} = a_{1, K} } \\
        = & \bbE \brce{\cV^{*}_1 (S_{2, 1}) \middle| S_{1, K} = s_{1, K}, A_{1, K} = a_{1, K} }.
    \end{split}
    \end{equation*}
    Therefore, we have that 
    \begin{equation*}
    \begin{split}
        \sup_{\widetilde{\bpi} \in \widetilde{\Pi}} \bbE \brce{\cV^{\widetilde{\bpi}}_1 (S_{2, 1}) \middle| S_{1, K} = s_{1, K}, A_{1, K} = a_{1, K} }
        = \bbE \brce{\cV^{*}_1 (S_{2, 1}) \middle| S_{1, K} = s_{1, K}, A_{1, K} = a_{1, K} }.
    \end{split}
    \end{equation*}
    Consequently,
    \begin{equation*}
    \begin{split}
        \cQ_K^{*} (s_{1, K}, a_{1, K})
        = & \bbE \brce{R_{1, K} \middle| S_{1, K} = s_{1, K}, A_{1, K} = a_{1, K} }
        + \gamma_K \bbE \brce{\cV^{*}_1 (S_{2, 1}) \middle| S_{1, K} = s_{1, K}, A_{1, K} = a_{1, K} } \\
        = & \bbE \brce{R_{1, K} + \gamma_K \max_{a_{2, 1} \in \cA_1} \cQ_1^{*} (S_{2, 1}, a_{2, 1}) \middle| S_{1, K} = s_{1, K}, A_{1, K} = a_{1, K} }.
    \end{split}
    \end{equation*}
    The last equality follows from equation~(\ref{equ:Bellman.V.Q}).
    Similarly, 
    \begin{equation*}
    \begin{split}
        \cQ_k^{*} (s_{1, k}, a_{1, k})
        = & \bbE \brce{R_{1, k} + \gamma_k \max_{a_{1, k + 1} \in \cA_{k + 1}} \cQ_{k + 1}^{*} (S_{1, k + 1}, a_{1, k + 1}) \middle| S_{1, k} = s_{1, k}, A_{1, k} = a_{1, k} }
    \end{split}
    \end{equation*}
    for $k \in \{1:K\} - 1$.

    To show the sufficiency, we need to prove that any functions $Q_k: \cS_k \times \cA_k \mapsto \bbR$ satisfying  the Bellman optimality equations are the optimal Q-functions, i.e. $Q_k = Q_k^*$.
    For any policy $\widetilde{\bpi} \in \widetilde{\Pi}$,
    \begin{equation*}
    \begin{split}
        & \cQ_1 (s_{1, 1}, a_{1, 1}) \\
        = & \bbE \brce{ R_{1, 1} + \gamma_1 \max_{a_{1, 2} \in \cA_{2}} \cQ_{2} (S_{1, 2}, a_{1, 2}) \middle| S_{1, 1} = s_{1, 1}, A_{1, 1} = a_{1, 1} } \\
        \ge & \bbE \brce{ R_{1, 1} + \gamma_1 \cQ_{2} (S_{1, 2}, \widetilde{\pi}_2 (S_{1, 2})) \middle| S_{1, 1} = s_{1, 1}, A_{1, 1} = a_{1, 1} } \\
        = & \bbE \brce{ R_{1, 1} + \gamma_1 \bbE \brce{ R_{1, 2} + \gamma_2 \max_{a_{1, 3} \in \cA_{3}} \cQ_{3} (S_{1, 3}, a_{1, 3}) \middle| S_{1, 2}, A_{1, 2} = \widetilde{\pi}_2 (S_{1, 2}), S_{1, 1}, A_{1, 1} } \middle| S_{1, 1} = s_{1, 1}, A_{1, 1} = a_{1, 1} } \\
        = & \bbE^{\widetilde{\bpi}} \brce{R_{1, 1} + \gamma_1 R_{1, 2} + \gamma_1 \gamma_2 \max_{a_{1, 3} \in \cA_{3}} \cQ_{3} (S_{1, 3}, a_{1, 3}) \middle| S_{1, 1} = s_{1, 1}, A_{1, 1} = a_{1, 1} }.
    \end{split}
    \end{equation*}
    In the last equality, $\bbE^{\widetilde{\bpi}}$ represents that $A_{1, 2}$ follows $\widetilde{\pi}_2$.
    Using the same arguments recursively, we have
    \begin{equation*}
    \begin{split}
        \cQ_1 (s_{1, 1}, a_{1, 1}) 
        \ge & \bbE^{\widetilde{\bpi}} \brce{ R_{1, 1} + \gamma_1 R_{1, 2} + \gamma_1 \gamma_2 \cQ_{3} (S_{1, 3}, \widetilde{\pi}_3 (S_{1, 3})) \middle| S_{1, 1} = s_{1, 1}, A_{1, 1} = a_{1, 1} } \\
        \ge & \dots \\
        \ge & \bbE^{\widetilde{\bpi}} \brce{ R_{1, 1} + \gamma_1 R_{1, 2} + \gamma_1 \gamma_2 R_{1, 3} + \dots \middle| S_{1, 1} = s_{1, 1}, A_{1, 1} = a_{1, 1} } \\
        = & \cQ_1^{\widetilde{\bpi}} (s_{1, 1}, a_{1, 1})
    \end{split}
    \end{equation*}
    for all $\widetilde{\bpi} \in \widetilde{\Pi}$.
    Therefore, 
    \begin{equation*}
    \begin{split}
        \cQ_1 (s_{1, 1}, a_{1, 1}) 
        \ge \sup_{\widetilde{\bpi} \in \widetilde{\Pi}} \cQ_1^{\widetilde{\bpi}} (s_{1, 1}, a_{1, 1})
        = \cQ^*_1 (s_{1, 1}, a_{1, 1})
    \end{split}
    \end{equation*}
    On the other hand, take the policy $\bpi^* = \{\pi_{1:K}^*\}$, where
    $\pi_k^* (s_{1, k}) = \argmax_{a_{1, k} \in \cA_k} \cQ_k (s_{1, k}, a_{1, k})$.
    By the definition of $\bpi^*$,
    \begin{equation*}
    \begin{split}
        \cQ_1 (s_{1, 1}, a_{1, 1}) 
        = & \bbE \brce{ R_{1, 1} + \gamma_1 \max_{a_{1, 2} \in \cA_{2}} \cQ_{2} (S_{1, 2}, a_{1, 2}) \middle| S_{1, 1} = s_{1, 1}, A_{1, 1} = a_{1, 1} } \\
        = & \bbE \brce{ R_{1, 1} + \gamma_1 \cQ_{2} (S_{1, 2}, \pi^*_2 (S_{1, 2})) \middle| S_{1, 1} = s_{1, 1}, A_{1, 1} = a_{1, 1} } \\
        = & \bbE^{\bpi^*} \brce{ R_{1, 1} + \gamma_1 R_{1, 2} + \gamma_1 \gamma_2 \max_{a_{1, 3} \in \cA_{3}} \cQ_{3} (S_{1, 3}, a_{1, 3}) \middle| S_{1, 1} = s_{1, 1}, A_{1, 1} = a_{1, 1} } \\
        = & \dots \\
        = & \cQ_1^{\bpi^*} (s_{1, 1}, a_{1, 1}).
    \end{split}
    \end{equation*}
    Hence, we have $\cQ_1 (s_{1, 1}, a_{1, 1}) = \cQ_1^{\bpi^*} (s_{1, 1}, a_{1, 1})$ for all $s_{1, 1}$ and $a_{1, 1}$, and can conclude that $\cQ_1$ is the Q-function of $\bpi^*$.
    Since $\bpi^* \in \widetilde{\Pi}$,
    \begin{equation*}
    \begin{split}
        \cQ_1 (s_{1, 1}, a_{1, 1}) 
        = \cQ_1^{\bpi^*} (s_{1, 1}, a_{1, 1})
        \le \sup_{\widetilde{\bpi} \in \widetilde{\Pi}} \cQ_1^{\widetilde{\bpi}} (s_{1, 1}, a_{1, 1})
        = \cQ^*_1 (s_{1, 1}, a_{1, 1}).
    \end{split}
    \end{equation*}
    Therefore, we can conclude that $\cQ_1 (s_{1, 1}, a_{1, 1}) = \cQ^*_1 (s_{1, 1}, a_{1, 1})$ for all $s_{1, 1}$ and $a_{1, 1}$, and $\bpi^*$ is an optimal policy.
    The equivalence between $\cQ_k (s_{1, k}, a_{1, k}) = \cQ^*_k (s_{1, k}, a_{1, k})$ for $k = 2, \dots, K$ can be proved similarly.
\end{proof}

\subsection{Proof of Lemma~\ref{lem:markov.transition}} \label{sec:proof.markov.transition}

\begin{proof}
    We prove the results for a general problem, where there is a reward $R_{d, k}$ at each decision time $k$.
    In the problems with bagged decision times, we take $R_{d, 1:(K - 1)} = 0$ and $R_{d, K} = R_d$.
    Since the rewards $R_{d, 1:(K - 1)}$ are defined to be constants, they are essentially independent of any other variables in the DAG and do not affect the sufficiency condition.
    For notation simplicity, the indices $(d - 1, K)$ and $(d, 0)$, $(d + 1, 1)$ and $(d, K + 1)$, or $(K + 1)$ and $(1)$ are used interchangeably.

    \textbf{Step 1.} 
    Show that the reward function is Markovian, i.e. $R_{d, k} \indep S_{t, l}, A_{t, l} | S_{d, k}, A_{d, k}$ for any $(t, l) < (d, k)$.

    By Assumption~\ref{asp:state.D.BaSS},
    \begin{equation*}
        R_{d, k}, R_{d, k + 1}, \dots \indep \cH_{d, k} | S_{d, k}, A_{d, k}.
    \end{equation*}
    Since $\cH_{d, k - 1} \subseteq \cH_{d, k}$, we have $R_{d, k} \indep \cH_{d, k - 1} | S_{d, k}, A_{d, k}$.
    In addition, since $A_{d, k - 1} \in \cH_{d, k - 1}$ and $S_{d, k - 1} = F_t (\cH_{d, k - 1})$, we can conclude that $R_{d, k} \indep S_{d, k - 1}, A_{d, k - 1} | S_{d, k}, A_{d, k}$.
    Similarly, for any $(t, l) < (d, k)$, we have $R_{d, k} \indep S_{t, l}, A_{t, l} | S_{d, k}, A_{d, k}$ and thus the reward function is Markovian.

    \textbf{Step 2.} 
    Show that the state transition is Markovian, i.e. $S_{d, k + 1} \indep S_{t, l}, A_{t, l} | S_{d, k}, A_{d, k}$ when $k < K$ and $S_{d + 1, 1} \indep S_{t, l}, A_{t, l} | S_{d, K}, A_{d, K}$ when $k = K$ for any $(t, l) < (d, k)$.

    We show the results for $k < K$, and the results for $k = K$ can be shown analogously.
    Using similar arguments as in step 1, we have that $R_{m, n} \indep S_{d, k}, A_{d, k} | S_{d, k + 1}, A_{d, k + 1}$ for all $(m, n) \ge (d, k + 1)$.
    Therefore, we must have (1) $R_{m, n} \nindep S_{d, k + 1}, A_{d, k + 1} | S_{d, k}, A_{d, k}$, (2) $R_{m, n} \indep S_{d, k + 1}, A_{d, k + 1}$ and $R_{m, n} \indep S_{d, k}, A_{d, k}$, or (3) $S_{d, k + 1}, A_{d, k + 1}$ equals $S_{d, k}, A_{d, k}$ almost surely.
    Notice that for three vectors $X, Y, Z$, if $X \indep Y | Z$ and $X \indep Z | Y$, then $\bbP (X | Y, Z) = \bbP (X | Y) = \bbP (X | Z)$.
    This holds true only when $Y = Z$ almost surely or $X \indep Y$ and $X \indep Z$.
    
    When (1) $R_{m, n} \nindep S_{d, k + 1}, A_{d, k + 1} | S_{d, k}, A_{d, k}$, we have that $R_{m, n}$ and $S_{d, k + 1}$ are not d-separated by $S_{d, k}, A_{d, k}$.
    We will prove $S_{d, k + 1} \indep S_{t, l} | S_{d, k}, A_{d, k}$ by contradiction.
    If $S_{d, k + 1} \nindep S_{t, l} | S_{d, k}, A_{d, k}$ for some $(t, l)$, then $S_{d, k + 1}$ and $S_{t, l}$ are not d-separated by $S_{d, k}, A_{d, k}$.
    Then the pair $S_{d, k}, A_{d, k}$ does not contain any middle node on a chain or a fork and contain all colliders or its descendants on at least one path from $S_{t, l} \to S_{d, k + 1}$ and from $S_{d, k + 1} \to R_{m, n}$.
    Notice that $R_{m, n} \indep S_{t, l} | S_{d, k + 1}, A_{d, k + 1}$ since $(t, l) < (d, k + 1)$.
    Therefore, $S_{d, k + 1}, A_{d, k + 1}$ cannot contain any collider on any path $S_{t, l} \to S_{d, k + 1} \to R_{m, n}$.
    When conditioning on $S_{d, k}, A_{d, k}$, the path $S_{t, l} \to S_{d, k + 1} \to R_{m, n}$ is not d-separated.
    However, Assumption~\ref{asp:state.D.BaSS} for $S_{d, k}$ requires that $R_{m, n} \indep \cH_{d, k} | S_{d, k}, A_{d, k}$ and thus $R_{m, n} \indep S_{t, l} | S_{d, k}, A_{d, k}$.
    This completes the proof by contradiction.
    Similarly, we can show that $S_{d, k + 1} \indep A_{t, l} | S_{d, k}, A_{d, k}$ since $A_{t, l} \in \cH_{d, k}$.

    When (2) $R_{m, n} \indep S_{d, k + 1}, A_{d, k + 1}$ and $R_{m, n} \indep S_{d, k}, A_{d, k}$ for all $(m, n) \ge (d, k + 1)$, the assumption that $R_d$ is a descendant of at least one variable $V_{d, k} \in \bB_d$ is violated.
    Otherwise, if $V_{d, k}$ is an ancestor of $R_d$ for some $k$, we must have $V_{d, k} \in S_{d, k + 1}$ by Assumption~\ref{asp:state.D.BaSS} for $S_{d, k + 1}$ since $V_{d, k} \in \cH_{d, k}$, and $R_{d} \indep S_{d, k + 1}$ does not hold.

    When (3) $S_{d, k + 1}, A_{d, k + 1}$ equals $S_{d, k}, A_{d, k}$ almost surely, we have immediately that $S_{d, k + 1} \indep S_{t, l}, A_{t, l} | S_{d, k}, A_{d, k}$.

    \textbf{Step 3.} 
    Show that $S_{d, k + 1} \indep \widetilde{\bB}_{d - 1} | S_{d, k}, A_{d, k}$ for $k \le K - 1$ and $R_d, S_{d + 1, 1} \indep \widetilde{\bB}_{d - 1} | S_{d, K}, A_{d, K}$.

    The statements $S_{d, k + 1} \indep \widetilde{\bB}_{d - 1} | S_{d, k}, A_{d, k}$ for $k \le K$ (with the index $(d + 1, K) = (d, K + 1)$) can be proved using the same arguments as Step 2 with $S_{t, l}, A_{t, l}$ substituted by $\widetilde{\bB}_{d - 1}$, since $S_{t, l}, A_{t, l}, \widetilde{\bB}_{d - 1} \in \cH_{d, k}$.
    The statement $R_d \indep \widetilde{\bB}_{d - 1} | S_{d, K}, A_{d, K}$ holds true by Assumption~\ref{asp:state.D.BaSS} for $S_{d, K}$, which requires $R_{d} \indep \cH_{d, k} | S_{d, K}, A_{d, K}$ and $\widetilde{\bB}_{d - 1} \in \cH_{d, K}$.
\end{proof}

\subsection{Proof of Lemma~\ref{lem:rl.framework}} \label{sec:proof.rl.framework}

\begin{proof}
    \textbf{Step 1.} 
    The state space and action space is stationary across bags.

    The state space $\cU_k$ is stationary across bags according to the condition (1) in Assumption~\ref{asp:markovian.state}, and the action space $\cA_k$ are stationary across bags by Definition~\ref{dfn:bagged}.

    \textbf{Step 2.} 
    The state transition is Markovian, stationary across bags and Markovian, non-stationary within bags.
    
    The condition (2) in Assumption~\ref{asp:markovian.state} assumes that the state transition is Markovian within and across bags.
    We first show that the state transition is stationary across bags from $U_{d, k}, A_{d, k}$ to $U_{d, k + 1}$ for $k \le K - 1$.
    Definition~\ref{dfn:bagged} requires that
    $\bbP (\bB_{d + 1} | \widetilde{\bB}_{d} = \bb, A_{d + 1, 1:K} = a_{1:K}) = \bbP (\bB_{d} | \widetilde{\bB}_{d - 1} = \bb, A_{d, 1:K} = a_{1:K})$.
    Integrating over variables $\bB^{\prime}_d := \bB_d \backslash (U_{d, k} \cup A_{d, 1:K})$ that are not in the state $U_{d, k}$, we have that the distribution 
    \[ \bbP (U_{d, k} | \widetilde{\bB}_{d - 1} = \bb, A_{d, 1:K} = a_{1:K}) = \int \bbP (U_{d, k}, \bB^{\prime}_d = \bb^{\prime}_d | \widetilde{\bB}_{d - 1} = \bb, A_{d, 1:K} = a_{1:K}) d \bb^{\prime}_d \] 
    is stationary across bags.
    Similarly, $\bbP (U_{d, k}, U_{d, k + 1} | \widetilde{\bB}_{d - 1} = \bb, A_{d, 1:K} = a_{1:K})$ is also stationary across bags.
    According to the condition (4) in Assumption~\ref{asp:markovian.state}, $U_{d, k + 1} \indep \widetilde{\bB}_{d - 1} | U_{d, k}, A_{d, k}$ for $k < K$.
    Besides, $U_{d, k + 1} (a_{d, 1:k})$ is only a potential outcome of $a_{d, 1:k}$.
    Then we have
    \[ \bbP (U_{d, k + 1} | U_{d, k}, A_{d, 1:k}) = \bbP (U_{d, k + 1} | U_{d, k}, \widetilde{\bB}_{d - 1}, A_{d, 1:K}) = \frac{\bbP (U_{d, k}, U_{d, k + 1} | \widetilde{\bB}_{d - 1}, A_{d, 1:K})} {\bbP (U_{d, k} | \widetilde{\bB}_{d - 1}, A_{d, 1:K})} \]
    is stationary across bags.
    The condition (2) in Assumption~\ref{asp:markovian.state} assumes that the state transition is Markovian within a bag, which implies that $U_{d, k + 1} \indep A_{d, 1:(k - 1)} | U_{d, k}, A_{d, k}$.
    The Markovian state transition within a bag implies that $U_{d, k + 1} \indep A_{d, 1:(k - 1)} | U_{d, k}, A_{d, k}$
    Therefore, $\bbP (U_{d, k + 1} | U_{d, k}, A_{d, k}) = \bbP (U_{d, k + 1} | U_{d, k}, A_{d, 1:k})$ is stationary across bags.

    For the state transition from $U_{d - 1, K}, A_{d - 1, K}$ to $U_{d, 1}$, the condition (4) requires that $U_{d, 1} \indep \widetilde{\bB}_{d - 2} | U_{d - 1, K}, A_{d - 1, K}$, and the condition (2) requires that $U_{d, 1} \indep A_{d - 1, 1:(K - 1)} | U_{d - 1, K}, A_{d - 1, K}$.
    Hence we have $\bbP (U_{d, 1} | U_{d - 1, K}, A_{d - 1, K}) = \bbP (U_{d, 1} | U_{d - 1, K}, A_{d - 1, K}, \widetilde{\bB}_{d - 2}, A_{d - 1, 1:(K - 1)})$.
    Notice that 
    \begin{equation*}
        \bbP (U_{d, 1} | U_{d - 1, K}, \widetilde{\bB}_{d - 2}, A_{d - 1, 1:K})
        = \frac{\bbP (U_{d, 1}, U_{d - 1, K} | \widetilde{\bB}_{d - 2}, A_{d - 1, 1:K})}{\bbP (U_{d - 1, K} | \widetilde{\bB}_{d - 2}, A_{d - 1, 1:K})},
    \end{equation*}
    where the numerator is stationary across bags if we integrate $\bbP (\bB_{d - 1} | \widetilde{\bB}_{d - 2} = \bb, A_{d - 1, 1:K} = a_{1:K})$ over the variables other than $U_{d - 1, K}$.
    For the denominator, since $U_{d, 1} \subseteq \bB_d \cup \widetilde{\bB}_{d - 1}$ and $U_{d - 1, K} \subseteq \bB_{d - 1} \cup \widetilde{\bB}_{d - 2}$,
    \begin{equation*}
        \begin{split}
            \bbP (U_{d, 1}, U_{d - 1, K} | \widetilde{\bB}_{d - 2}, A_{d - 1, 1:K})
            = & \int \bbP (U_{d, 1}, U_{d - 1, K}, \widetilde{\bB}_{d - 1} = \bb_{d - 1} | \widetilde{\bB}_{d - 2}, A_{d - 1, 1:K}) d \bb_{d - 1} \\
            = & \int \bbP (U_{d, 1} | \widetilde{\bB}_{d - 1} = \bb_{d - 1}) \bbP (U_{d - 1, K}, \widetilde{\bB}_{d - 1} = \bb_{d - 1} | \widetilde{\bB}_{d - 2}, A_{d - 1, 1:K}) d \bb_{d - 1} \\
        \end{split}
    \end{equation*}
    is also stationary across bags.
    Therefore, $\bbP (U_{d, 1} | U_{d - 1, K}, A_{d - 1, K})$ is stationary across bags.

    \textbf{Step 3.} 
    The reward function is Markovian, stationary across bags and Markovian, non-stationary within bags.
    
    When $R_{d, k} = 0$ is defined as a constant for $k \le K - 1$, we have $R_{d, k} \indep U_{d, k}, A_{d, k}$.
    Besides, the reward function $r_{k} (u_{d, k}, a_{d, k}) = 0$ is stationary across bags.
    
    For the last reward $R_{d, K} = R_d$, the condition (3) in Assumption~\ref{asp:markovian.state} requires that $R_{d} \indep U_{t, l}, A_{t, l} | U_{d, K}, A_{d, K}$ for any $(t, l) < (d, K)$, which guarantees that the reward function is Markovian. 
    Similar as Step 2, the distribution $\bbP (R_d | U_{d, K}, A_{d, K}) = \bbP (R_d | U_{d, K}, A_{d, K}, \widetilde{\bB}_{d - 1}) = \frac{\bbP (R_d, U_{d, K}, A_{d, K} | \widetilde{\bB}_{d - 1})}{\bbP (U_{d, K}, A_{d, K} | \widetilde{\bB}_{d - 1})}$  is stationary across bags.

    \textbf{Step 4.} 
    With the reward defined as $R_{d, 1:(K - 1)} = 0$, $R_{d, K} = R_d$ and the discount factor defined as $\gamma_{1:(K - 1)} = 1$, $\gamma_K = \bar{\gamma}$ $\gamma_{k} = 1$, we can recover the value function $\bbE^{\widetilde{\bpi}} \brce{ \sum_{d=1}^{\infty} \bar{\gamma}^{d - 1} R_d | U_{1, 1} = u_{1, 1} }$.

    We aim to maximize the cumulative rewards given the initial state $u_{1, 1}$, discounted every bag.
    With the discount factors defined as $\gamma_{k} = 1$ for $k < K$ and $\gamma_K = \bar{\gamma}$ for $k = K$, we have
    $\bar{\gamma} = \prod\nolimits_{k = 1}^K \gamma_k$.
    Notice that when $R_{d, 1:(K - 1)} = 0$,
    \begin{equation*}
        \begin{split}
            \bbE^{\widetilde{\bpi}} \brce{ \sum_{d=1}^{\infty} \bar{\gamma}^{d - 1} R_d | U_{1, 1} = u_{1, 1} } 
            = \bbE^{\widetilde{\bpi}} \brce{ 
            \sum_{l=k}^K \eta_{d, k}^{d, l} R_{d, l} + 
            \sum_{t=d + 1}^{\infty} \sum_{l=1}^K \eta_{d, k}^{t, l} R_{t, l} | U_{1, 1} = u_{1, 1} }
        \end{split}
    \end{equation*}
    for any $\widetilde{\bpi} \in \widetilde{\Pi}$,
    where $\eta_{d, k}^{t, l} = \allowbreak \prod_{(m, n): (d, k) \le (m, n) < (t, l)} \gamma_n$.
    Therefore, the decision process in Figure~\ref{fig:dag} is a $K$-Periodic MDP with the desired value function.
\end{proof}

However, this process is not a stationary MDP on the bag level.
The reason is that the outcome of $A_{d, k}$ is revealed before the next decision time $k + 1$ and used for choosing the next actions $A_{d, k + 1}$ in the same bag.

In Figure~\ref{fig:dag}, consider the bag state $Z_d := [M_{d-1, 1:K}, N_{d-1, 1:K}, E_{d-1}, R_{d-1}, C_{d, 1:K}]$ and bag action $A_d := [A_{d, 1:k}]$.
The transition probability of $Z_{d + 1}$ given all the history is 
\begin{equation*}
    \begin{split}
        \bbP (Z_{d + 1} | Z_{1:d}, A_{1:d})
        = & \bbP (M_{d, 1:K}, N_{d, 1:K}, E_{d}, R_{d}, C_{d + 1, 1:K} | M_{d - 1, 1:K}, N_{d - 1, 1:K}, E_{d - 1}, R_{d - 1}, C_{d, 1:K}, A_{d, 1:K}) \\
        = & \bbP (M_{d, 1:K}, N_{d, 1:K}, E_{d}, R_{d},  C_{d + 1, 1:K} | E_{d - 1}, R_{d - 1}, C_{d, 1:K}, A_{d, 1:K}) \\
    \end{split}
\end{equation*}
since $Z_d$ is independent of $M_{d - 1, 1:K}$, $N_{d - 1, 1:K}$, and the history $Z_{1:(d-1)}, A_{1:(d-1)}$ given $E_{d - 1}, R_{d - 1}, C_{d, 1:K}, A_{d, 1:K}$.

For a policy $\bpi$ based on the state $S_{d, k}$ in (\ref{equ:state.definition}), the transition probability can be expanded as
\begin{equation} \label{equ:markov.bag}
    \begin{split}
        & \bbP (Z_{d + 1} | Z_{1:d}, A_{1:d}) \\
        = & \brce{\bbP (M_{d, 1:K}, N_{d, 1:K}, E_{d}, R_{d}, C_{d + 1, 1:K}, A_{d, 1:K} | E_{d - 1}, R_{d - 1}, C_{d, 1:K})}
        \brce{\bbP (A_{d, 1:K} | E_{d - 1}, R_{d - 1}, C_{d, 1:K})}^{-1} \\
        = 
        & \Bigg\{ \bbP_{\bpi} (A_{d, 1} | E_{d - 1}, R_{d - 1}, C_{d, 1}) 
        \times \bbP (M_{d, 1} | E_{d - 1}, R_{d - 1}, C_{d, 1}, A_{d, 1}) 
        \times \bbP (N_{d, 1} | E_{d - 1}, A_{d, 1}) \times \dots \\
        & \times \bbP_{\bpi} (A_{d, K} | M_{d, 1:(K - 1)}, N_{d, 1:(K - 1)}, E_{d - 1}, R_{d - 1}, C_{d, K}) \\
        & \times \bbP (M_{d, K} | E_{d - 1}, R_{d - 1}, C_{d, K}, A_{d, K}) 
        \times \bbP (N_{d, K} | E_{d - 1}, A_{d, K}) \\
        & \times \bbP (E_{d} | E_{d - 1}, N_{d, 1:K})
        \times \bbP (R_{d} | E_{d}, M_{d, 1:K}, R_{d - 1})
        \times \prod_{k=1}^K \bbP (C_{d + 1, k}) \Bigg\} \\
        & \brce{\bbP_{\bpi} (A_{d, 1} | E_{d - 1}, R_{d - 1}, C_{d, 1}) \times \dots 
        \times \bbP_{\bpi} (A_{d, K} | A_{d, 1:(K - 1)}, E_{d - 1}, R_{d - 1}, C_{d, 1:K}) }^{-1}.
    \end{split}
\end{equation}


A stationary MDP on the bag level requires that the joint transition probability $\bbP (Z_{d + 1} | Z_d, A_d)$ of the bag state $Z_{d + 1}$ is stationary over time.
However, on the right-hand side of (\ref{equ:markov.bag}), $\bbP_{\bpi} (A_{d, k} | M_{d, 1:(k - 1)}, N_{d, 1:(k - 1)}, E_{d - 1}, R_{d - 1}, C_{d, k})$ in the numerator is different from 
\begin{equation*}
    \begin{split}
        & \bbP_{\bpi} (A_{d, k} | A_{d, 1:(k - 1)}, E_{d - 1}, R_{d - 1}, C_{d, 1:k}) \\
        = & \int \bbP_{\bpi} (A_{d, k} | M_{d, 1:(k - 1)}, N_{d, 1:(k - 1)}, E_{d - 1}, R_{d - 1}, C_{d, k}) \cdot \\
        & \qquad \prod_{t = 1}^{k - 1} \bbP (M_{d, t} | E_{d - 1}, R_{d - 1}, C_{d, t}, A_{d, t}) \bbP (N_{d, t} | E_{d - 1}, A_{d, t}) d M_{d, 1} \dots d M_{d, k - 1} d N_{d, 1} \dots d N_{d, k - 1}
    \end{split}
\end{equation*}
in the denominator when $k > 1$.
Therefore, the joint transition probability $\bbP (Z_{d + 1} | Z_{d}, A_{d})$ actually depends on the policy $\bpi$.
This is a problem of conditioning on descents of $Z_{d}$.

\subsection{Proof of Theorem~\ref{thm:best.state}} \label{sec:proof.best.state}

\begin{proof}
    Since $\{S_{d, k}\}_{\cI}$ is the minimal D-BaSS and the joint distribution of $\bB_d$ is stationary across bags, the state space $\cS_k$ is bag-invariant. 
    Lemmas~\ref{lem:markov.transition} and~\ref{lem:rl.framework} thus guarantees that $\{S_{d, k}\}_{\cI}$ satisfies the definition for a $K$-periodic MDP, which further ensures that $\{U_{d, k}\}_{\cI}$ exists.

    \textbf{Step 1.} 
    For any $\{U_{d, k}\}_{\cI}$ satisfying the precedence condition and the definition for a $K$-periodic MDP, we have 
    $\cV_1^{\cU*} (u_{1, 1}) \le \cV_1^{\cS*} (u_{1, 1})$ for all $u_{1, 1} \in \cU_1$.

    Since $\{U_{d, k}\}_{\cI}$ satisfies the definition for a $K$-periodic MDP, we have that $R_{d, k}, U_{d, k + 1} \indep U_{d, k - 1}, \dots, U_{1, 1}, A_{d, k - 1}, \dots, A_{1, 1} | U_{d, k}, A_{d, k}$.
    
    We will prove the result based on value iteration.
    Consider the two policy classes 
    \begin{align*}
        \Pi^{\cS} & := \{ \bpi^{\cS} = \{\pi^{\cS}_{1:K}\}: \pi^{\cS}_k: \cS_k \mapsto \Delta(\cA_k), \forall k \}, \\
        \Pi^{\cU} & := \{ \bpi^{\cU} = \{\pi^{\cU}_{1:K}\}: \pi^{\cU}_k: \cU_k \mapsto \Delta(\cA_k), \forall k \}.
    \end{align*}
    We start from defining $\bpi^{\cS (0)}$ and $\bpi^{\cU (0)}$ as the same deterministic policies.
    Specifically, let $\pi_k^{\cS (0)} (s_{d, k}) = 0$ for all $k$ and all $s_{d, k} \in \cS_k$ and let $\pi_k^{\cU (0)} (u_{d, k}) = 0$ for all $k$ and all $u_{d, k} \in \cU_k$.
    Then we have 
    \[ \cQ_k^{\cU (0)} (u_{d, k}, a_{d, k}) := \cQ_k^{\bpi^{\cU (0)}} (u_{d, k}, a_{d, k}) = \cQ_k^{\bpi^{\cS (0)}} (u_{d, k}, a_{d, k}) =: \cQ_k^{\cS (0)} (u_{d, k}, a_{d, k}) \]
    for all $k$ and all $u_{d, k}, a_{d, k}$ since all the actions are the same in the whole trajectory for the two policies.
    
    Given the Q-functions in the 0th iteration,
    the first value function at $i = 1$ is defined as
    \begin{align*}
        \cV_k^{\cU (1)} (u_{d, k}) &= \max_{a_{d, k} \in \cA_k} \cQ_k^{\cU (0)} (u_{d, k}, a_{d, k}), \\
        \cV_k^{\cS (1)} (s_{d, k}) &= \max_{a_{d, k} \in \cA_k} \cQ_k^{\cS (0)} (s_{d, k}, a_{d, k}),
    \end{align*}
    for all $k$.
    Then we have
    \begin{equation*}
        \begin{split}
            & \cV_k^{\cU (1)} (u_{d, k}) 
            = \max_{a_{d, k} \in \cA_k} \cQ_k^{\cU (0)} (u_{d, k}, a_{d, k})
            = \max_{a_{d, k} \in \cA_k} \cQ_k^{\cS (0)} (u_{d, k}, a_{d, k}) \\
            = & \max_{a_{d, k} \in \cA_k} \bbE^{\bpi^{\cS (0)}} \brce{ 
            \sum_{l=k}^K \eta_{d, k}^{d, l} R_{d, l} + 
            \sum_{t=d + 1}^{\infty} \sum_{l=1}^K \eta_{d, k}^{t, l} R_{t, l} 
            \middle| U_{d, k} = u_{d, k}, A_{d, k} = a_{d, k} } \\
            = & \max_{a_{d, k} \in \cA_k} \int
            \bbE^{\bpi^{\cS (0)}} \brce{ 
            \sum_{l=k}^K \eta_{d, k}^{d, l} R_{d, l} + 
            \sum_{t=d + 1}^{\infty} \sum_{l=1}^K \eta_{d, k}^{t, l} R_{t, l} 
            \middle| S_{d, k} = s_{d, k}, U_{d, k} = u_{d, k}, A_{d, k} = a_{d, k} } \cdot \\
            & \kern 27em
            p(s_{d, k} | U_{d, k} = u_{d, k}, A_{d, k} = a_{d, k}) 
            d s_{d, k}.
        \end{split}
    \end{equation*}
    Common descendants of $S_{d, k}$ and $A_{d, k}$ like the reward $R_{d, k}$ are colliders and have not been observed yet.
    Since $a_{d, k}$ is a function of only $u_{d, k}$, $U_{d, k}$ is the only parent of $A_{d, k}$ in a causal DAG.
    Therefore, we have that $S_{d, k} \indep A_{d, k} | U_{d, k}$.
    Besides, by the definition of $S_{d, k}$, we have $R_{t, l} \indep \cH_{d, k} | S_{d, k}, A_{d, k}$ for any $(t, l) \ge (d, k)$.
    Since $U_{d, k}$ satisfies the precedence condition of $\cH_{d, k}$, we have $R_{t, l} \indep U_{d, k} | S_{d, k}, A_{d, k}$.
    Therefore, for all $k$,
    \begin{equation} \label{equ:non.dbass.itr1}
        \begin{split}
            & \cV_k^{\cU (1)} (u_{d, k}) \\
            = & \max_{a_{d, k} \in \cA_k} \int
            \bbE^{\bpi^{\cS (0)}} \brce{ 
            \sum_{l=k}^K \eta_{d, k}^{d, l} R_{d, l} + 
            \sum_{t=d + 1}^{\infty} \sum_{l=1}^K \eta_{d, k}^{t, l} R_{t, l} 
            \middle| S_{d, k} = s_{d, k}, A_{d, k} = a_{d, k} }
            p(s_{d, k} | U_{d, k} = u_{d, k}) 
            d s_{d, k} \\
            \le & \int \max_{a_{d, k} \in \cA_k} \cQ_k^{\cS (0)} (s_{d, k}, a_{d, k})
            p(s_{d, k} | U_{d, k} = u_{d, k}) 
            d s_{d, k} \\
            = & \bbE \brce{\max_{a_{d, k} \in \cA_k} \cQ_k^{\cS (0)} (S_{d, k}, a_{d, k}) \middle | U_{d, k} = u_{d, k} } 
            = \bbE \brce{\cV_k^{\cS (1)} (S_{d, k}) \middle | U_{d, k} = u_{d, k} }
            = \cV_k^{\cS (1)} (u_{d, k}),
        \end{split}
    \end{equation}
    where the inequality follows from Jensen's inequality and the last line follows from the definition of $\cQ_k^{\cS (0)}$ and $\cV_k^{\cS (1)}$.
    Remember that Jensen's inequality states $\varphi (\bbE X) \le \bbE \varphi (X)$ for a convex function $\varphi$ and a random vector $X$.
    For the function $\varphi (\bX) = \max_{a \in \cA_k} \{X_a\}$ of the vector $\bX = \{X_a\}_{a \in \cA_k}$, the inequality strictly holds if and only if $P (X_a = \max_{a \in \cA_k} \{X_a\}) < 1$ for all $a$.
    Otherwise, if there exists some $a^*$ s.t. $P (X_{a^*} = \max_{a \in \cA_k} \{X_a\}) = 1$, then $P (X_{a^*} \ge X_a) = 1$ for all $a$ and thus $\bbE X_{a^*} \ge \bbE X_a$.
    In this case, $\max_{a \in \cA_k} \{\bbE X_a\} = \bbE X_{a^*} = \bbE \max_{a \in \cA_k} \{X_a\}$.
    Therefore, the inequality holds strictly in (\ref{equ:non.dbass.itr1}) if and only if $P \{ \cQ_k^{\cS (0)} (S_{d, k}, a) = \max_{a' \in \cA_k} \cQ_k^{\cS (0)} (S_{d, k}, a') | U_{d, k} = u_{d, k} \} < 1$ for all $a \in \cA_k$.
    
    In the second iteration, the value functions are updated as $\cV_K^{\cU (2)} (u_{d, k}) = \max_{a \in \cA_{k}} \cQ_K^{\cU (2)} (u_{d, k}, a)$ and $\cV_k^{\cS (2)} (s_{d, k}) = \max_{a \in \cA_{k}} \cQ_k^{\cS (2)} (s_{d, k}, a)$ with
    \begin{align*}
        \cQ_K^{\cU (2)} (u_{d, K}, a_{d, K}) &= \bbE \brce{R_{d, K} + \gamma_K \cV_{1}^{\cU (1)} (U_{d + 1, 1}) \middle| U_{d, K} = u_{d, K}, A_{d, K} = a_{d, K} }, \\
        \cQ_k^{\cU (2)} (u_{d, k}, a_{d, k}) &= \bbE \brce{R_{d, k} + \gamma_k \cV_{k + 1}^{\cU (2)} (U_{d, k + 1}) \middle| U_{d, k} = u_{d, k}, A_{d, k} = a_{d, k} }, \quad k \in \{1:(K - 1)\} \\
        \cQ_K^{\cS (2)} (s_{d, K}, a_{d, K}) &= \bbE \brce{R_{d, K} + \gamma_K \cV_{1}^{\cS (1)} (S_{d + 1, 1}) \middle| S_{d, K} = s_{d, K}, A_{d, K} = a_{d, K} }, \\
        \cQ_k^{\cS (2)} (s_{d, k}, a_{d, K}) &= \bbE \brce{R_{d, k} + \gamma_k \cV_{k + 1}^{\cS (2)} (S_{d, k + 1}) \middle| S_{d, k} = s_{d, k}, A_{d, k} = a_{d, k} }, \quad k \in \{1:(K - 1)\}.
    \end{align*}
    Notice that for $k = K$,
    \begin{equation*}
        \begin{split}
            & \cV_K^{\cU (2)} (u_{d, K}) 
            = \max_{a_{d, K} \in \cA_{K}} \bbE \brce{R_{d, K} + \gamma_K \cV_{1}^{\cU (1)} (U_{d + 1, 1}) \middle| U_{d, K} = u_{d, K}, A_{d, K} = a_{d, K} } \\
            \le & \max_{a_{d, K} \in \cA_{K}} \bbE \brce{R_{d, K} + \gamma_K \cV_{1}^{\cS (1)} (U_{d + 1, 1}) \middle| U_{d, K} = u_{d, K}, A_{d, K} = a_{d, K} } \\
            = & \max_{a_{d, K} \in \cA_{K}} \int \bbE \brce{R_{d, K} + \gamma_K \cV_{1}^{\cS (1)} (U_{d + 1, 1}) \middle| S_{d, K} = s_{d, K}, U_{d, K} = u_{d, K}, A_{d, K} = a_{d, K} } \cdot \\
            & \kern 27em
            p(s_{d, K} | U_{d, K} = u_{d, K}, A_{d, K} = a_{d, K}) d s_{d, K},
        \end{split}
    \end{equation*}
    where the inequality follows from (\ref{equ:non.dbass.itr1}).
    We have shown that $R_{d, K} \indep U_{d, K} | S_{d, K}, A_{d, K}$ and $S_{d, K} \indep A_{d, K} | U_{d, K}$.
    Using similar arguments as in Step 2 of Section~\ref{sec:proof.markov.transition}, we can show  that $U_{d + 1, 1} \indep U_{d, K} | S_{d, K}, A_{d, K}$.
    Therefore,
    \begin{equation*}
        \begin{split}
            \cV_K^{\cU (2)} (u_{d, K}) 
            \le & \max_{a_{d, K} \in \cA_{K}} \int \bbE \brce{R_{d, K} + \gamma_K \cV_{1}^{\cS (1)} (U_{d + 1, 1}) \middle| S_{d, K} = s_{d, K}, A_{d, K} = a_{d, K} } p(s_{d, K} | U_{d, K} = u_{d, K}) d s_{d, K} \\
            \le & \int \max_{a_{d, K} \in \cA_{K}} \bbE \brce{R_{d, K} + \gamma_K \cV_{1}^{\cS (1)} (U_{d + 1, 1}) \middle| S_{d, K} = s_{d, K}, A_{d, K} = a_{d, K} } p(s_{d, K} | U_{d, K} = u_{d, K}) d s_{d, K},
        \end{split}
    \end{equation*}
    where the second inequality follows from Jensen's inequality.
    Notice that 
    \begin{equation*}
        \begin{split}
            \bbE \brce{R_{d, K} + \gamma_K \cV_{1}^{\cS (1)} (U_{d + 1, 1}) \middle| S_{d, K}, A_{d, K} } 
            = & \bbE \brce{ R_{d, K} + \gamma_K \bbE \brce{ \cV_{1}^{\cS (1)} (S_{d + 1, 1}) \middle| U_{d + 1, 1}, S_{d, K}, A_{d, K} } \middle| S_{d, K}, A_{d, K} } \\
            = & \bbE \brce{ R_{d, K} + \gamma_K \cV_{1}^{\cS (1)} (S_{d + 1, 1}) \middle| S_{d, K}, A_{d, K} }.
        \end{split}
    \end{equation*}
    Therefore, 
    \begin{equation*}
        \begin{split}
            \cV_K^{\cU (2)} (u_{d, K}) 
            \le & \bbE \brce{ \max_{a_{d, K} \in \cA_{K}} \bbE \brce{R_{d, K} + \gamma_K \cV_{1}^{\cS (1)} (S_{d + 1, 1}) \middle| S_{d, K}, A_{d, K} = a_{d, K} } \middle| U_{d, K} = u_{d, K}} \\
            = & \bbE \brce{ \max_{a_{d, K} \in \cA_{K}} \cQ_K^{\cS (2)} (S_{d, K}, a_{d, K} ) \middle| U_{d, K} = u_{d, K}} 
            = \bbE \brce{ \cV_K^{\cS (2)} (S_{d, k}) \middle| U_{d, K} = u_{d, K}}
            = \cV_K^{\cS (2)} (u_{d, K}).
        \end{split}
    \end{equation*}
    Similar as before, the inequality strictly holds if (1) the inequality strictly holds in the last iteration, i.e. $P \{ \cV_{1}^{\cU (1)} (U_{d + 1, 1}) < \cV_{1}^{\cS (1)} (U_{d + 1, 1}) | U_{d, K} = u_{d, K}, A_{d, K} = a_{d, K} \} > 0$ for $a_{d, K} = \argmax_{a \in \cA_K} \cQ_K^{\cU (1)} (u_{d, K}, a)$, or 
    (2) $P \{ \cQ_K^{\cS (2)} (S_{d, K}, a) = \max_{a' \in \cA_K} \cQ_K^{\cS (2)} (S_{d, K}, a') | U_{d, K} = u_{d, K} \} < 1$ for all $a \in \cA_K$.

    For $k = K - 1, \dots, 1$, we also have $\cV_k^{\cU (2)} (u_{d, k}) \le \cV_k^{\cS (2)} (u_{d, k})$.
    The inequality strictly holds if (1) the inequality strictly holds in the last iteration, i.e. $P \{ \cV_{k + 1}^{\cU (2)} (U_{d, k + 1}) < \cV_{k + 1}^{\cS (2)} (U_{d, k + 1}) | U_{d, k} = u_{d, k}, A_{d, k} = a_{d, k} \} > 0$ for $a_{d, k} = \argmax_{a \in \cA_k} \cQ_k^{\cU (2)} (u_{d, k}, a)$, or 
    (2) $P \{ \cQ_k^{\cS (2)} (S_{d, k}, a) = \max_{a' \in \cA_k} \cQ_k^{\cS (2)} (S_{d, k}, a') | U_{d, k} = u_{d, k} \} < 1$ for all $a \in \cA_k$.
    
    Using induction, we can show that 
    \begin{equation*}
        \begin{split}
            \cV_k^{\cU (i)} (u_{d, k}) \le \cV_k^{\cS (i)} (u_{d, k})
        \end{split}
    \end{equation*}
    for all $k$ and all iterations $i = 1, 2, \dots$.
    The convergence of value iteration proved in \citet{hu2014near} can be easily generalized to $K$-periodic MDP with time-dependent discount factor.
    The optimality of value iteration guarantees that
    \begin{align*}
        \lim_{i \to \infty} \cV_k^{\cU (i)} (u_{d, k}) & = \cV_1^{\cU*} (u_{d, k}) \\
        \lim_{i \to \infty} \cV_k^{\cS (i)} (u_{d, k}) & = \cV_1^{\cS*} (u_{d, k}).
    \end{align*}
    Therefore, we have $\cV_1^{\cU*} (u_{1, 1}) \le \cV_1^{\cS*} (u_{1, 1})$.

    We first show the sufficiency of the condition for the inequality to strictly hold. 
    That is, $\cV_k^{\cU*} (u_{d, k}) < \cV_k^{\cS*} (u_{d, k})$ if
    $P \{ \cQ_k^{\cS *} (S_{d, k}, a) = \max_{a' \in \cA_k} \cQ_k^{\cS *} (S_{d, k}, a') | u_{d, k} \} < 1$ for all $a \in \cA_k$.
    To see this, we will first show that if $P \{ \cQ_{k}^{\cS *} (S_{d, k}, a) = \max_{a' \in \cA_k} \cQ_k^{\cS *} (S_{d, k}, a') | u_{d, k} \} < 1$ for some $k$, then there exists an iteration $i$ and time $k'$ s.t. 
    \begin{equation} \label{different.maximizer.all.iters}
        \begin{split}
            P \brce{ \cQ_{k'}^{\cS (i)} (S_{d, k'}, a) = \max_{a' \in \cA_{k'}} \cQ_{k'}^{\cS (i)} (S_{d, k'}, a') \middle| U_{d, k'} = u_{d, k'} } < 1 \text{ for all } a \in \cA_{k'}.
        \end{split}
    \end{equation}
    In fact, if there does not exist such an iteration $i$ and time $k'$ in (\ref{different.maximizer.all.iters}),
    then for all $i$ and $k'$, there exists $a_{k'}^{(i)} \in \cA_k'$ s.t. 
    \[ P \brce{ \cQ_{k'}^{\cS (i)} (S_{d, k'}, a_{k'}^{(i)}) = \max_{a' \in \cA_{k'}} \cQ_{k'}^{\cS (i)} (S_{d, k'}, a') \middle| U_{d, k'} = u_{d, k'} } = 1. \]
    In this case, 
    \[ \cV_{k'}^{\cS (i)} (S_{d, k'}) = \max_{a' \in \cA_{k'}} \cQ_{k'}^{\cS (i)} (S_{d, k'}, a') = \cQ_{k'}^{\cS (i)} (S_{d, k'}, a_{k'}^{(i)}) \]
    almost surely given $U_{d, k'} = u_{d, k'}$.
    Taking $i \to \infty$ and we have $\cV_{k'}^{\cS *} (S_{d, k'}) = \cQ_{k'}^{\cS *} (S_{d, k'}, a_{k'})$ for all $k'$ and some $a_{k'}$ almost surely.
    However, $P \{ \cQ_k^{\cS *} (S_{d, k}, a) = \max_{a' \in \cA_k} \cQ_k^{\cS *} (S_{d, k}, a') | u_{d, k} \} < 1$ is saying that $P \{ \cQ_k^{\cS *} (S_{d, k}, a) = \cV_{k}^{\cS *} (S_{d, k}) | u_{d, k} \} < 1$.
    Then when an inequality holds for some iteration $i$ in (\ref{different.maximizer.all.iters}),
    then the inequality $\cV_k^{\cU (i)} (u_{d, k}) < \cV_k^{\cS (i)} (u_{d, k})$ strictly holds for this iteration $i$.
    Consequently, the limitation $\lim_{i \to \infty} \cV_1^{\cU (i)} (u_{d, k}) < \lim_{i \to \infty} \cV_k^{\cS (i)} (u_{d, k})$ and thus $\cV_k^{\cU*} (u_{k, k}) < \cV_k^{\cS*} (u_{k, k})$ strictly holds.

    Then we show the necessity of the condition.
    That is, if $\cV_k^{\cU*} (u_{k, k}) < \cV_k^{\cS*} (u_{k, k})$, then we have $P \{ \cQ_k^{\cS *} (S_{d, k}, a) = \max_{a' \in \cA_k} \cQ_k^{\cS *} (S_{d, k}, a') | u_{d, k} \} < 1$ for all $a \in \cA_k$.
    If $P \{ \cQ_k^{\cS *} (S_{d, k}, a_k^* (u_{d, k})) = \max_{a' \in \cA_k} \cQ_k^{\cS *} (S_{d, k}, a') | u_{d, k} \} = 1$ for some $a_k^* (u_{d, k}) \in \cA_k$,
    then
    \begin{equation*}
        \begin{split}
            \cV_k^{\cS *} (u_{d, k})
            = & \bbE \brce{ \max_{a' \in \cA_k} \cQ_k^{\cS *} (S_{d, k}, a') \middle| U_{d, k} = u_{d, k}} \\
            = & \bbE \brce{ \cQ_k^{\cS *} (S_{d, k}, a_k^* (u_{d, k})) \middle| U_{d, k} = u_{d, k}} \\
            = & \bbE \brce{ \bbE \brce{R_{d, k} (a_k^* (u_{d, k})) + \gamma_k \cV_{k + 1}^{\cS *} (S_{d, k + 1} (a_k^* (u_{d, k}))) \middle| S_{d, k} } \middle| U_{d, k} = u_{d, k}} \\
            = & \bbE \brce{ R_{d, k} (a_k^* (u_{d, k})) + \gamma_k \cV_{k + 1}^{\cS *} (S_{d, k + 1} (a_k^* (u_{d, k}))) \middle| U_{d, k} = u_{d, k}} \\
            \le & \max_{a_{k} \in \cA_{k}} \bbE \brce{ R_{d, k} + \gamma_k \cV_{k + 1}^{\cS *} (S_{d, k + 1} ) \middle| U_{d, k} = u_{d, k}, A_{d, k} = a_{k}} 
            = \cV_k^{\cU *} (u_{d, k})
        \end{split}
    \end{equation*}
    for $k < K$. Similarly, $\cV_K^{\cS *} (u_{d, K}) \le \cV_K^{\cU *} (u_{d, K})$ for $k = K$.
    We have proved that $\cV_k^{\cU*} (u_{d, k}) \le \cV_k^{\cS*} (u_{d, k})$, so we have $\cV_k^{\cS *} (u_{d, k}) = \cV_k^{\cU *} (u_{d, k})$.
    However, this violates the condition that $\cV_k^{\cU*} (u_{k, k}) < \cV_k^{\cS*} (u_{k, k})$.
    Therefore, we must have $P \{ \cQ_k^{\cS *} (S_{d, k}, a) = \max_{a' \in \cA_k} \cQ_k^{\cS *} (S_{d, k}, a') | u_{d, k} \} < 1$ for all $a \in \cA_k$.

    \textbf{Step 2.} 
    Show that if $\{U_{d, k}\}_{\cI}$ is a D-BaSS, then $\cV_1^{\cU*} (u_{1, 1}) = \cV_1^{\cS*} (u_{1, 1})$ for all $u_{1, 1} \in \cU_1$.

    By definition, there exists a function $f_{d, k} (\cdot)$ s.t. $f_{d, k} (U_{d, k}) = S_{d, k}$ for all $(d, k) \in \cI$.
    Then we can define a policy $\bpi^{\cU} := \{\pi^{\cU}_{1:K}\}$, where $\pi^{\cU}_k (u_{d, k}) = \pi^{\cS*}_k (f_{d, k} (u_{d, k}))$ for all $u_{d, k} \in \cU_k$.
    By Bellman equation, 
    \begin{equation*}
        \begin{split}
            \cV_1^{\bpi^{\cU}} (u_{1, 1})
            = & \bbE^{\bpi^{\cU}} \brce{R_{1, 1} + \gamma_1 \cV_2^{\bpi^{\cU}} (U_{1, 2}) \middle| U_{1, 1} = u_{1, 1}} \\
            = & \sum_{a_{1, 1} \in \cA_1} \pi^{\cU}_1 (a_{1, 1}; u_{1, 1}) \bbE \brce{R_{1, 1} + \gamma_1 \cV_2^{\bpi^{\cU}} (U_{1, 2}) \middle| U_{1, 1} = u_{1, 1}, A_{1, 1} = a_{1, 1}}.
        \end{split}
    \end{equation*}
    Using similar arguments as in Step 1 and Fubini's theorem,
    \begin{equation*}
        \begin{split}
            \cV_1^{\bpi^{\cU}} (u_{1, 1})
            = & \sum_{a_{1, 1} \in \cA_1} \pi^{\cU}_1 (a_{1, 1}; u_{1, 1}) \int \bbE \brce{R_{1, 1} + \gamma_1 \cV_2^{\bpi^{\cU}} (U_{1, 2}) \middle| s_{1, 1}, u_{1, 1}, a_{1, 1} } p(s_{1, 1} | U_{1, 1} = u_{1, 1}, A_{1, 1} = a_{1, 1}) d s_{1, 1} \\
            = & \int \sum_{a_{1, 1} \in \cA_1} \pi^{\cU}_1 (a_{1, 1}; u_{1, 1}) \bbE \brce{R_{1, 1} + \gamma_1 \cV_2^{\bpi^{\cU}} (U_{1, 2}) \middle| s_{1, 1}, a_{1, 1} } p(s_{1, 1} | U_{1, 1} = u_{1, 1}) d s_{1, 1}.
        \end{split}
    \end{equation*}
    According to the definition of $\bpi^{\cU}$,
    \begin{equation*}
        \begin{split}
            \cV_1^{\bpi^{\cU}} (u_{1, 1})
            = & \bbE \brce{ \sum_{a_{1, 1} \in \cA_1} \pi^{\cS^*}_1 (a_{1, 1}; f_{1, 1} (u_{1, 1})) \bbE \brce{R_{1, 1} + \gamma_1 \cV_2^{\bpi^{\cU}} (U_{1, 2}) \middle| S_{1, 1}, a_{1, 1} } \middle| U_{1, 1} = u_{1, 1}} \\
            = & \bbE \brce{ \bbE^{\bpi^{\cS*}} \brce{R_{1, 1} + \gamma_1 \cV_2^{\bpi^{\cU}} (U_{1, 2}) \middle| S_{1, 1} } \middle| U_{1, 1} = u_{1, 1}} \\
            = & \bbE^{\bpi^{\cS*}} \brce{R_{1, 1} + \gamma_1 \cV_2^{\bpi^{\cU}} (U_{1, 2}) \middle| U_{1, 1} = u_{1, 1}}.
        \end{split}
    \end{equation*}
    Using induction we can show that 
    \begin{equation*}
        \begin{split}
            \cV_1^{\bpi^{\cU}} (u_{1, 1})
            = \bbE^{\bpi^{\cS*}} \brce{R_{1, 1} + \gamma_1 R_{1, 2} + \dots \middle| U_{1, 1} = u_{1, 1}}
            = \cV_1^{\bpi^{\cS^*}} (u_{1, 1})
            = \cV_1^{\cS*} (u_{1, 1}).
        \end{split}
    \end{equation*}
    Hence, we have $\cV_1^{\cS*} (u_{1, 1}) = \cV_1^{\bpi^{\cU}} (u_{1, 1}) \le \cV_1^{\cU*} (u_{1, 1})$.
    On the other hand, Step 1 has shown that $\cV_1^{\cU*} (u_{1, 1}) \le \cV_1^{\cS*} (u_{1, 1})$.
    Therefore, $\cV_1^{\cU*} (u_{1, 1}) = \cV_1^{\cS*} (u_{1, 1})$.
\end{proof}

\subsection{Proof of Corollary~\ref{lem:state.construction.HS}} \label{sec:proof.example.dag}

\begin{proof}
    The state $S_{d, k}$ is a D-BaSS since $S_{d, k} \in \cH_{d, k}$, and $R_{d:\infty} \indep \cH_{d, k} | S_{d, k}, A_{d, k}$.
    This can be seen from the conditional distribution of $R_d$ given all the observed data before time $k$ in bag $d$
    \begin{equation} \label{equ:Rd.conditional.distribution}
        \bbP (R_d | S_{d, k}, A_{d, k}, \cH_{d, k})
        = \bbP (R_d | E_{d - 1}, R_{d - 1}, M_{d, 1:(k-1)}, N_{d, 1:(k - 1)}, C_{d, k}, A_{d, k}).
    \end{equation}
    The state $S^{\prime}_{d, k}$ is not a D-BaSS since $N_{d, 1:(k - 1)} \in \cH_{d, k}$, but $R_{d, k} \nindep N_{d, 1:(k - 1)} | S^{\prime}_{d, k}, A_{d, k}$.
    Similarly, $S^{\prime \prime}_{d, k}$ is also not a D-BaSS.
    
    Assume the minimal D-BaSS is $\{B_{d, k}\}_{\cI}$ for some vector $B_{d, k}$, where $B_{d, k} \in \cB_k$.
    Theorem~\ref{thm:best.state} implies that $\cV_k^{\cS^{\prime}*} (s^{\prime}_{d, k}) < \cV_k^{\cB*} (s^{\prime}_{d, k})$ if and only if $P \{ \cQ_k (B_{d, k}, a) = \max_{a' \in \cA_k} \cQ_k (B_{d, k}, a') | s^{\prime}_{d, k} \} < 1$ for all $a \in \cA_k$.    
    On the other hand, Theorem~\ref{thm:best.state} also implies that $\cV_k^{\cS*} (s^{\prime}_{1, 1}) = \cV_k^{\cB*} (s^{\prime}_{d, k})$ since $\{S_{d, k}\}_{\cI}$ is a D-BaSS.
    Besides, since there exists a function $f_{d, k} (\cdot)$ s.t. $f_{d, k} (S_{d, k}) = B_{d, k}$, by defining $\pi^{\cS}_k (s_{d, k}) = \pi^{\cB*}_k (f_{d, k} (s_{d, k}))$, we have that $\pi^{\cS}_k$ is the optimal policy according to Theorem~\ref{thm:best.state}.
    Thus we can denote $\pi^{\cS}_k$ as $\pi^{\cS*}_k$.
    Besides, $P \{ \cQ_k (S_{d, k}, a) = \max_{a' \in \cA_k} \cQ_k (S_{d, k}, a') | s^{\prime}_{d, k} \} < 1$ for all $a \in \cA_k$.

    The results for $\cS^{\prime}_k$ and $\cS^{\prime \prime}_k$ can be proved similarly, since we have $R_{d:\infty} \indep S^{\prime \prime}_{d, k} | S^{\prime}_{d, k}, A_{d, k}$ and $S^{\prime \prime}_{d, k + 1} \indep S^{\prime \prime}_{d, k} | S^{\prime}_{d, k}, A_{d, k}$.
    This is equivalent to assuming that $N_{d, k}$ is not observed.
\end{proof}

\subsection{Misspecified DAG Assumptions} \label{sec:misspecified.assumption}

Figure~\ref{fig:dag} assumes that $C_{d, 1:K}$ are exogenous variables and that $E_d$ is conditionally independent of $R_{d - 1}$ given $E_{d - 1}, A_{d, 1:K}$.
However, Algorithm~\ref{alg:brlsvi} is robust to these assumptions even if they are violated.
The policy is updated using a model-free method, stationary RLSVI, which only depend on the state space through the model of the Q-functions in (\ref{equ:q.function}), but does not depend on the dynamics of variables.

\begin{lem}
    The state vector $S_{d, k}$ remain unchanged when $C_{d, 1:K}$ is directly dependent on $R_{d - 1}, E_{d - 1}$ and when $E_d$ is directly dependent on $R_{d - 1}$.
\end{lem}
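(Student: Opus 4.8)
The plan is to show that the vector $S_{d,k}$ continues to be the minimal D-BaSS after the two augmentations — the edges $R_{d-1}, E_{d-1} \to C_{d,j}$ for $j \in \{1:K\}$, and the edge $R_{d-1} \to E_d$ — are added to Figure~\ref{fig:dag}. By the characterization of a D-BaSS used in the proof of Corollary~\ref{lem:state.construction.HS}, this reduces to verifying two things in the modified DAG: (i) \emph{sufficiency}, $R_{d:\infty} \indep \cH_{d,k} \mid S_{d,k}, A_{d,k}$, and (ii) \emph{minimality}, that no coordinate of $S_{d,k}$ can be dropped without destroying sufficiency. The single structural fact driving everything is that $R_{d-1}, E_{d-1} \in S_{d,k}$ for every $k$; the augmentations only enlarge the parent set of each $C_{d,j}$ to $\{R_{d-1}, E_{d-1}\}$ and add $R_{d-1}$ to the parents of $E_d$.

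For sufficiency I would mirror the conditional-distribution computation in the proof of Corollary~\ref{lem:state.construction.HS}. Simulating the trajectory forward from time $(d,k)$ — that is, generating $C_{d,(k+1):K}$, $A_{d,k:K}$, $M_{d,k:K}$, $N_{d,k:K}$, $E_d$, $R_d$, and the variables of all later bags — the only quantities from the past entering the local conditionals are $E_{d-1}$ and $R_{d-1}$ (now also parents of the contexts and of $E_d$), the already-realized $M_{d,1:(k-1)}$ and $N_{d,1:(k-1)}$, the current context $C_{d,k}$, and the conditioned action $A_{d,k}$; every one of these belongs to $S_{d,k} \cup \{A_{d,k}\}$. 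Since the new context parents $R_{d-1}, E_{d-1}$ and the new $E_d$-parent $R_{d-1}$ are already in the state, the added edges introduce no dependence on the remaining history, so $\bbP(R_{d:\infty}\mid \cH_{d,k}, A_{d,k})$ still depends on $\cH_{d,k}$ only through $S_{d,k}$. Equivalently, one may argue by d-separation: any path that uses a new edge crosses $R_{d-1}$ or $E_{d-1}$ along an edge pointing \emph{out} of that node, so the node occurs as a chain or fork and, being in $S_{d,k}$, blocks the path. The hard part will be the collider bookkeeping, because conditioning on $C_{d,k}$ — now a common child of $R_{d-1}$ and $E_{d-1}$ — opens $R_{d-1}\to C_{d,k}\leftarrow E_{d-1}$, and conditioning on the mediators $M_{d,j}, N_{d,j}$ ($j<k$) similarly activates colliders; I would dispose of these by noting that every opened segment terminates at $R_{d-1}$ or $E_{d-1}$, both conditioned non-colliders, and hence cannot be extended toward any future reward.

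For minimality I would use that augmenting a DAG only creates dependencies and never removes them, so a conditional independence that fails in the original DAG still fails after augmentation. Each coordinate of $S_{d,k}$ is necessary in the original DAG through an active path to a future reward that the new edges leave untouched — for instance $M_{d,j}\to R_d$, $N_{d,j}\to E_d\to R_d$, $R_{d-1}\to R_d$, $E_{d-1}\to E_d\to R_d$, and $C_{d,k}\to M_{d,k}\to R_d$ — and these paths persist, so no coordinate becomes removable. Combined with sufficiency, which shows that no coordinate need be added, this identifies the minimal D-BaSS of the modified DAG as exactly $S_{d,k}$. The robustness of Algorithm~\ref{alg:brlsvi} then follows, since stationary RLSVI interacts with the environment only through the state $S_{d,k}$ and not through the dynamics of the individual variables.
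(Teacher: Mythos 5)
Your proposal is correct and follows essentially the same route as the paper: both arguments reduce to checking that the defining conditional independence $\bbP (R_d \mid S_{d, k}, A_{d, k}, \cH_{d, k}) = \bbP (R_d \mid E_{d - 1}, R_{d - 1}, M_{d, 1:(k-1)}, N_{d, 1:(k - 1)}, C_{d, k}, A_{d, k})$ survives the added arrows, which holds because every new edge emanates from $R_{d-1}$ or $E_{d-1}$ --- already coordinates of $S_{d,k}$ --- so the newly created colliders (e.g.\ $C_{d,k}$ as a common child of $R_{d-1}$ and $E_{d-1}$) do not open any path from the history to future rewards. The only differences are cosmetic: you additionally verify minimality, which the paper's two-sentence proof omits, and you add the edge $R_{d-1} \to E_d$ literally, whereas the paper realizes that dependence through arrows $R_{d-1} \to N_{d,1:K}$; neither change affects correctness.
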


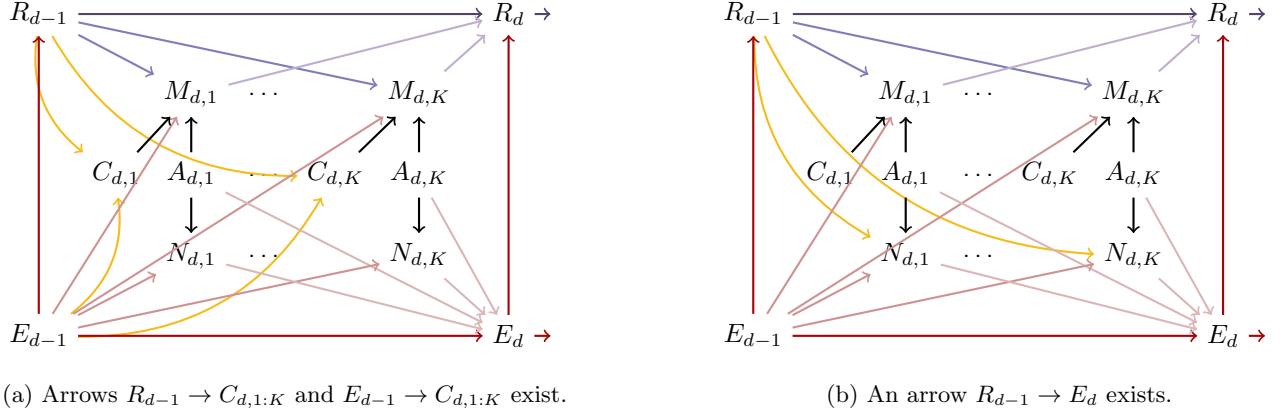
\begin{figure}[t]
    \centering
    \begin{subfigure}{0.46\textwidth}
        \centering
        \begin{tikzpicture}[->, thick, main/.style={font=\sffamily}]
        \matrix [column sep=0.08cm, row sep=0.5cm] {
            \node[main] (R0) {$R_{d-1}$};
            & & & & & & & & & 
            & \node[main] (RK) {$R_{d}$}; 
            & & & \node[main] (R6) {}; \\
            & & \node[main] (M1) {$M_{d, 1}$};
            & & \node[main] (M2) {\dots};
            & & \node[main] (MK) {$M_{d, K}$};
            & \\
            & \node[main] (C1) {$C_{d, 1}$}; & \node[main] (A1) {$A_{d, 1}$};
            & & \node[main] (A2) {\dots};
            & \node[main] (CK) {$C_{d, K}$}; & \node[main] (AK) {$A_{d, K}$};
            & \\
            & & \node[main] (N1) {$N_{d, 1}$};
            & & \node[main] (N2) {\dots};
            & & \node[main] (NK) {$N_{d, K}$};
            & \\
            \node[main] (E0) {$E_{d-1}$};
            & & & & & & & & & 
            & \node[main] (EK) {$E_{d}$}; 
            & & & \node[main] (E6) {}; \\
        };
        \path[every node/.style={font=\sffamily}]
            (R0) edge [bend right, color8] node [right] {} (C1)
            (R0) edge [bend right, color8] node [right] {} (CK)
            (E0) edge [bend right, color8] node [right] {} (C1)
            (E0) edge [bend right, color8] node [right] {} (CK)
            (R0) edge [color1] node [right] {} (RK)
            (RK) edge [color1] node [right] {} (R6)
            (E0) edge [color6] node [right] {} (EK)
            (EK) edge [color6] node [right] {} (E6)
            (E0) edge [color6] node [right] {} (R0)
            (EK) edge [color6] node [right] {} (RK)
            (R0) edge [color2] node [right] {} (M1)
            (R0) edge [color2] node [right] {} (MK)
            (M1) edge [color3] node [right] {} (RK)
            (MK) edge [color3] node [right] {} (RK)
            (E0) edge [color4] node [right] {} (M1)
            (E0) edge [color4] node [right] {} (MK)
            (A1) edge [color5] node [right] {} (EK)
            (AK) edge [color5] node [right] {} (EK)
            (E0) edge [color4] node [right] {} (N1)
            (E0) edge [color4] node [right] {} (NK)
            (N1) edge [color5] node [right] {} (EK)
            (NK) edge [color5] node [right] {} (EK)
            (A1) edge node [right] {} (M1)
            (AK) edge node [right] {} (MK)
            (C1) edge node [right] {} (M1)
            (CK) edge node [right] {} (MK)
            (A1) edge node [right] {} (N1)
            (AK) edge node [right] {} (NK);
        \end{tikzpicture}
        \caption{Arrows $R_{d - 1} \to C_{d, 1:K}$ and $E_{d - 1} \to C_{d, 1:K}$ exist.}
        \label{subfig:dag.RC}
    \end{subfigure}
    \hfill
    \begin{subfigure}{0.46\textwidth}
        \centering
        \begin{tikzpicture}[->, thick, main/.style={font=\sffamily}]
        \matrix [column sep=0.08cm, row sep=0.5cm] {
            \node[main] (R0) {$R_{d-1}$};
            & & & & & & & & & 
            & \node[main] (RK) {$R_{d}$}; 
            & & & \node[main] (R6) {}; \\
            & & \node[main] (M1) {$M_{d, 1}$};
            & & \node[main] (M2) {\dots};
            & & \node[main] (MK) {$M_{d, K}$};
            & \\
            & \node[main] (C1) {$C_{d, 1}$}; & \node[main] (A1) {$A_{d, 1}$};
            & & \node[main] (A2) {\dots};
            & \node[main] (CK) {$C_{d, K}$}; & \node[main] (AK) {$A_{d, K}$};
            & \\
            & & \node[main] (N1) {$N_{d, 1}$};
            & & \node[main] (N2) {\dots};
            & & \node[main] (NK) {$N_{d, K}$};
            & \\
            \node[main] (E0) {$E_{d-1}$};
            & & & & & & & & & 
            & \node[main] (EK) {$E_{d}$}; 
            & & & \node[main] (E6) {}; \\
        };
        \path[every node/.style={font=\sffamily}]
            (R0) edge [bend right, color8] node [right] {} (N1)
            (R0) edge [bend right, color8] node [right] {} (NK)
            (R0) edge [color1] node [right] {} (RK)
            (RK) edge [color1] node [right] {} (R6)
            (E0) edge [color6] node [right] {} (EK)
            (EK) edge [color6] node [right] {} (E6)
            (E0) edge [color6] node [right] {} (R0)
            (EK) edge [color6] node [right] {} (RK)
            (R0) edge [color2] node [right] {} (M1)
            (R0) edge [color2] node [right] {} (MK)
            (M1) edge [color3] node [right] {} (RK)
            (MK) edge [color3] node [right] {} (RK)
            (E0) edge [color4] node [right] {} (M1)
            (E0) edge [color4] node [right] {} (MK)
            (A1) edge [color5] node [right] {} (EK)
            (AK) edge [color5] node [right] {} (EK)
            (E0) edge [color4] node [right] {} (N1)
            (E0) edge [color4] node [right] {} (NK)
            (N1) edge [color5] node [right] {} (EK)
            (NK) edge [color5] node [right] {} (EK)
            (A1) edge node [right] {} (M1)
            (AK) edge node [right] {} (MK)
            (C1) edge node [right] {} (M1)
            (CK) edge node [right] {} (MK)
            (A1) edge node [right] {} (N1)
            (AK) edge node [right] {} (NK);
        \end{tikzpicture}
        \caption{Arrows $R_{d - 1} \to N_{d, 1:K}$ exist.}
        \label{subfig:dag.RE}
    \end{subfigure}
    \caption{Causal DAG when Figure~\ref{fig:dag} is misspecified. The arrows pointing to the actions $A_{d, 1:K}$ are omitted.}
    \label{fig:dag.misspecified}
\end{figure}

\begin{proof}
    If arrows $R_{d - 1} \to C_{d, 1:K}$ and $E_{d - 1} \to C_{d, 1:K}$ exist as in Figure~\ref{subfig:dag.RC}, $C_{d, k}$ becomes a collider of $E_{d - 1}$ and $R_{d - 1}$. 
    However, the conditional independence in (\ref{equ:Rd.conditional.distribution}) still holds.
    Similarly, if arrows $R_{d - 1} \to N_{d, 1:K}$ exist as in Figure~\ref{subfig:dag.RE}, $N_{d, k}$ becomes a collider of $E_{d - 1}$, $R_{d - 1}$ and $A_{d, k}$, which does not affect the conditional independence in (\ref{equ:Rd.conditional.distribution}).
\end{proof}

\section{IMPLEMENTATION DETAILS} \label{sec:algorithm.detail}

Algorithm~\ref{alg:brlsvi} focuses on the case when each Q-function is learned separately.
In Algorithm~\ref{alg:brlsvi2}, we present the implementation details from Section~\ref{sec:experiment} where the main effect is learned using pooled data across all decision times.
For $t \in \{1:(d - 1)\}$, let
\begin{equation} \label{equ:brlsvi2.variables}
\begin{split}
    \bX_{t, k} &= \phi (\widetilde{S}^{\prime}_{t, k}, A_{t, k}), \\
    Y_{t, k} &= 
    \begin{cases}
        \max_{a \in \cA_{k + 1}} \phi (\widetilde{S}^{\prime}_{t, k + 1}, a)^T \widetilde{\bbeta}_{d - 1} 
        & \text{if } k < K, \\
        R_{t} + \bar{\gamma} \max_{a \in \cA_1} \phi (\widetilde{S}^{\prime}_{t + 1, 1}, a)^T \widetilde{\bbeta}_{d - 1}
        & \text{if } k = K,
    \end{cases}
\end{split}
\end{equation}
and construct
\begin{align*}
    \Xb_{d} &= [\bX_{1:(d - 1), 1}, \dots, \bX_{1:(d - 1), K}]^T, \\
    \Yb_{d} &= [Y_{1:(d - 1), 1}, \dots, Y_{1:(d - 1), K}]^T.
\end{align*}
Fit a Bayesian linear regression for the optimal Q-functions.
Then the posterior of $\bbeta_{d}$ is given by a normal distribution $N(\bmu_{d}, \Sigma_{d})$, where
\begin{equation} \label{equ:update.beta.normal2}
    \begin{split}
        \bSigma_{d} &= \prth{\Xb_{d}^T \Xb_{d} / \sigma^2 + \lambda_{d} \Ib}^{-1}, \\
        \bmu_{d} &= \bSigma_{d} (\Xb_{d}^T \Yb_{d} / \sigma^2),
    \end{split}
\end{equation}
and $\Ib$ is the identity matrix.


\begin{algorithm}[tb]
    \caption{Bagged RLSVI with Pooling Across Decision Times}
    \label{alg:brlsvi2}
    \textbf{Input}: Hyperparameters $L, \lambda_{d}, \sigma^2$.
    
    \begin{algorithmic}[1] 
    \STATE Warm-up: Take actions $A_{d, k} \sim \text{Bernoulli} (0.5)$ in bag $d \in \{1:L\}$ for $k \in \{1:K\}$.
    \FOR{$d \ge L + 1$}
        \FOR{$k = K, \dots, 1$}
            \STATE Construct $\bX_{1:(d - 1), k}$ and $Y_{1:(d - 1), k}$ with $\widetilde{\bbeta}_{d - 1}$ using (\ref{equ:brlsvi2.variables}).
        \ENDFOR
        \STATE Obtain $\bmu_{d}, \Sigma_{d}$ using (\ref{equ:update.beta.normal2}). Draw $\widetilde{\bbeta}_{d} \sim N(\bmu_{d}, \Sigma_{d})$. 
        \FOR{$k = 1, \dots, K$}
            \STATE Observe $H_{d, k}$ and construct $\widetilde{S}^{\prime}_{d, k}$. 
            \STATE Take $A_{d, k} = \argmax_{a \in \cA_k} \phi (\widetilde{S}^{\prime}_{d, k}, a)^T \widetilde{\bbeta}_{d}$.
        \ENDFOR
        \STATE Observe $M_{d, K}$, $E_d$, and $R_d$.
    \ENDFOR
    \end{algorithmic}
\end{algorithm}

In standard RL algorithms, we need to wait until the first state vector $S_{d, 1}$ on day $d$ is observed before updating the parameter $\widetilde{\bbeta}_{d}$.
However, in our problem, since $S_{d, 1}$ contains only three variables $E_{d - 1}, R_{d - 1}, C_{d, 1}$ and $C_{d, 1}$ is exogenous, we can randomly sample $\widetilde{C}_{d, 1}$ from $C_{1:(d - 1), 1:K}$ and update the policy at the end of day $d - 1$.

The discount factor is taken as $\bar{\gamma}= 0.99$ for BRLSVI, SRLSVI, and finite-horizon RLSVI.
The tuning parameter $L$ is set to 7 for all algorithms to mimic a real clinical trial scenario, where a user receives randomized notifications over the first week.

The hyperparameters $\lambda_d$ and $\sigma^2$ are set to be identical across all testbed variants for each algorithm.
Specifically, since $\lambda_d \times \sigma^2$ is equivalent to the tuning parameter of an $L_2$ penalty, we fix the value of $\tau_d = \lambda_d \times \sigma^2$.
The values of $\sigma^2$ and $\tau_d$ are set as $\sigma^2 = 0.005$, $\tau_d = 5d$ for Algorithm~\ref{alg:brlsvi2}, $\sigma^2 = 1$, $\tau_d = 10$ for Algorithm~\ref{alg:srlsvi}, $\sigma^2 = 0.005$, $\tau_d = 2d$ for Algorithm~\ref{alg:rlsvi}, and $\sigma^2 = 0.2$, $\tau_d = 2$ for Algorithm~\ref{alg:ts}.
Notice that $\tau_d$ is a function of bag $d$ for Algorithms~\ref{alg:brlsvi2} and~\ref{alg:rlsvi}.
This parameter is set to be adaptive since $\Xb_{d}^T \Xb_{d}$ is increasing as the number of bags increases, while $\lambda_{d} \Ib$ remains constant.
This becomes problematic when the actions in the history are highly imbalanced, meaning that either most actions are zeros or most are ones, as this can lead $\Xb_{d}^T \Xb_{d}$ to be ill-conditioned.
This is consistent with the implementation of regularized linear regression in the \texttt{statsmodels} package \citep{statsmodels}, which minimizes the squared error divided by the sample size.
Experiments have shown that a constant $\tau_d$ is better for Algorithm~\ref{alg:srlsvi}.
Although these hyperparameters may not be optimal for each specific testbed variant, our objective is to demonstrate that there exists a set of hyperparameters that is near-optimal across all testbed variants.
This approach ensures that, regardless of the specific environment which might represent a user in the new trial, such a set of hyperparameters will be effective.

When running on a cloud server, BRLSVI takes 129 seconds on average for one CPU core to run an iteration for 42 users sequentially, while SRLSVI takes 153 seconds, finite-horizon RLSVI takes 159 seconds, RAND takes 80 seconds, and TS takes 96 seconds.

\section{ADDITIONAL SIMULATION RESULTS} \label{sec:additional.simulation}

In this section, we compare the states 
\begin{equation} \label{equ:state.comparison}
    \begin{split}
        S^{\prime}_{d, k} &= [E_{d - 1}, R_{d - 1}, M_{d, 1:(k-1)}, A_{d, 1:(k-1)}, C_{d, k}], \\
        S^{\prime \prime}_{d, k} &= [E_{d - 1}, R_{d - 1}, A_{d, 1:(k-1)}], \\
        S^{\prime \prime \prime}_{d, k} &= [E_{d - 1}, R_{d - 1}, C_{d, 1:(k-1)}, A_{d, 1:(k-1)}, C_{d, k}], 
    \end{split}
\end{equation}
in the experiment setting in Section~\ref{sec:experiment}.
According to Theorem~\ref{thm:best.state}, a D-BaSS will yield a higher optimal value function than another state if no single action is dominant for all values of the D-BaSS given the other state.
Therefore, to demonstrate the advantage of using mediators $M_{d, 1:(k-1)}$ in $S^{\prime}_{d, k}$ compared to using $C_{d, 1:(k-1)}$ in $S^{\prime \prime \prime}_{d, k}$, we create another testbed variant where there is an interaction effect between $M_{d, j}$ and $A_{d, k}$ on $M_{d, k}$ for $j < k$ (see Appendix~\ref{sec:interaction.effect.MA}).
The basis functions are defined as
\begin{align*}
    \begin{split}
        \phi (S^{\prime}_{d, k}, A_{d, k})^T
        := & [
        1, k,
        E_{d - 1}, k E_{d - 1}, 
        R_{d - 1}, k R_{d - 1},
        \widetilde{M}_{d, 1:(K - 1)},
        \widetilde{A}_{d, 1:(K - 1)},
        C_{d, k}
        ] 
        \\
        & \frown \bbone (k = 1) A_{d, k} [1, E_{d - 1}, R_{d - 1}, C_{d, 1}]
        \frown \bbone (k = 2) A_{d, k} [1, E_{d - 1}, R_{d - 1}, C_{d, 2}, M_{d, 1}, A_{d, 1}]
        \frown \dots \\
        & \frown \bbone (k = K) A_{d, k} [1, E_{d - 1}, R_{d - 1}, C_{d, K}, M_{d, 1:(K-1)}, A_{d, 1:(K-1)}],
    \end{split}
    \\
    \begin{split}
        \phi (S^{\prime \prime}_{d, k}, A_{d, k})^T
        := & [
        1, k,
        E_{d - 1}, k E_{d - 1}, 
        R_{d - 1}, k R_{d - 1},
        \widetilde{A}_{d, 1:(K - 1)}
        ] 
        \\
        & \frown \bbone (k = 1) A_{d, k} [1, E_{d - 1}, R_{d - 1}]
        \frown \bbone (k = 2) A_{d, k} [1, E_{d - 1}, R_{d - 1}, A_{d, 1}]
        \frown \dots \\
        & \frown \bbone (k = K) A_{d, k} [1, E_{d - 1}, R_{d - 1}, A_{d, 1:(K-1)}],
    \end{split}
    \\
    \begin{split}
        \phi (S^{\prime \prime \prime}_{d, k}, A_{d, k})^T
        := & [
        1, k,
        E_{d - 1}, k E_{d - 1}, 
        R_{d - 1}, k R_{d - 1},
        \widetilde{C}_{d, 1:(K - 1)},
        \widetilde{A}_{d, 1:(K - 1)},
        C_{d, k}
        ] 
        \\
        & \frown \bbone (k = 1) A_{d, k} [1, E_{d - 1}, R_{d - 1}, C_{d, 1}]
        \frown \bbone (k = 2) A_{d, k} [1, E_{d - 1}, R_{d - 1}, C_{d, 2}, C_{d, 1}, A_{d, 1}]
        \frown \dots \\
        & \frown \bbone (k = K) A_{d, k} [1, E_{d - 1}, R_{d - 1}, C_{d, K}, C_{d, 1:(K-1)}, A_{d, 1:(K-1)}],
    \end{split}
\end{align*}
where $\widetilde{M}_{d, j} = \bbone (j < k) M_{d, j}$, $\widetilde{A}_{d, j} = \bbone (j < k) A_{d, j}$, and $\widetilde{C}_{d, j} = \bbone (j < k) C_{d, j}$.
The tuning parameters are set at $\sigma^2 = 0.005$, $\tau_d = 5d$.

\begin{figure}[t]
    \centering
    \includegraphics[width=0.35\textwidth]{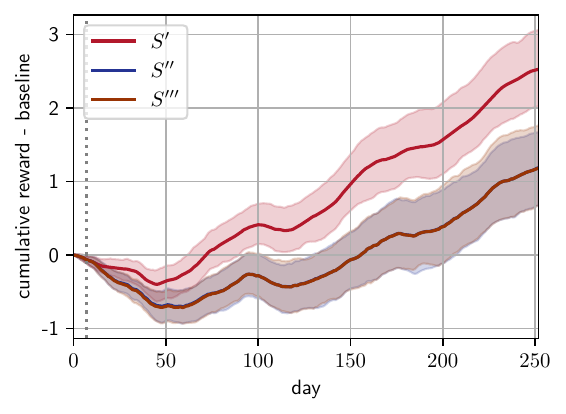}
    \caption{
    The average cumulative rewards of BRLSVI learned with states $S^{\prime}$, $S^{\prime \prime}$, or $S^{\prime \prime \prime}$ in (\ref{equ:state.comparison}), each subtracting the average cumulative rewards of the zero policy.
    The vertical dotted line represents the end of the warm-up period.
    }
    \label{fig:simulation.compare.states}
\end{figure}

The average cumulative reward for BRLSVI based on $S^{\prime}$, $S^{\prime \prime}$, or $S^{\prime \prime \prime}$ subtracting the average cumulative reward of the zero policy is plotted in Figure~\ref{fig:simulation.compare.states}.
We can see that $S^{\prime}_{d, k}$ yields a higher cumulative reward than $S^{\prime \prime \prime}_{d, k}$, since $M_{d, 1:(k-1)}$ is able to explain part of the noise on the causal path from $C_{d, 1:(k-1)}$ to the future rewards.
If the mediators $M_{d, 1:(k-1)}$ are not observed, we can only choose the action $A_{d, k}$ based on the conditional distribution of $S^{\prime}_{d, k}$ given $S^{\prime \prime \prime}_{d, k}$, rather than the observed values of $S^{\prime}_{d, k}$.
On the other hand, $S^{\prime \prime}_{d, k}$ and $S^{\prime \prime \prime}_{d, k}$ have similar cumulative rewards.
Although $S^{\prime \prime}_{d, k}$ contains less information than $S^{\prime \prime \prime}_{d, k}$, it has a smaller dimension.
Therefore, even though the true optimal value function based on $S^{\prime \prime}_{d, k}$ is smaller than that of $S^{\prime \prime \prime}_{d, k}$, learning a policy with $S^{\prime \prime}_{d, k}$ has a higher sample efficiency.

\section{BASELINE ALGORITHMS} \label{sec:baseline}

In this section, we specify the implementation details of the baseline algorithms discussed in Section~\ref{sec:simulation.results}.

\subsection{Stationary RLSVI} \label{sec:srlsvi}

The $K$ actions within a bag can be treated as a $K$-dimensional action, so that algorithms designed for a stationary MDP can be utilized at the bag level.
However, since we need to choose the $K$ actions simultaneously, variables at each decision time like $C_{d, k}, M_{d, k - 1}, A_{d, k - 1}$ cannot be used for decision making.
The variables available at the beginning of a bag are $E_{d - 1}$ and $R_{d - 1}$.
Therefore, the bag state is defined as $S_d = [E_{d - 1}, R_{d - 1}] \in \cS$.
The bag action is defined as $A_d = [A_{1:K}]$, which has $2^K$ different values when $\cA_k = \{0, 1\}$ for all $k$.
Denote the action space as $\cA := \cA_1 \times \dots \times \cA_K$.
For a policy $\pi: \cA \mapsto \cS$, the Q-function is
\begin{equation*}
    \cQ^{\pi} (s_{d}, a_{d}) = \bbE^{\bpi} \brce{\sum_{t=d}^{\infty} \bar{\gamma}^{t - 1} R_t \middle| S_{d} = s_{d}, A_{d} = a_{d}}
\end{equation*}
for $s_d \in \cS, a_d \in \cA$.
The Bellman optimality equation is 
\begin{equation*}
    \begin{split}
        \cQ^* (s_{d}, a_{d}) 
        & = \bbE \{ R_{d} + \bar{\gamma} \max_{a_{d + 1} \in \cA}
        \cQ^* (S_{d + 1}, a_{d + 1}) | 
        S_{d} = s_{d}, A_{d} = a_{d} \}.
    \end{split}
\end{equation*}
The optimal Q-function is modeled as 
\begin{equation*}
    \cQ (s_d, a_d) = \phi (s_d, a_d)^T \bbeta.
\end{equation*}
The basis function is 
\begin{equation} \label{equ:srlsvi.basis}
    \phi (S_{d}, A_{d})^T
    := [1, E_{d - 1}, R_{d - 1}] \frown \bbone(A_d = a_1) [1, E_{d - 1}, R_{d - 1}] \frown \dots \frown \bbone(A_d = a_{2^K}) [1, E_{d - 1}, R_{d - 1}],
\end{equation}
where $a_1, \dots, a_{2^K}$ traverse through all possible actions in $\cA$.
Based on the Bellman optimality equation, Algorithms~\ref{alg:srlsvi} updates the estimate of $\bbeta$ sequentially.

\begin{algorithm}[!htbp]
    \caption{Stationary RLSVI.}
    \label{alg:srlsvi}
    \textbf{Input}: Hyperparameters $L, \lambda_{d}, \sigma^2$.
    
    \begin{algorithmic}[1] 
    \STATE Warm-up: Randomly take actions $A_{d, k} \sim \text{Bernoulli} (0.5)$ in bag $d \in \{1:L\}$ for $k \in \{1:K\}$.
    \FOR{$d \ge L + 1$}
        \FOR{$k = K, \dots, 1$}
            \STATE For $t \in \{1:(d - 1)\}$, let
                \begin{equation*}
                \begin{split}
                    \bX_{t} =& \phi (S_{t}, A_{t}), \\
                    Y_{t} =& R_t + \bar{\gamma} \max_{a_{t + 1} \in \cA} \phi (S_{t + 1}, a_{t + 1})^T \widetilde{\bbeta}_{d - 1},
                \end{split}
                \end{equation*}
                and define $\Xb_{d} = [\bX_{1:(d - 1)}]^T$, $\Yb_{d} = [Y_{1:(d - 1)}]^T$.
            \STATE Fit a Bayesian linear regression for the optimal Q-functions.
            Obtain 
                \begin{equation*}
                    \begin{split}
                        \bSigma_{d} &= \prth{\Xb_{d}^T \Xb_{d} / \sigma^2 + \lambda_{d} \Ib}^{-1}, \\
                        \bmu_{d} &= \bSigma_{d} (\Xb_{d}^T \Yb_{d} / \sigma^2),
                    \end{split}
                \end{equation*}
                where $\Ib$ is the identity matrix.
            \STATE Randomly draw $\widetilde{\bbeta}_{d} \sim N(\bmu_{d}, \bSigma_{d})$. 
        \ENDFOR
        \FOR{$k = 1, \dots, K$}
            \STATE Construct the basis function as in (\ref{equ:srlsvi.basis}). 
            \STATE Take $A_{d} = \argmax_{a_d \in \cA} \phi (S_{d}, a)^T \widetilde{\bbeta}_{d}$.
        \ENDFOR
        \STATE Observe $E_d, R_d$.
    \ENDFOR
    \end{algorithmic}
\end{algorithm}

\subsection{RLSVI with a Horizon $K$} \label{sec:rlsvi}

When each bag is viewed as an episode, algorithms for finite horizon MDP can be leveraged with a reward observed at the end of the episode.
The states can still be defined as $S_{d, k} \in \cS_k$.
For a policy $\bpi = \{\pi_{1:K}: \pi_k: \cS_k \mapsto \cA_k \}$, the Q-function at time $k$ is
\begin{equation*}
    \cQ_k^{\bpi} (s_{d, k}, a_{d, k}) = \bbE^{\bpi} \brce{R_d | S_{d, k} = s_{d, k}, A_{d, k} = a_{d, k}}.
\end{equation*}
The Bellman optimality equation is 
\begin{equation*}
    \begin{split}
        \cQ^*_K (s_{d, K}, a_{d, K}) 
        & = \bbE \{ R_{d, K} |
        S_{d, K} = s_{d, K}, A_{d, K} = a_{d, K} \}, \\
        \cQ^*_k (s_{d, k}, a_{d, k}) 
        & = \bbE \{ R_{d, k} + \max_{a_{d, k + 1} \in \cA_{k + 1}}
        \cQ^*_{k + 1} (S_{d, k + 1}, a_{d, k + 1}) | 
        S_{d, k} = s_{d, k}, A_{d, k} = a_{d, k} \},
    \end{split}
\end{equation*}
for $k \in \{1:(K - 1)\}$.
The optimal Q-function is modeled as 
\begin{equation*}
    \cQ_k (s_{d, k}, a_{d, k}) = \phi_k (s_{d, k}, a_{d, k})^T \bbeta_k,
\end{equation*}
where the basis function is 
\begin{equation} \label{equ:rlsvi.basis}
    \phi_k (S_{d, k}, A_{d, k})^T
    := [1, E_{d - 1}, R_{d - 1}, M_{1:(k - 1)}, A_{1:(k - 1)}, C_{d, k}, A_{d, k}, A_{d, k} E_{d - 1}, A_{d, k} R_{d - 1}, A_{d, k} C_{d, k}].
\end{equation}
Based on the Bellman optimality equation, Algorithm~\ref{alg:rlsvi} updates the estimate of $\bbeta$ sequentially.

\begin{algorithm}[!htbp]
    \caption{RLSVI with a horizon $K$.}
    \label{alg:rlsvi}
    \textbf{Input}: Hyperparameters $L, \lambda_{d}, \sigma^2$.
    
    \begin{algorithmic}[1] 
    \STATE Warm-up: Randomly take actions $A_{d, k} \sim \text{Bernoulli} (0.5)$ in bag $d \in \{1:L\}$ for $k \in \{1:K\}$.
    \FOR{$d \ge L + 1$}
        \FOR{$k = K, \dots, 1$}
            \STATE For $t \in \{1:(d - 1)\}$, let
                \begin{equation*}
                \begin{split}
                    \bX_{t, k} =& \phi_k (S_{t, k}, A_{t, k}), \\
                    Y_{t, k} =& 
                    \begin{cases}
                        \max_{a_{t, k + 1} \in \cA_{k + 1}} \phi_k (S_{t, k + 1}, a_{t, k + 1})^T \widetilde{\bbeta}_{d - 1, k + 1} 
                        & \text{if } k < K, \\
                        R_{t}
                        & \text{if } k = K,
                    \end{cases}
                \end{split}
                \end{equation*}
                and define $\Xb_{d, k} = [\bX_{1:(d - 1), k}]^T$, $\Yb_{d, k} = [Y_{1:(d - 1), k}]^T$.
            \STATE Fit a Bayesian linear regression for the optimal Q-functions.
            Obtain 
                \begin{equation*}
                    \begin{split}
                        \bSigma_{d, k} &= \prth{\Xb_{d, k}^T \Xb_{d, k} / \sigma^2 + \lambda_{d} \Ib}^{-1}, \\
                        \bmu_{d, k} &= \bSigma_{d, k} (\Xb_{d, k}^T \Yb_{d, k} / \sigma^2),
                    \end{split}
                \end{equation*}
                where $\Ib$ is the identity matrix.
            \STATE Randomly draw $\widetilde{\bbeta}_{d, k} \sim N(\bmu_{d, k}, \bSigma_{d, k})$. 
        \ENDFOR
        \FOR{$k = 1, \dots, K$}
            \STATE Construct the basis function as in (\ref{equ:rlsvi.basis}). 
            \STATE Take $A_{d, k} = \argmax_{a_{d, k} \in \cA_k} \phi_k (S_{d, k}, a)^T \widetilde{\bbeta}_{d, k}$.
        \ENDFOR
        \STATE Observe $M_{d, K}, E_d, R_d$.
    \ENDFOR
    \end{algorithmic}
\end{algorithm}

\subsection{Random Policy}
All the actions are taken randomly with probability 0.5. See Algorithms~\ref{alg:rand} for detail.

\begin{algorithm}[!htbp]
    \caption{Random policy (RAND)}
    \label{alg:rand}
    \begin{algorithmic}[1] 
    \FOR{day $d \in \{1:D\}$}
        \FOR{$k \in \{1:K\}$}
            \STATE Observe the current context $C_{d, k}$.
            \STATE Take actions $A_{d, k} \sim \text{Bernoulli} (0.5)$.
        \ENDFOR
        \STATE Observe $R_d$.
    \ENDFOR
    \end{algorithmic}
\end{algorithm}

\subsection{Maximizing Proximal Outcomes with TS}

We compare with the TS algorithm \citep{russo2018tutorial} that directly maximizes the proximal outcome $M_{d, k}$, as implemented in previous HeartSteps studies \citep{liao2020personalized}. 
The state at the $k$th decision time on day $d$ is $S_{d, k} = [E_{d - 1}, R_{d - 1}, C_{d, k}]$.
In TS, the Q-function is modeled as $\cQ(s, a) = \phi (s, a)^T \bbeta$, with the basis function defined as
\[ \phi (S_{d, k}, A_{d, k}) = [1, E_{d - 1}, R_{d - 1}, C_{d, k}, A_{d, k}, A_{d, k} E_{d - 1}, A_{d, k} R_{d - 1}, A_{d, k} C_{d, k}]. \] 
The policy is updated daily. 
Refer to Algorithms~\ref{alg:ts} for detailed implementation.

\begin{algorithm}[!htbp]
    \caption{Maximizing $M_{d, k}$ with TS}
    \label{alg:ts}
    \textbf{Input}: Hyperparameters $L, \lambda_{d}, \sigma^2$.
    
    \begin{algorithmic}[1] 
    \STATE Warm-up: Randomly take actions $A_{d, k} \sim \text{Bernoulli} (0.5)$ in bag $d \in \{1:L\}$ for $k \in \{1:K\}$.
    \FOR{$d \ge L + 1$}
        \STATE Fit a Bayesian linear regression for the optimal Q-functions. Obtain
        \begin{align*}
            \Xb_d =& [\phi (S_{1, 1}, A_{1, 1})^T, \dots, \phi (S_{1, K}, A_{1, K})^T, \dots, \phi (S_{d, 1}, A_{d, 1})^T, \dots, \phi (S_{d, K}, A_{d, K})^T]^T, \\
            \Yb_d =& [M_{1, 1}, \dots, M_{1, K}, \dots, M_{d, 1}, \dots, M_{d, K}]^T, \\
            \bSigma_d^{\prime} =& (\Xb_d^T \Xb_d / \sigma^2 + \lambda_d \bI)^{-1}, \\
            \bmu_d^{\prime} =& \bSigma_d^{\prime} (\Xb_d^T \Yb_d / \sigma^2).
        \end{align*}
        where $\Ib$ is the identity matrix.
        \STATE Randomly draw $\bbeta \sim N(\bmu_d^{\prime}, \bSigma^{\prime}_d)$.
        \FOR{$k \in \{1:K\}$}
            \STATE Observe the current state $S_{d, k} = [E_{d - 1}, R_{d - 1}, C_{d, k}]$.
            \STATE Obtain the posterior probability $P_{d, k} = P \{[1, E_{d - 1}, R_{d - 1}, C_{d, k}] \bbeta_{5:8} > 0; \bbeta \sim N(\bmu_d^{\prime}, \bSigma^{\prime}_d) \}$.
            \STATE Take $A_{d, k} \sim \text{Bernoulli} (P_{d, k})$ and observe $M_{d, k}$.
        \ENDFOR
    \ENDFOR
    \end{algorithmic}
\end{algorithm}

\section{DATA FOR CONSTRUCTING THE TESTBED} \label{sec:data}
The testbed is constructed using data from HeartSteps V2.
This section details the properties of the dataset and how we obtain the variables in the DAG shown in Figure~\ref{fig:dag}.
The HeartSteps V2 dataset has two limitations: (1) the reward commitment to being active contains too much missing data, and (2) 9 users dropped out before the study concluded.
To address these challenges, we exclude the data after dropout and reconstruct the rewards.

\subsection{Dropout} \label{sec:dropout}

According to (anonymous citation), a user must wear the Fitbit tracker for at least 8 hours a day to be included in the analysis. 
Thus, we define the dropout day as the last day the Fitbit tracker was worn for at least 8 hours between 6 a.m. and 11 p.m. 
After the dropout day, the user never wore the Fitbit for more than 8 hours on any given day.
The study is considered ongoing as long as the RL algorithm continues to update nightly, and the user has not exceeded 83 days, in compliance with IRB regulations.

In HeartSteps V2, 9 out of 42 users dropped out before the conclusion of the study, after days beyond the 83-day limit were excluded.  
Refer to Figure~\ref{fig:dropout.bar} for the bar plot of days before, during, and after the study.  
For the 42 users, the average duration of participation is 74 days including days after dropout, and 69 days when excluding days after dropout.  
We have omitted the days following dropout in the subsequent analysis.  
Below, we address some special cases.

\begin{itemize}
    \item Two users stayed in the study for more than 83 days. 
    User 10044 has 291 days of record and dropped out on day 63.
    User 10262 has 210 days of record and dropped out on day 192. 
    We have removed the data beyond 83 days for them.
    All the other users were in the study no more than 83 days.
    \item User 10086 wore the Fitbit for 88 days, but the RL algorithm started on the 48th day.
    \item User 10217 wore the Fitbit for 21 days. The warm-up period is 8 days. 
    Then the RL algorithm started on the 9th day, and the user stayed until the end of the study.
    \item The RL algorithm ran for 60 days for User 10105, but there were only 40 days left after data cleaning.
\end{itemize}

\begin{figure}[!htbp]
    \centering
    \includegraphics[width=0.7\textwidth]{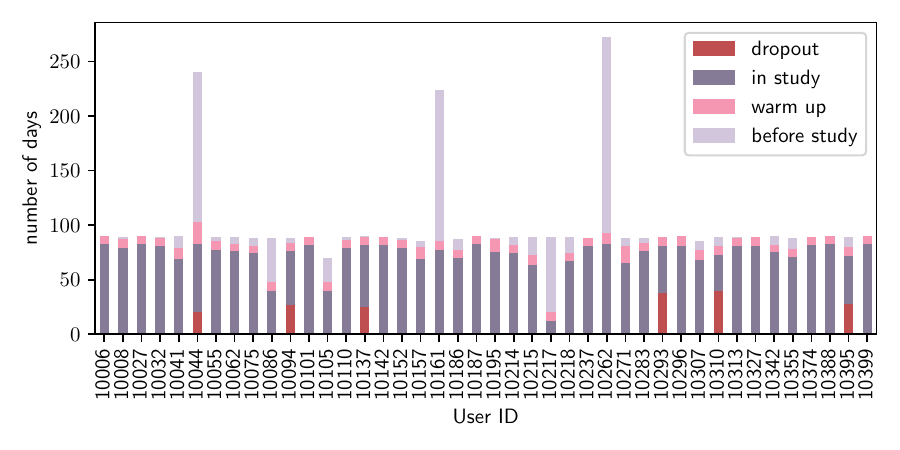}
    \caption{
    The number of days before, during, and after the study for each user.  
    ``Before study'' is the period when the trial starts recording activities for a user, but the user has not yet started wearing the Fitbit.  
    ``Warm up'' is the period during which a user wears the Fitbit for at least 8 hours per day for 7 days. The length of the warm-up period may extend beyond 8 days if the user wears the Fitbit for less than 8 hours on any given day within this timeframe.  
    ``In study'' is the period during which the RL algorithm is active for a user, and the user has not yet dropped out.  
    ``Drop out'' is the period during which the RL algorithm continues to run for a user, but the user no longer wears the Fitbit for more than 8 hours per day.
    }
    \label{fig:dropout.bar}
\end{figure}

%

\subsection{Obtaining the Variables}
In this section, we describe how we obtain the variables from HeartSteps V2. 
At the $k$th decision time on day $d$,  
\begin{itemize}
    \item The context $C_{d,k}$ is the logarithm of the 30-minute step count prior to the $k$th decision time, with a small perturbation of 0.5 added to avoid the logarithm of zero step counts. 
    Missing step counts are imputed with the average from the previous seven days. 
    \item The action $A_{d,k}$ indicates whether a notification is sent (1) or not (0).
    \item The proximal outcome $M_{d,k}$ is the logarithm of the 30-minute step count following the $k$th decision time, again with a 0.5 perturbation added to avoid logarithm issues with zero counts.
    \item The engagement $E_{d}$ is defined as $E_d = \sqrt{\frac{1 - \omega}{1 - \omega^7} \sum_{t = 0}^6 \omega^{t} V_{d - t}}$, where $V_{d}$ is the total app page view times on day $d$ and $\omega = 6/7$ \citep{trella2023reward,ghosh2024rebandit}. 
    The normalizing constant ensures the weights of each $E_{d - t}$ sum to one. 
    The square root transforms the right-skewed distribution of weighted average app page views into a nearly symmetric distribution. 
    The app page views include dashboard view, weekly goal setting, etc.
    \item The reward $R_d$ represents the commitment to being active. Since HeartSteps V2 lacks enough data on commitment to being active, we estimate the reward $R_d$ based on the causal DAG. 
    Details on how to find $R_d$ will be introduced in Section~\ref{sec:construct.reward}.
    \item The emission $O_{d}$ is derived from per minute Fitbit step count data. According to \citet{cuthbertson2022associations}, a bout is defined as at least 10 consecutive minutes at or above the specified cadence, allowing for up to 20\% of the time or less than 5 consecutive minutes below the cadence. 
    We examine the minimum suggested cadence, which is $\geq$ 40 steps/min.
    A bout is considered unprompted if its start time does not occur within 30 minutes after a notification.
    We calculate the number of unprompted bouts of PA using this criterion.
    \begin{itemize}
        \item Check a bout starting from a minute with no fewer than 40 steps.
        \item Look ahead at least 10 minutes.
        \item End the bout if more than 20\% of the minutes have fewer than 40 steps, or if there are more than 5 consecutive minutes with fewer than 40 steps.
        \item If the starting point does not lie within [t, t + 30min], where t is one of the notification times, increment the count of bouts for that day by one.
    \end{itemize}
\end{itemize}

Figure~\ref{fig:histogram.ERO} displays the histogram of $E_d$, $R_d$, and $O_d$.
Section~\ref{sec:missing.data} will address the missing step counts for $C_{d, k}$ and $M_{d, k}$.

\begin{figure}[!htbp]
    \centering
    \begin{subfigure}{0.32\textwidth}
        \centering
        \includegraphics[width=\textwidth]{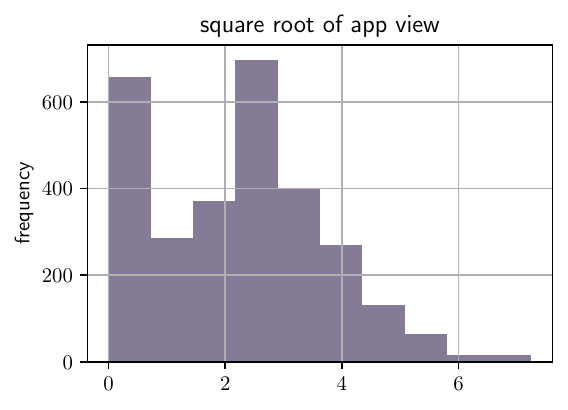}
        \caption{Histogram of $E_d$.}
    \end{subfigure}
    \hfill
    \begin{subfigure}{0.32\textwidth}
        \centering
        \includegraphics[width=\textwidth]{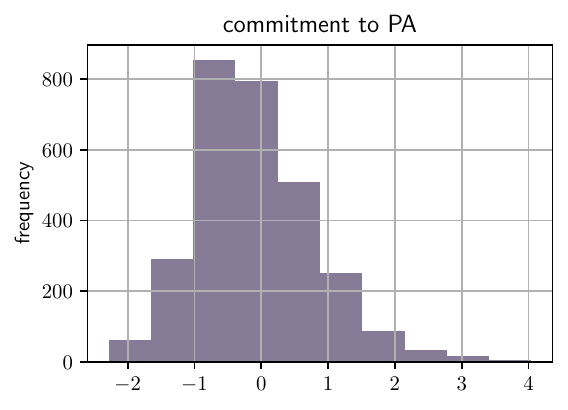}
        \caption{Histogram of $R_d$.}
    \end{subfigure}
    \hfill
    \begin{subfigure}{0.32\textwidth}
        \centering
        \includegraphics[width=\textwidth]{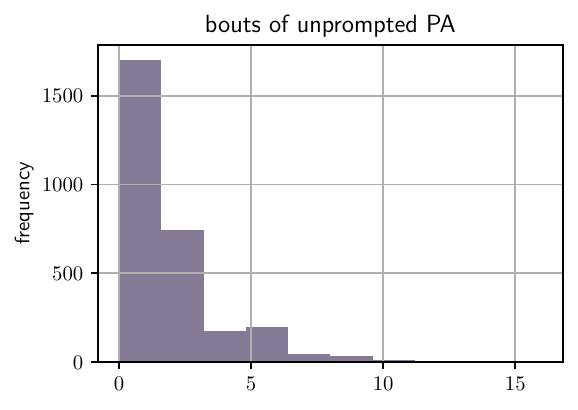}
        \caption{Histogram of $O_d$.}
    \end{subfigure}
    \caption{Histograms of $E_d$ and $R_d$ across 42 users in HeartSteps V2.}
    \label{fig:histogram.ERO}
\end{figure}

\subsection{Missing Data} \label{sec:missing.data}

The 30-minute step count is considered missing if the Fitbit is not worn at any minute during the 30-minute period.
Figures~\ref{fig:histogram.C} and~\ref{fig:histogram.M} show the histograms of $C_{d, k}$ and $M_{d, k}$ with missing data either removed or imputed.
Figures~\ref{fig:histogram.C.missing} and~\ref{fig:histogram.M.missing} show the histograms of decision times or days with missing data.
In subsequent analyses, the imputed log prior 30-minute step count $C_{d, k}$ will be used.
Then only the log post 30-minute step count $M_{d, k}$ will contain missing data.
As will be detailed later, complete or imputed data will be used in Sections~\ref{sec:construct.reward} and~\ref{sec:construct.testbed.vanilla}.

\begin{figure}[!htbp]
    \centering
    \begin{subfigure}{0.49\textwidth}
        \centering
        \includegraphics[width=0.7\textwidth]{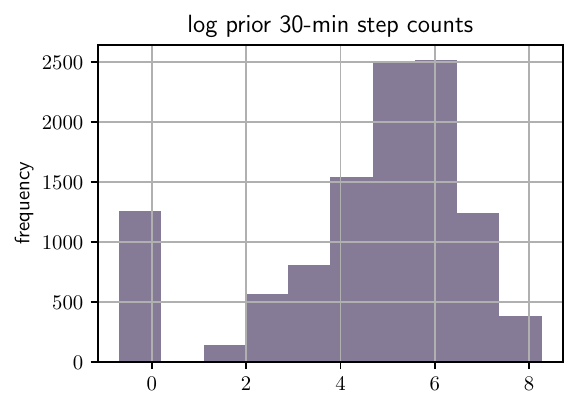}
        \caption{Histogram of $C_{d, k}$ with missing data removed.}
    \end{subfigure}
    \begin{subfigure}{0.49\textwidth}
        \centering
        \includegraphics[width=0.7\textwidth]{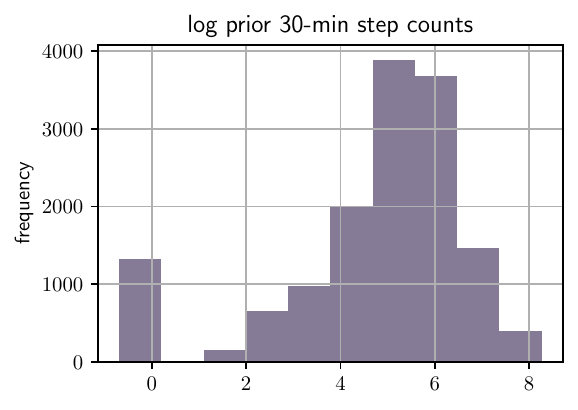}
        \caption{Histogram of $C_{d, k}$ with missing data imputed.}
        \label{fig:histogram.C.impute}
    \end{subfigure}
    \caption{
    Histograms of $C_{d, k}$ across 42 users and all decision times in HeartSteps V2. 
    In Figure~(\subref{fig:histogram.C.impute}), missing $C_{d, k}$ values are imputed using average step counts from the same decision time over the previous seven days, as these will be directly utilized as exogenous variables in the testbed.
    }
    \label{fig:histogram.C}
\end{figure}

\begin{figure}[!htbp]
    \centering
    \begin{subfigure}{0.49\textwidth}
        \centering
        \includegraphics[width=0.7\textwidth]{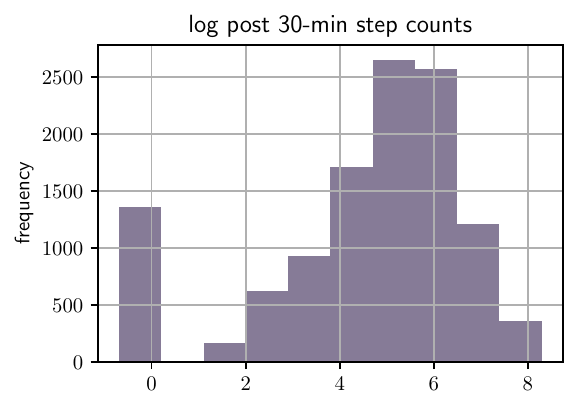}
        \caption{Histogram of $M_{d, k}$ with missing data removed.}
    \end{subfigure}
    \begin{subfigure}{0.49\textwidth}
        \centering
        \includegraphics[width=0.7\textwidth]{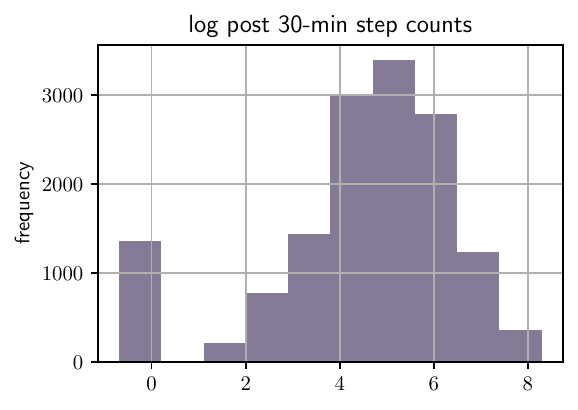}
        \caption{Histogram of $M_{d, k}$ with missing data imputed.}
        \label{fig:histogram.M.impute}
    \end{subfigure}
    \caption{
    Histograms of $M_{d, k}$ across 42 users and all decision times in HeartSteps V2.
    In Figure~(\subref{fig:histogram.M.impute}), missing $M_{d, k}$ values are imputed using the average step count from the same decision time of this user over the study period, as they will be utilized solely for constructing the reward and the testbed.
    }
    \label{fig:histogram.M}
\end{figure}

\begin{figure}[!htbp]
    \centering
    \begin{subfigure}{0.49\textwidth}
        \centering
        \includegraphics[width=0.7\textwidth]{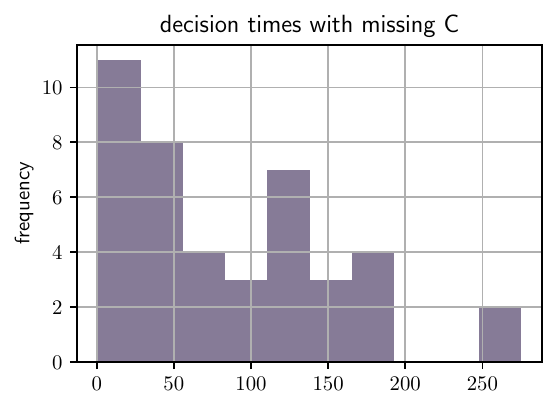}
        \caption{Histogram of the number of decision times with missing $C_{d, k}$.}
    \end{subfigure}
    \begin{subfigure}{0.49\textwidth}
        \centering
        \includegraphics[width=0.7\textwidth]{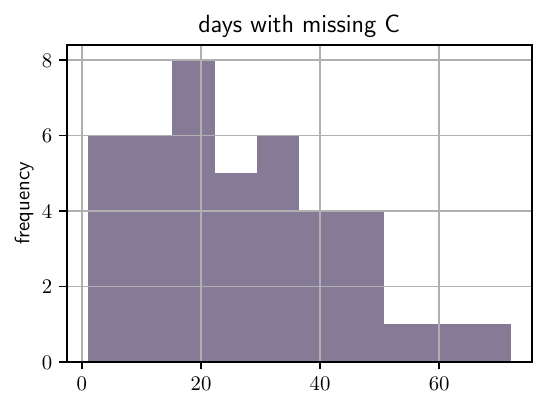}
        \caption{Histogram of the number of days with any one of $C_{d, 1:K}$ missing.}
    \end{subfigure}
    \caption{Number of decision times and days with missing $C_{d, k}$ for each user.}
    \label{fig:histogram.C.missing}
\end{figure}

\begin{figure}[!htbp]
    \centering
    \begin{subfigure}{0.49\textwidth}
        \centering
        \includegraphics[width=0.7\textwidth]{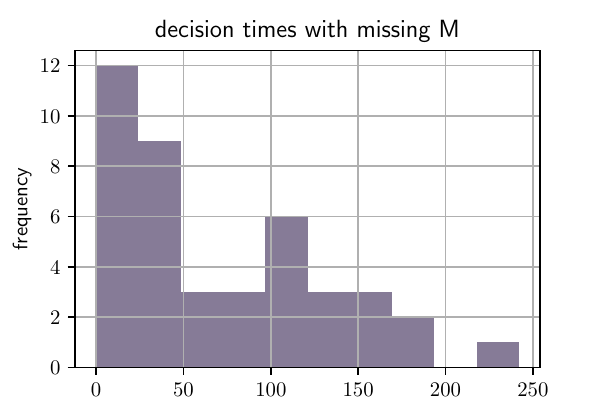}
        \caption{Histogram of the number of decision times with missing $M_{d, k}$.}
    \end{subfigure}
    \begin{subfigure}{0.49\textwidth}
        \centering
        \includegraphics[width=0.7\textwidth]{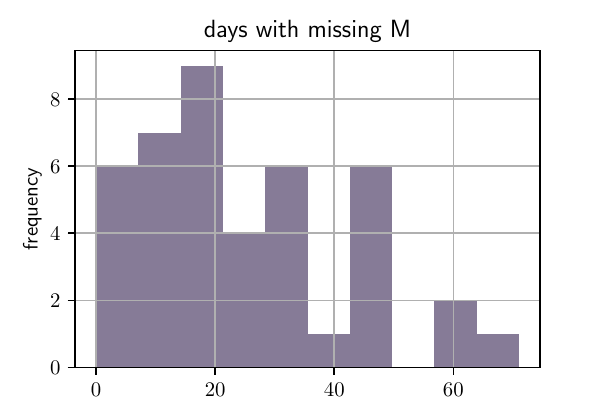}
        \caption{Histogram of the number of days with any one of $M_{d, 1:K}$ missing.}
    \end{subfigure}
    \caption{Number of decision times and days with missing $M_{d, k}$ for each user.}
    \label{fig:histogram.M.missing}
\end{figure}

\subsection{Constructing the Reward} \label{sec:construct.reward}

Since $R_{d}$ is a continuous variable, we fit a Linear Dynamical System (LDS) model  to estimate $R_{d}$ using the DAG in Figure~\ref{fig:dag.O}.

\begin{figure}[!ht]
    \centering
    \begin{tikzpicture}[->, thick, main/.style={font=\sffamily}]
    \matrix [column sep=0.10cm, row sep=0.6cm] {
        & \node[main] (O0) {$O_{d}$};
        & & & & & & & & & 
        & \node[main] (OK) {$O_{d+1}$}; \\
          \node[main] (R0) {$R_{d-1}$};
        & & & & & & & & & 
        & \node[main] (RK) {$R_{d}$}; 
        & & &  \node[main] (R6) {}; \\
        & & \node[main] (M1) {$M_{d, 1}$};
        & & \node[main] (M2) {\dots};
        & & \node[main] (MK) {$M_{d, K}$};
        & \\
        & \node[main] (C1) {$C_{d, 1}$}; & \node[main] (A1) {$A_{d, 1}$};
        & & \node[main] (A2) {\dots};
        & \node[main] (CK) {$C_{d, K}$}; & \node[main] (AK) {$A_{d, K}$};
        & \\
        \node[main] (E0) {$E_{d-1}$};
        & & & & & & & & & 
        & \node[main] (EK) {$E_{d}$}; 
        & & & \node[main] (E6) {}; \\
    };
    \path[every node/.style={font=\sffamily}]
        (R0) edge [color7] node [right] {} (O0)
        (RK) edge [color7] node [right] {} (OK)
        (R0) edge [color1] node [right] {} (RK)
        (RK) edge [color1] node [right] {} (R6)
        (E0) edge [color6] node [right] {} (EK)
        (EK) edge [color6] node [right] {} (E6)
        (E0) edge [color6] node [right] {} (R0)
        (EK) edge [color6] node [right] {} (RK)
        (R0) edge [color2] node [right] {} (M1)
        (R0) edge [color2] node [right] {} (MK)
        (M1) edge [color3] node [right] {} (RK)
        (MK) edge [color3] node [right] {} (RK)
        (E0) edge [color4] node [right] {} (M1)
        (E0) edge [color4] node [right] {} (MK)
        (A1) edge [color5] node [right] {} (EK)
        (AK) edge [color5] node [right] {} (EK)
        (A1) edge node [right] {} (M1)
        (AK) edge node [right] {} (MK)
        (C1) edge node [right] {} (M1)
        (CK) edge node [right] {} (MK);
    \end{tikzpicture}
    \caption{Causal DAG for day $d$. The arrows pointing to the actions $A_{d, 1:K}$ are omitted. 
    The emission $O_{d} \in \bbR$ is an indicator that reveals information about the reward $R_{d-1}$. 
    For example, $O_{d}$ can be the number of bouts of unprompted PA during day $d$. 
    The affective attitude toward PA, $R_d$, is a summary of day $d$, so its effect on unprompted PAs $O_{d+1}$ can only be observed the next day.
    $O_d$ is not impacted by actions in the current day, since the actions may only affect the step counts in a short time window.
    }
    \label{fig:dag.O}
\end{figure}

An LDS is defined by the equations
\begin{align}
    x_d &= b_1 x_{d-1} + b_2 u_d + w_d, \label{equ:lds.utox} \\
    y_d &= b_3 x_{d} + z + v_d, \label{equ:lds.xtoy} 
\end{align}
where $d$ denotes the time index, $x_d$ is the latent variable, $y_d$ is the emission variable, $u_d$ is the input variable, and $w_d, v_d$ are the noise terms.
In equation~(\ref{equ:lds.utox}), the intercept is fixed at zero.
Besides, the input $u_d$ directly influences the latent variable but not the emission variable.
In our problem, the latent variable is $x_d = R_{d}$, and the input variable is the 6-dimensional vector $u_d = [A_{d, 1}, \dots, A_{d, 5}, E_{d}]$.
Given that $R_{d}$ is a more stable construct than the emission $O_{d}$, we apply an Exponentially Weighted Moving Average (EWMA) to smooth $O_{d}$ before estimating $R_{d}$.
Specifically, we set $y_d = \bar{O}_{d + 1}$, where $\bar{O}_d = w O_d + (1 - w) \bar{O}_{d - 1}$ for $d > 1$, with $\bar{O}_1 = O_1$ and $w = 0.5$.
Note that this smoothed $\bar{O}_d$ is utilized solely for estimating $R_{d}$; the original $O_{d}$ is used when constructing the testbed in Section~\ref{sec:construct.testbed.vanilla}.

We fit the LDS model using the \texttt{SSM} package \citep{Linderman_SSM_Bayesian_Learning_2020}, and the estimated latent variable $x_d$ is utilized as the constructed reward.
Note that the Laplace EM algorithm employed by the \texttt{SSM} package is sensitive to the initial parameter values. 
Firstly, the parameters in an LDS are not unique, even with sufficient data. 
The LDS configuration specified possesses additional degrees of freedom. 
For example, the variables $(x_d, b_2, b_3)$ can be simultaneously scaled to $(2x_d, 2b_2, b_3/2)$, illustrating this non-uniqueness. 
Furthermore, the log-posterior probability density function is typically nonconvex with respect to the parameters. 
When all variables are Gaussian, for example, the log-posterior is quadratic but nonconvex. 
The Hessian matrix of the log-posterior is generally neither positive semi-definite nor negative semi-definite.

Considering user heterogeneity, we will fit an LDS model and construct rewards for each user individually.  
However, due to the limited sample size for each user, the model's stability might be compromised because it is sensitive to the initial parameter settings.  
Therefore, we initially fit a homogeneous LDS model using pooled data from all users to obtain stable parameter estimates.  
These estimates are then used to initialize the LDS models for individual users.  
A larger sample size in the Laplace EM algorithm enhances model reliability.

\paragraph{Initialize with the pooled data from all users.}
To initialize the homogeneous LDS model, we use the \texttt{initialize} option in the \texttt{SSM} package.  
The package automatically chooses the initial values $b_1^{\prime}$, $b_2^{\prime}$, $b_3^{\prime}$, $z^{\prime}$, $\Var(w_d^{\prime})$, $\Var(v_d^{\prime})$ for the parameters of the LDS model.  
The \texttt{initialize} option in the package consists of two steps:
\begin{itemize}
    \item Fit an autoregressive HMM to initialize the parameters in equation~(\ref{equ:lds.utox}). 
    \item Fit a PCA model to initialize the parameters in equation~(\ref{equ:lds.xtoy}). Since $R_d$ and $\bar{O}_d$ are both one-dimensional, the package essentially initializes with $b_3^{\prime} = 1$.
\end{itemize}
All parameters are initially set using the data, except for $b_3^{\prime}$.
In this step, data from days with any missing proximal outcome $M_{d, 1:K}$ have been excluded.  
All the variables $C_{d, k}, M_{d, k}, E_{d}, \bar{O}_{d}$ have been standardized.  
After fitting the homogeneous LDS model with the pooled data from all users, we obtain the parameters $b_1^{\prime \prime}$, $b_2^{\prime \prime}$, $b_3^{\prime \prime}$, $z^{\prime \prime}$, $\Var(w_d^{\prime \prime})$, $\Var(v_d^{\prime \prime})$.

Note that we have changed the source code in \texttt{lds.py} of the \texttt{SSM} package to remove the direct influence of the input vector $u_d$ on the emission variable $y_d$.
Besides, we fixed the bug in initializing $\Var(v_d^{\prime})$ when the latent variable $x_d$ and the emission variable $y_d$ have the same dimension.

\paragraph{Fit a separate model for each user.}
For each user, initialize the LDS with $b_1^{\prime \prime}$, $b_2^{\prime \prime}$, $b_3^{\prime \prime}$, $z^{\prime \prime}$, $\Var(w_d^{\prime \prime})$, $\Var(v_d^{\prime \prime})$, and derive the estimated $R_{d}$ for this user.  
In this step, any missing proximal outcome $M_{d, k}$ has been imputed using the average step counts at the same decision time $k$ for this user.  
The variables $C_{d, k}, M_{d, k}, E_{d}, \bar{O}_{d}$ are standardized as well.

\section{CONSTRUCTING THE VANILLA TESTBED} \label{sec:construct.testbed.vanilla}

\subsection{Working Model for the DAG} \label{sec:dag.model}
From the DAG, we make the following assumptions in model~(\ref{equ:testbed.model}):
\begin{equation} \label{equ:testbed.model}
\begin{split}
    C_{d, k} &= \theta^C_{0} + \epsilon^C_{d, k} \\
    M_{d, k} &= \theta^{M}_{0} + \theta^{M}_{1} E_{d-1} + \theta^{M}_{2} R_{d-1} + \theta^{M}_{3} C_{d, k} 
    + A_{d, k} (\theta^{M}_{4} + \theta^{M}_{5} E_{d-1} + \theta^{M}_{6} R_{d-1} + \theta^{M}_{7} C_{d, k}) + \epsilon^M_{d, k}, \\
    E_{d} &= \theta^{E}_{0} + \theta^{E}_{1} E_{d-1} + \textstyle{\sum_{k=1}^K} \theta^{E}_{k+1} A_{d,k} 
    + \textstyle{\sum_{k=1}^K} \theta^{E}_{K + 1 + k} A_{d,k} E_{d-1} + \epsilon^E_{d}, \\
    R_{d} &= \theta^{R}_{0} + \textstyle{\sum_{k=1}^K} \theta^{R}_{k} M_{d,k} + \theta^{R}_{K+1} E_{d} + \theta^{R}_{K+2} R_{d-1} + \epsilon^R_{d}, \\
    O_{d} &= \theta^{O}_{0} + \theta^{O}_{1} R_{d-1} + \epsilon^O_{d},
\end{split}
\end{equation}
where 
$\btheta^C = \theta^{C}_{0}$, 
$\btheta^M = \theta^{M}_{0:7}$, 
$\btheta^E = \theta^{E}_{0:11}$,
$\btheta^R = \theta^{R}_{0:7}$,
$\btheta^O = \theta^{O}_{0:1}$,
and let $\btheta := \{ \btheta^{C}, \btheta^M, \btheta^E, \btheta^R, \btheta^O \}$ represent all the parameters.
The noise terms $\epsilon^C_{d, k}, \epsilon^M_{d, k}, \epsilon^E_{d}, \epsilon^R_{d}, \epsilon^O_{d}$ have mean zero and variances $\sigma^{2C}, \sigma^{2M}, \sigma^{2E}, \sigma^{2R}, \sigma^{2O}$.

\subsection{Standardization} \label{sec:standardization.parameters}

In the subsequent analysis, all variables have been standardized to ensure numerical stability.
Refer to Table~\ref{tab:standardization.parameters} for the standardization parameters.
A variable $G$ is standardized using the formula $(G - \text{shift}_G) / \text{scale}_G$, where $G$ can be $C_{d, k}$, $M_{d, k}$, $E_{d}$, or $O_{d}$.
The standardization parameters are the same for all users.
After constructing the reward, $R_d$ inherently has approximately zero mean and unit variance, hence it does not require further standardization.

\begin{table}[!htbp]
    \caption{Standardization parameters.}
    \label{tab:standardization.parameters}
    \centering
    \begin{tabular}{ccc}
        \toprule
        variable   & shift & scale \\
        \midrule
        $C_{d, k}$ & 4.686 & 2.089 \\
        $M_{d, k}$ & 4.456 & 2.279 \\
        $E_{d}$    & 2.178 & 1.514 \\
        $O_{d}$    & 1.685 & 1.980 \\
        \bottomrule
    \end{tabular}
\end{table}

\subsection{Fitting the Testbed Model} \label{sec:model.fitting}

In the vanilla testbed, we operate under the assumption that model~(\ref{equ:testbed.model}) is valid.
We estimate the coefficients for model~(\ref{equ:testbed.model}) for each user by minimizing the following loss functions:
\begin{align*}
    \widehat{\btheta}^M &= \argmin_{\btheta^M \in \bbR^8} \frac{1}{D K} \sum_{d=1}^D \sum_{k=1}^K [M_{d, k} - \{\theta^{M}_{0} + \theta^{M}_{1} E_{d-1} + \theta^{M}_{2} R_{d-1} + \theta^{M}_{3} C_{d, k} \\
    & \qquad \qquad \qquad \qquad 
    + A_{d, k} \prth{\theta^{M}_{4} + \theta^{M}_{5} E_{d-1} + \theta^{M}_{6} R_{d-1} + \theta^{M}_{7} C_{d, k}} \}]^2 
    + \kappa^M \norm{\btheta^M}^2, \\
    \widehat{\btheta}^E &= \argmin_{\btheta^E \in \bbR^{K + 2}} \frac{1}{D} \sum_{d=1}^D [E_{d} - \{\theta^{E}_{0} + \theta^{E}_{1} E_{d-1} + \sum_{k=1}^K \theta^{E}_{k+1} A_{d,k} + \sum_{k=1}^K \theta^{E}_{K + 1 + k} A_{d,k} E_{d-1} \}]^2 \\
    & \qquad \qquad \qquad \qquad 
    + \kappa^{E}_{1} \norm{\btheta^E}^2 + \kappa^{E}_{2} (\btheta^E)^T \Lb^E (\btheta^E), \\
    \widehat{\btheta}^R &= \argmin_{\btheta^R \in \bbR^{K + 3}} \frac{1}{D} \sum_{d=1}^D [R_{d} - \{\theta^{R}_{0} + \sum_{k=1}^K \theta^{R}_{k} M_{d,k} + \theta^{R}_{K+1} E_{d-1} + \theta^{R}_{K+2} R_{d-1} \}]^2 
    + \kappa^{R}_{1} \norm{\btheta^R}^2 + \kappa^{R}_{2} (\btheta^R)^T \Lb^R (\btheta^R), \\
    \widehat{\btheta}^O &= \argmin_{\btheta^O \in \bbR^{2}} \frac{1}{D} \sum_{d=1}^D [O_{d} - \{ \theta^{O}_{0} + \theta^{O}_{1} R_{d-1} \}]^2 
    + \kappa^O \norm{\btheta^O}^2,
\end{align*}
where $\Lb^E, \Lb^R$ are the Laplacian matrices.
Due to the limited sample size under user availability
and the limited number of notifications at each decision time, $L_2$ penalties and Laplacian penalties are applied to the coefficients.
Let $q$ denote the dimension of the parameter $\btheta$ that we are estimating.
Define the adjacency matrix $\Omega = \{ \omega_{lm} \}_{1 \le l, m \le q}$, where $\omega_{lm} = 1$ indicates an edge between $\beta_l$ and $\beta_m$.
The degree matrix $D$ is defined as $\diag\{ d_{1:q} \}$, where $d_m = \sum_{l=1}^q \absbig{\omega_{lm}}$, and the Laplacian matrix $L$ is defined as $L := D - \Omega$ \citep{huang2011sparse}.
A Laplacian penalty is thus defined as $\btheta^T L \btheta$, which aims to minimize the discrepancies among the components of $\btheta$.
In the model for $E$, the adjacency matrix $\Omega^E$ is constructed s.t. only entries corresponding to $A_{d, k}, A_{d, k^{\prime}}$ or $A_{d, k} E_{d - 1}, A_{d, k^{\prime}} E_{d - 1}$ for $k \ne k^{\prime}$ are set to one.
For the model of $R$, the adjacency matrix $\Omega^R$ is constructed s.t. only entries corresponding to $M_{d, k}, M_{d, k^{\prime}}$ for $k \ne k^{\prime}$ are set to one.
All other entries in $\Omega^E$ and $\Omega^R$ set to zero.
The tuning parameters $\kappa^M, \kappa^{E}_{1}, \kappa^{E}_{2}, \kappa^{R}_{1}, \kappa^{R}_{2}, \kappa^O$ are selected through cross-validation.
Given that the context variable $C_{d, k}$ has been standardized and is treated as exogenous, $\btheta^C$ is fixed at $\widehat{\btheta}^C = 0$.

We employ the gradient-based method \texttt{BFGS} from the package \texttt{scipy.optimize.minimize} to solve the optimization problem.
Initial values for the parameters in the optimization are set to zero.
This penalized linear regression differs from the previous LDS model; it is deployed to establish a robust data generation model after the rewards have been observed or constructed.
The fitted coefficients are illustrated in Figure~\ref{fig:est.coefs}.
For each fitted model of $M, E, R, O$, we store the estimated coefficients $\widehat{\btheta}^C, \widehat{\btheta}^M, \widehat{\btheta}^E, \widehat{\btheta}^R, \widehat{\btheta}^O$ and the residuals $\br^C, \br^M, \br^E, \br^R, \br^O$.
Given that $\widehat{\btheta}^C = 0$, the residual vector $\br^C$ effectively represents the context $C_{d, k}$.
Refer to Section~\ref{sec:data.generation} for further discussion on the usage of the residual vector.

The missing data in $M_{d,k}$ are managed as follows.
When fitting the model for $M_{d,k}$, all decision times with missing $M_{d, k}$ are excluded.
We choose not to impute missing $M_{d, k}$ with the average $M_{d, k}$ at time $k$ since this average step count is a constant but not a variable dependent on $A_{d, k}$.
Additionally, we only utilize data when a user is available.
When fitting the model for $R_{d}$, missing $M_{d, k}$ is imputed with the average observed $M_{d, k}$ at decision time $k$ across all days.

According to domain knowledge, the direct effect of action $A_{d, k}$ will not decrease the proximal outcome $M_{d, k}$, as a user will not purposefully reduce PA after receiving a notification.
Therefore, we set the advantage function in $A_{d, k} \to M_{d, k}$ to be nonnegative, i.e.
\begin{equation} \label{equ:step.truncation}
    M_{d, k} = \theta^{M}_{0} + \theta^{M}_{1} E_{d-1} + \theta^{M}_{2} R_{d-1} + \theta^{M}_{3} C_{d, k} + A_{d, k} \max(0, \theta^{M}_{4} + \theta^{M}_{5} E_{d-1} + \theta^{M}_{6} R_{d-1} + \theta^{M}_{7} C_{d, k}) + \epsilon^M_{d, k}.
\end{equation}
Similarly, the proximal outcome $M_{d, k}$ will not decrease $R_d$, and previous engagement $E_{d - 1}$ and reward $R_{d - 1}$ will not decrease the proximal outcome $M_{d, k}$.
Therefore, we set the fitted coefficients $\theta^R_{1:K}, \theta^M_{1:2}$ to be nonnegative for all users.

\subsection{Generate Data With the Testbed} \label{sec:data.generation}

To evaluate the performance of our proposed method in an online setting, we use the testbed to generate data for simulation studies as follows.

Given the historical data, the mean of the next observation for $C_{d, k}, M_{d, k}, E_{d}, R_{d}$, or $O_{d}$ is generated by plugging in the estimated coefficients $\widehat{\btheta}_M$, $\widehat{\btheta}_E$, $\widehat{\btheta}_R$, $\widehat{\btheta}_O$.  
The residuals $\br^M$, $\br^E$, $\br^R$, $\br^O$ are then added to the mean of the next observation as the noise terms $\epsilon^M$, $\epsilon^E$, $\epsilon^R$, $\epsilon^O$.  
If the length of the simulated study $D$ exceeds that of HeartSteps V2, the residual vector will be used cyclically.  
For example, if the residual vector's length is 80 (i.e., in the original HeartSteps V2 dataset, the user was in the study for 80 days before dropping out), then we will sequentially use the residuals from day 1 to 80, and then repeat from day 81 to 160.  
Since there is missing data in the log post 30-minute step count $M_{d, k}$, we cannot obtain the residual of $M_{d, k}$ at some decision times.  
In such cases, we use the residual vector $\br^M$ of observed $M_{d, k}$ as an empirical distribution, sampling a noise term $\epsilon^M_{d, k}$ from it with replacement.  
The initial reward $R_0$ and initial engagement $E_0$ are exogenous variables and are directly set to the initial values from HeartSteps V2.

To prevent extreme values, we truncate all variables when generating data with the testbed.
The lower and upper bounds are set as the minimum and maximum values of each variable in HeartSteps V2.
See Table~\ref{tab:limit.parameters} for the truncation parameters.
A variable $G$ is truncated as $\min \{ \max \{ G, \text{lower}_G \}, \text{upper}_G \}$, where $G$ can be $C_{d, k}$, $M_{d, k}$, $E_{d}$, $R_{d}$, or $O_{d}$.
Since $\widehat{\btheta}^C = 0$ and the residual vector $\br^C$ corresponds to $C_{d, k}$, the context $C_{d, k}$ naturally varies between the lower and upper bounds without truncation.

\begin{table}[!htbp]
    \caption{Truncation parameters.}
    \label{tab:limit.parameters}
    \centering
    \begin{tabular}{ccc}
        \toprule
        variable   & lower bound & upper bound \\
        \midrule
        $C_{d, k}$ & -2.575 & 1.714 \\
        $M_{d, k}$ & -2.259 & 1.684 \\
        $E_{d}$    & -1.439 & 3.349 \\
        $R_{d}$    & -2.285 & 4.042 \\
        $O_{d}$    & -0.851 & 7.230 \\
        \bottomrule
    \end{tabular}
\end{table}

\subsection{Model Diagnostics} \label{sec:model.diagnostics}

\paragraph{Estimated Coefficients.}
The histograms of the estimated coefficients $\widehat{\btheta}^M$, $\widehat{\btheta}^E$, $\widehat{\btheta}^R$, and $\widehat{\btheta}^O$ for the 42 users are displayed in Figure~\ref{fig:est.coefs}.
We observe that the estimated coefficients generally fall within the range of $[-1, 1]$. 
This confirms that after variable standardization, the influence of previous states on the next state remains within a reasonable range.
The autoregressive coefficients $\theta^E_1$ for $E_{d}$ and $\theta^R_7$ for $R_{d}$ are always nonnegative.
The coefficient $\theta^O_1$ is also nonnegative, as the reward $R_d$ is constructed using the emission $O_{d + 1}$.
The coefficient $\theta^M_3$ indicates a strong positive correlation between the prior 30-minute step count $C_{d, k}$ and the post 30-minute step count $M_{d, k}$.

\paragraph{Residual Plots.}
To check the residuals of the fitted models, we plot the residuals $\br^M$, $\br^E$, $\br^R$, and $\br^O$ against the predicted values $\widehat{M}_{d, k}$, $\widehat{E}_{d}$, $\widehat{R}_{d}$, and $\widehat{O}_{d}$ in Figures~\ref{fig:resid.plot.M}-\ref{fig:resid.plot.O}.
The residual plots indicate that there is no discernible trend or heterogeneous variance in the residuals.

\paragraph{Trend of Generated Data.}
To compare the original data with the generated data, we plot the original $M_{d, k}$, $E_{d}$, $R_{d}$, $O_{d}$ and randomly generated episodes for the first 4 users in HeartSteps V2 in Figures~\ref{fig:trend.M}-\ref{fig:trend.O}.
In the generated episodes, all actions are randomly taken with a probability of 0.5, and the other variables are generated following the procedure outlined in Section~\ref{sec:data.generation}.
Since HeartSteps V2 is a 3-month study and the new trial is expected to last 9 months, we generate both a 3-month episode and a 9-month episode.
We observe that the 3-month episode closely resembles the original episodes, due to the use of the residual vectors $\br^C$, $\br^M$, $\br^E$, $\br^R$, $\br^O$.
Even when extending the episode length to 9 months, the generated variables remain within a reasonable range.

Figures~\ref{fig:trend.M}-\ref{fig:trend.O} also show that the noise is much larger than the changes in the mean of the variables.
Recall that when generating artificial data in Algorithm~\ref{alg:brlsvi}, the initial distribution of $E_d$ and $R_d$ is derived from the empirical distribution across all users in HeartSteps V3.
This means that $E_d$ and $R_d$ are sampled from the full range of values observed in the previous trial.
This observation confirms that even if the interventions are delivered using different policies in the previous and new trials, the distributions of $E_d$ and $R_d$ in the artificial data—sampled from the previous trial—will closely resemble those in the new trial due to the high noise level.
Therefore, it is enough to generate one day of data in each artificial episode.

\begin{figure}[!htbp]
    \centering
    \includegraphics[width=\textwidth]{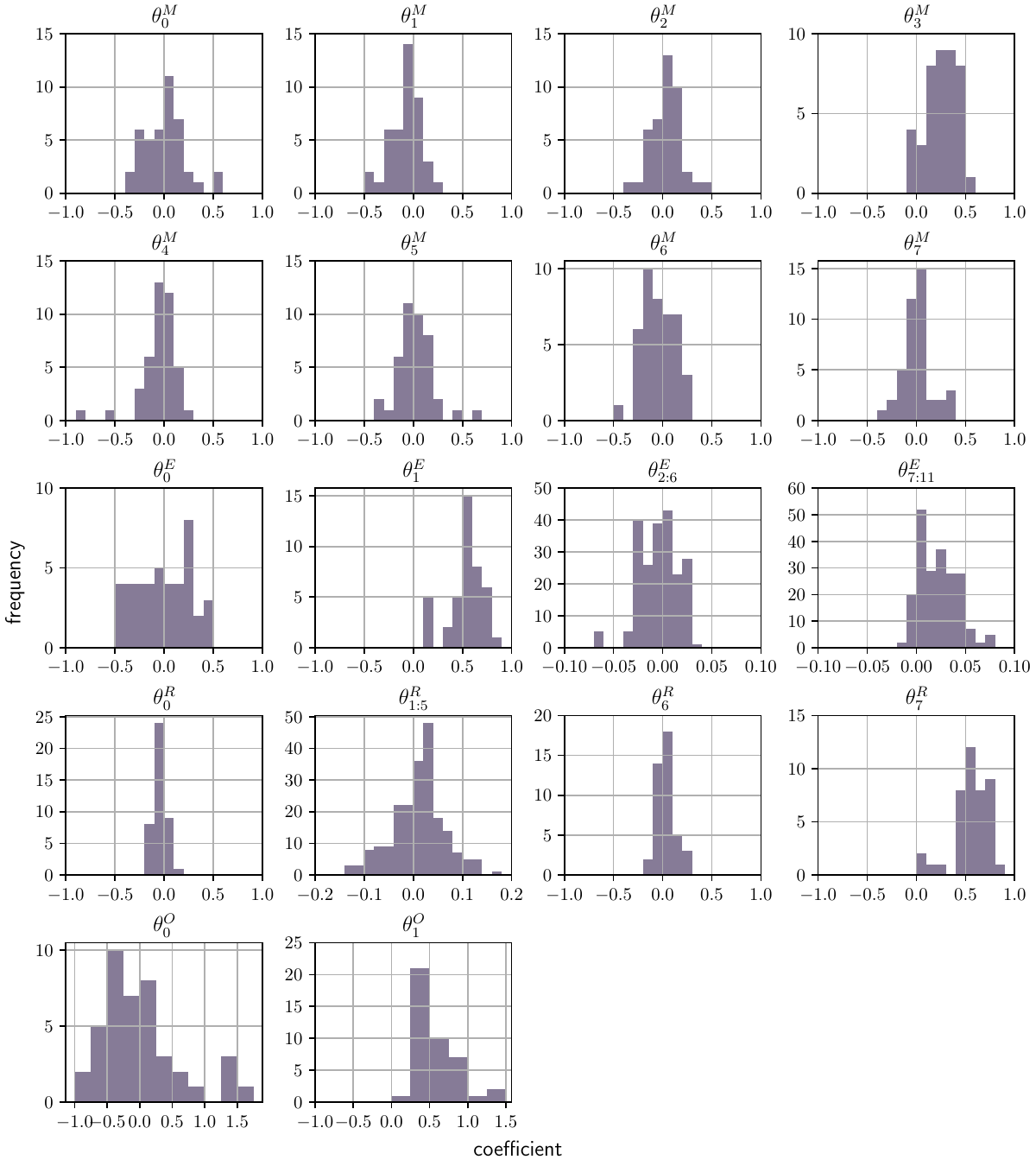}
    \caption{
    Histograms of the estimated coefficients $\widehat{\btheta}^M$, $\widehat{\btheta}^E$, $\widehat{\btheta}^R$, and $\widehat{\btheta}^O$ for the 42 users in the vanilla testbed. 
    The first two rows represent the coefficients $\widehat{\btheta}^M$, while the third to fifth rows represent the coefficients $\widehat{\btheta}^E$, $\widehat{\btheta}^R$, and $\widehat{\btheta}^O$, respectively.
    The coefficients $\theta^E_{2:6}$, $\theta^E_{7:11}$, and $\theta^R_{1:5}$, corresponding to $K$ actions or $K$ proximal outcomes, are plotted in the same histogram.
    }
    \label{fig:est.coefs}
\end{figure}

\begin{figure}[!htbp]
    \centering
    \includegraphics[width=\textwidth]{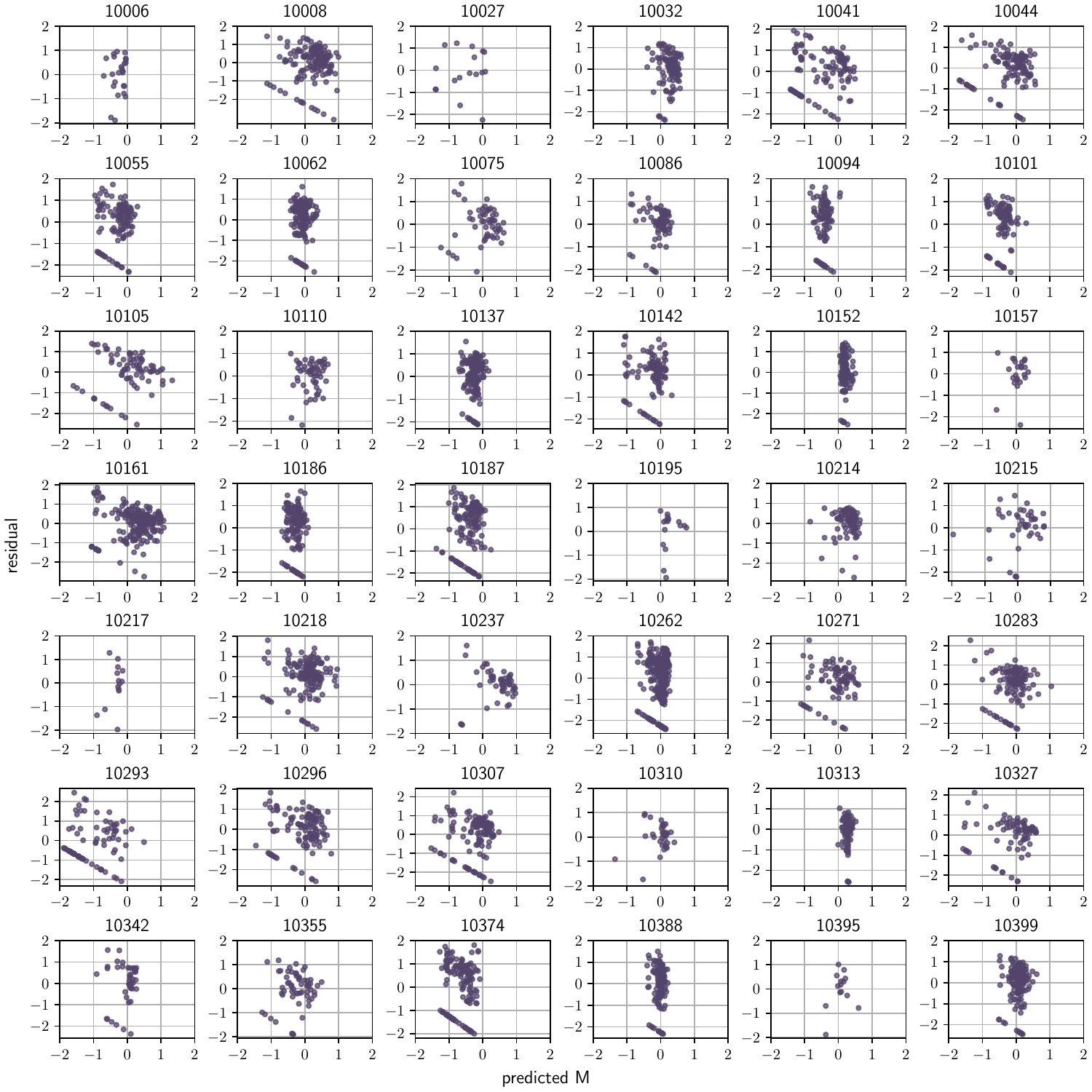}
    \caption{
    Plots of residuals $\br^M$ against the predicted values $\widehat{M}_{d, k}$ for the 42 users in HeartSteps V2.
    The line at the bottom of each subplot indicates the points where $M_{d, k}$ equals zero on the original scale (before adding a perturbation of 0.5, taking the logarithm, or conducting standardization).
    Although the step counts are zero-inflated variables, with less than 10\% of the step counts being zero, we use a linear regression model for simplicity.
    }
    \label{fig:resid.plot.M}
\end{figure}

\begin{figure}[!htbp]
    \centering
    \includegraphics[width=\textwidth]{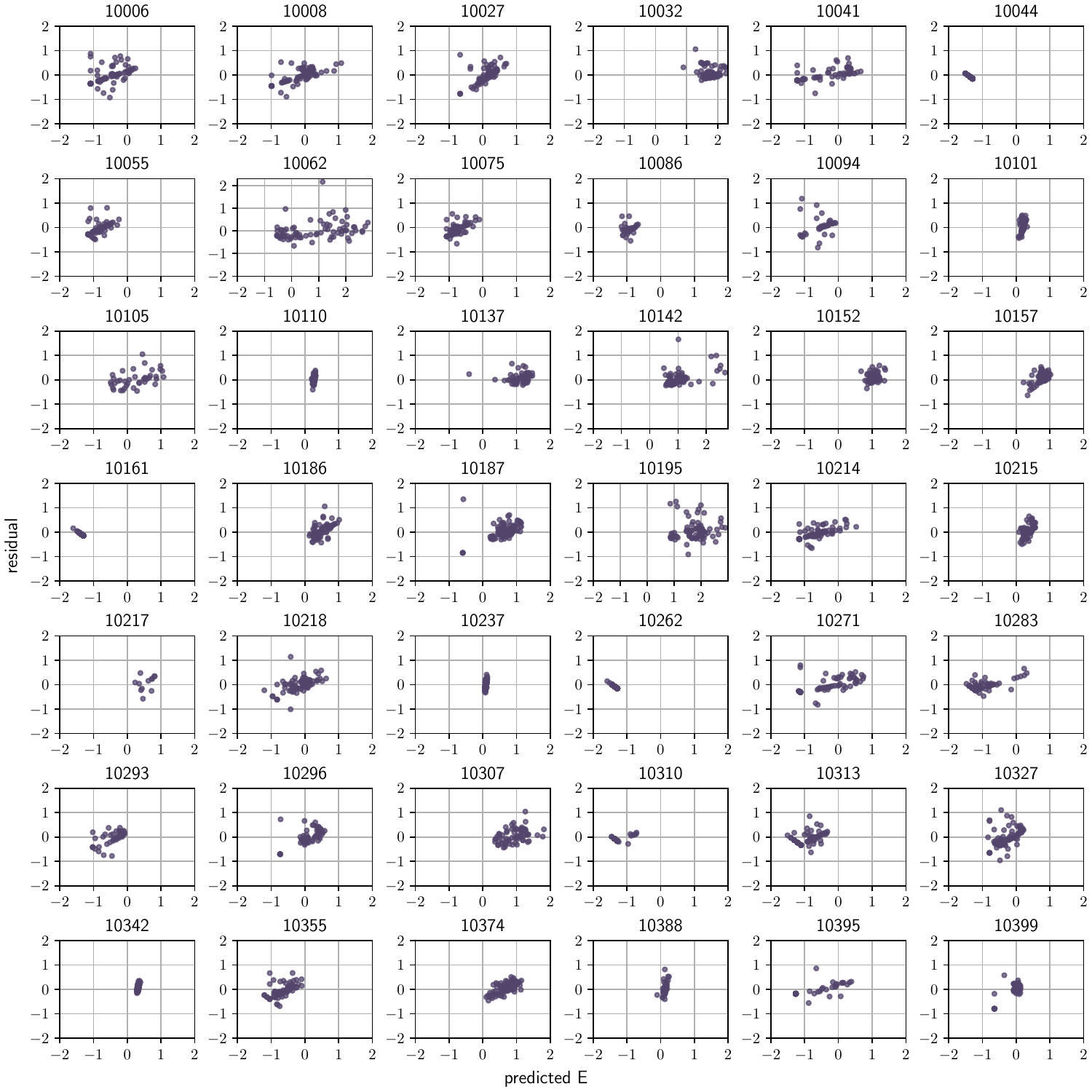}
    \caption{
    Plots of residuals $\br^E$ against the predicted values $\widehat{E}_{d, k}$ for the 42 users in HeartSteps V2.
    There are no records of app views for users 10044, 10161, and 10262, resulting in predicted values $\widehat{E}_{d, k}$ that are consistently close to the lower bound of $E$.
    }
    \label{fig:resid.plot.E}
\end{figure}

\begin{figure}[!htbp]
    \centering
    \includegraphics[width=\textwidth]{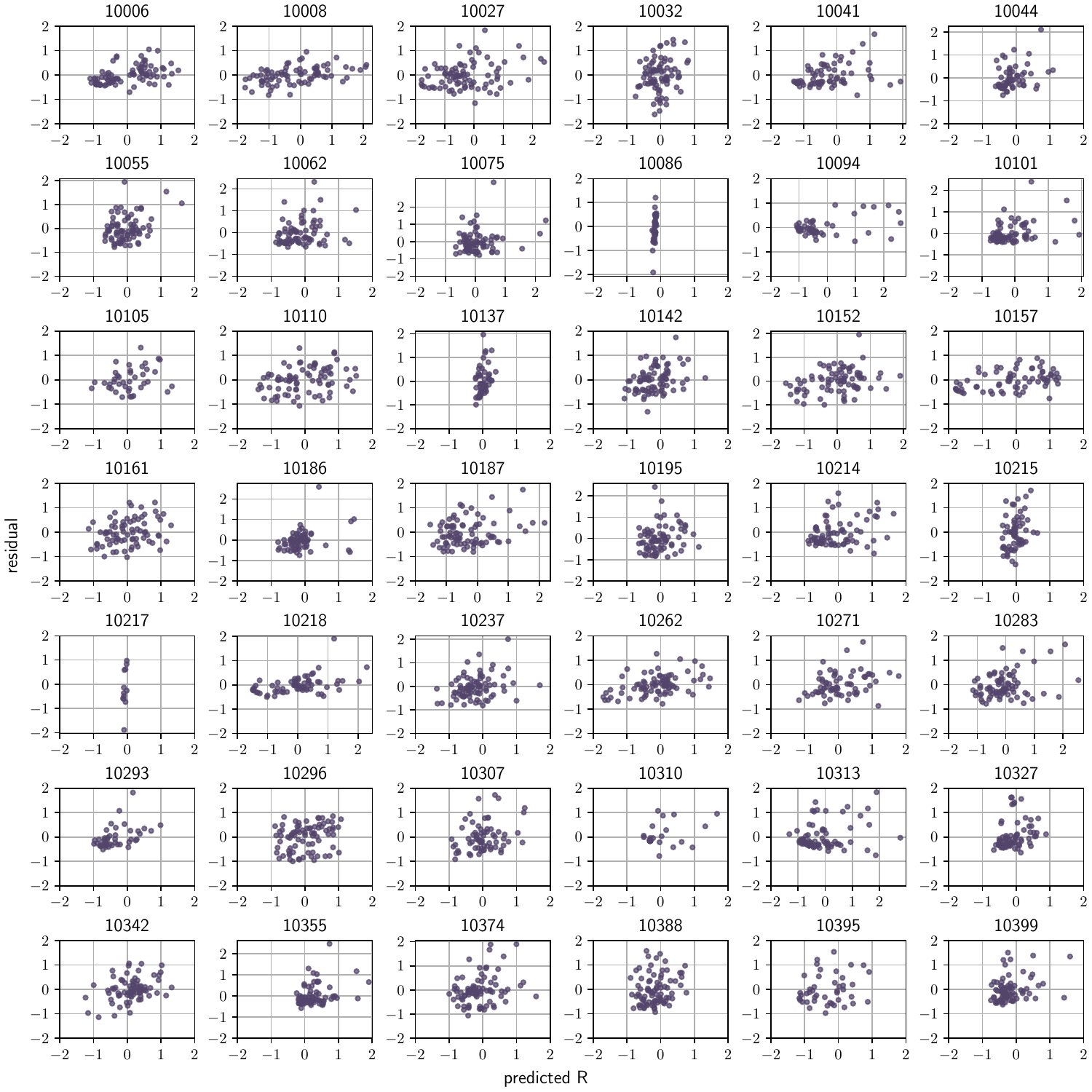}
    \caption{
    Plots of residuals $\br^R$ against the predicted values $\widehat{R}_{d, k}$ for the 42 users in HeartSteps V2.
    }
    \label{fig:resid.plot.R}
\end{figure}

\begin{figure}[!htbp]
    \centering
    \includegraphics[width=\textwidth]{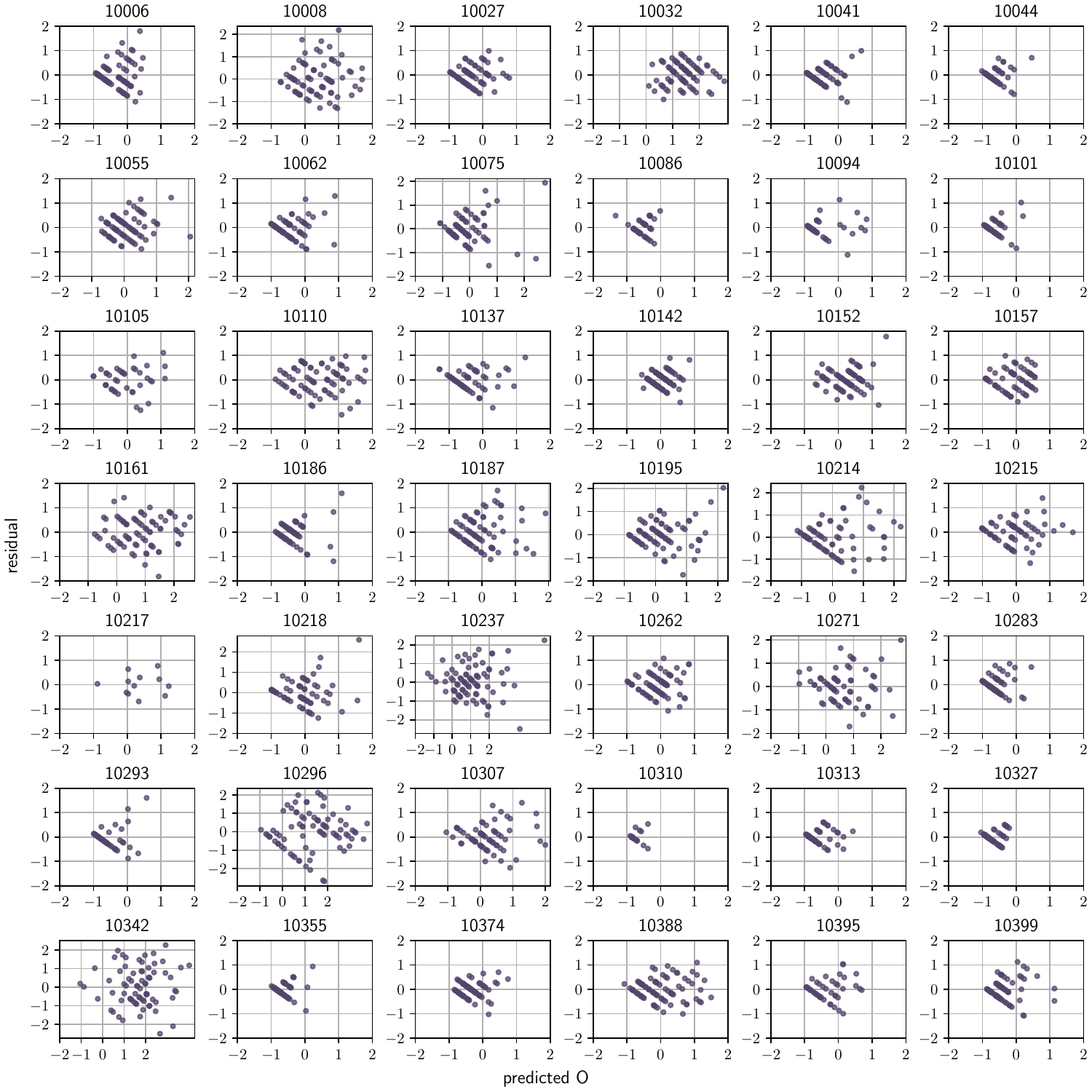}
    \caption{
    Plots of residuals $\br^O$ against the predicted values $\widehat{O}_{d, k}$ for the 42 users in HeartSteps V2.
    The residuals are scattered along parallel lines because the original values of $O_d$ before standardization are integers.
    }
    \label{fig:resid.plot.O}
\end{figure}

\begin{figure}[!htbp]
    \centering
    \begin{subfigure}{0.32\textwidth}
        \centering
        \includegraphics[width=\textwidth]{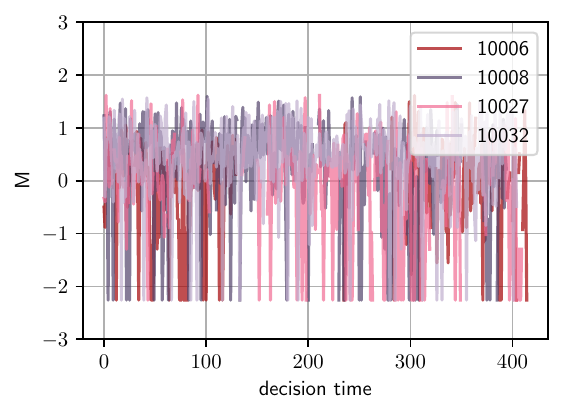}
        \caption{Original 3-month episode.}
    \end{subfigure}
    \hfill
    \begin{subfigure}{0.32\textwidth}
        \centering
        \includegraphics[width=\textwidth]{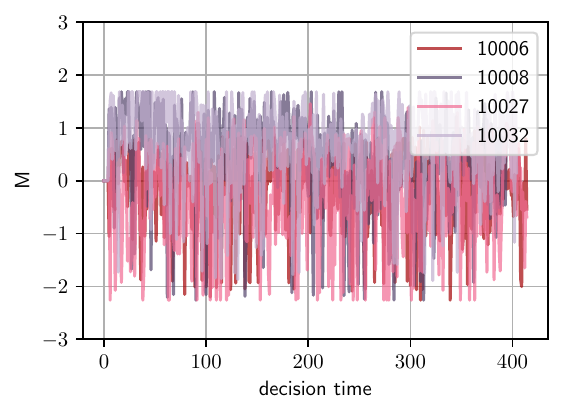}
        \caption{Generated 3-month episode.}
    \end{subfigure}
    \hfill
    \begin{subfigure}{0.32\textwidth}
        \centering
        \includegraphics[width=\textwidth]{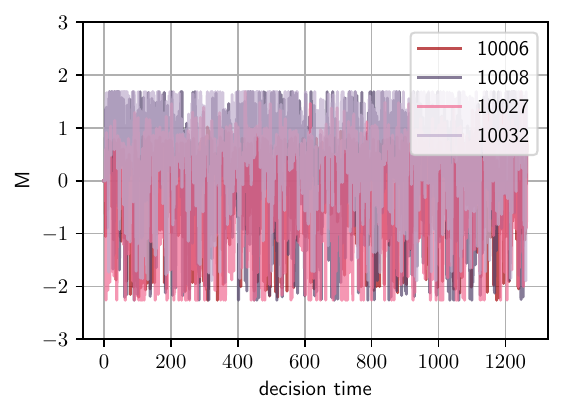}
        \caption{Generated 9-month episode.}
    \end{subfigure}
    \caption{Trends of the original and generated $M_{d, k}$ for the first four users in HeartSteps V2.}
    \label{fig:trend.M}
\end{figure}

\begin{figure}[!htbp]
    \centering
    \begin{subfigure}{0.32\textwidth}
        \centering
        \includegraphics[width=\textwidth]{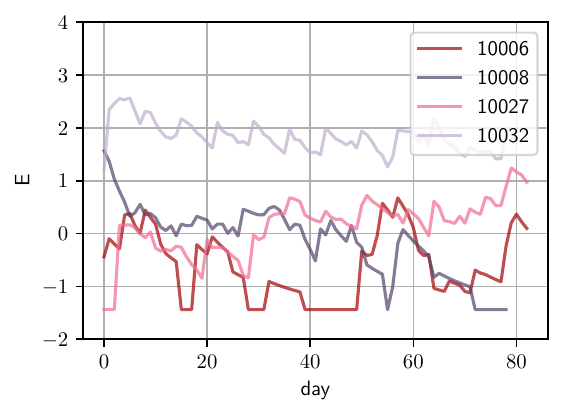}
        \caption{Original 3-month episode.}
    \end{subfigure}
    \hfill
    \begin{subfigure}{0.32\textwidth}
        \centering
        \includegraphics[width=\textwidth]{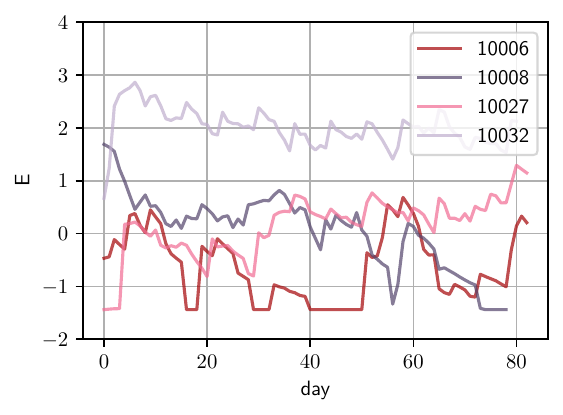}
        \caption{Generated 3-month episode.}
    \end{subfigure}
    \hfill
    \begin{subfigure}{0.32\textwidth}
        \centering
        \includegraphics[width=\textwidth]{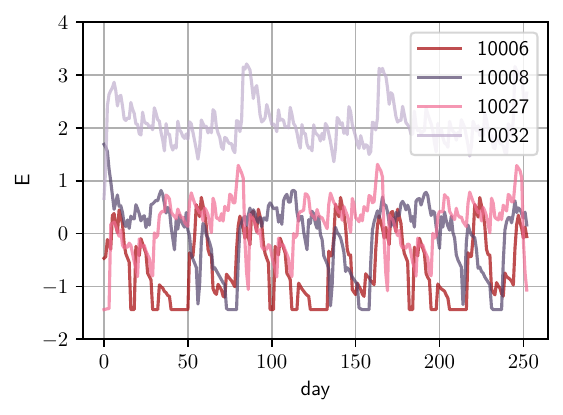}
        \caption{Generated 9-month episode.}
    \end{subfigure}
    \caption{Trends of the original and generated $E_{d}$ for the first four users in HeartSteps V2.}
    \label{fig:trend.E}
\end{figure}

\begin{figure}[!htbp]
    \centering
    \begin{subfigure}{0.32\textwidth}
        \centering
        \includegraphics[width=\textwidth]{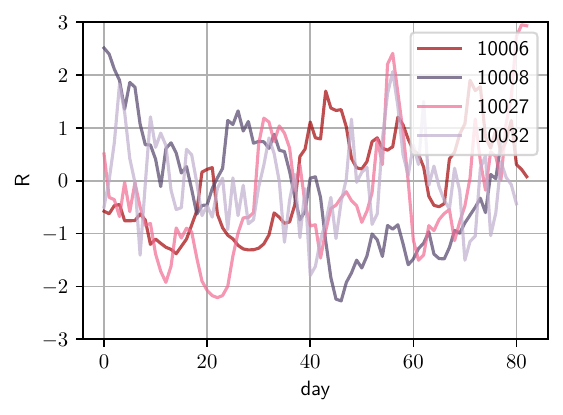}
        \caption{Original 3-month episode.}
    \end{subfigure}
    \hfill
    \begin{subfigure}{0.32\textwidth}
        \centering
        \includegraphics[width=\textwidth]{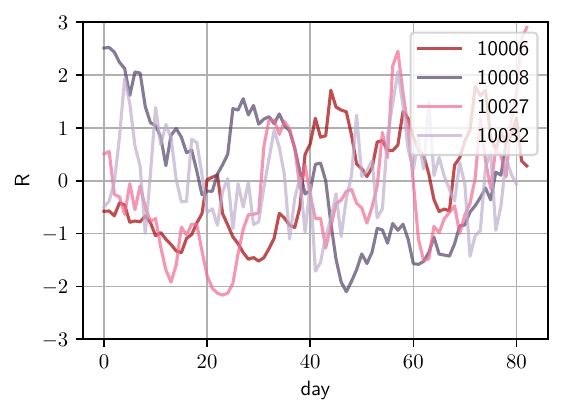}
        \caption{Generated 3-month episode.}
    \end{subfigure}
    \hfill
    \begin{subfigure}{0.32\textwidth}
        \centering
        \includegraphics[width=\textwidth]{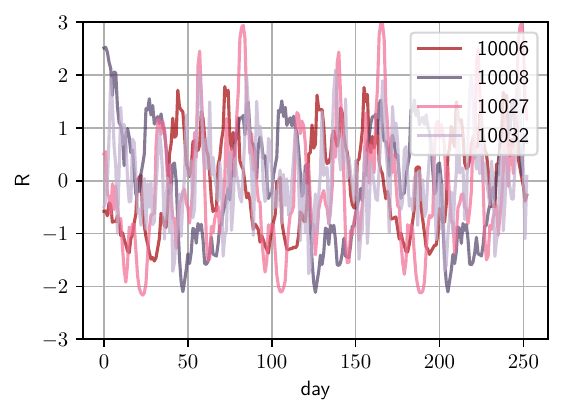}
        \caption{Generated 9-month episode.}
    \end{subfigure}
    \caption{Trends of the original and generated $R_{d}$ for the first four users in HeartSteps V2.}
    \label{fig:trend.R}
\end{figure}

\begin{figure}[!htbp]
    \centering
    \begin{subfigure}{0.32\textwidth}
        \centering
        \includegraphics[width=\textwidth]{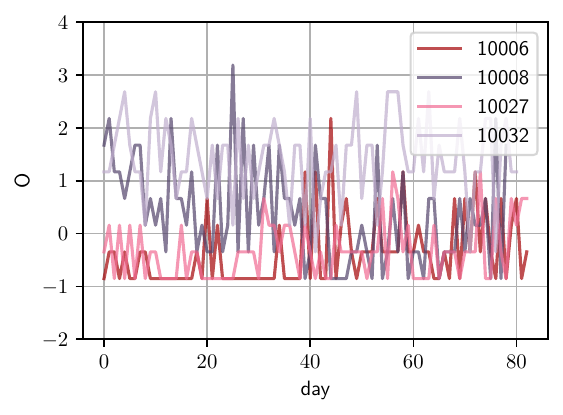}
        \caption{Original 3-month episode.}
    \end{subfigure}
    \hfill
    \begin{subfigure}{0.32\textwidth}
        \centering
        \includegraphics[width=\textwidth]{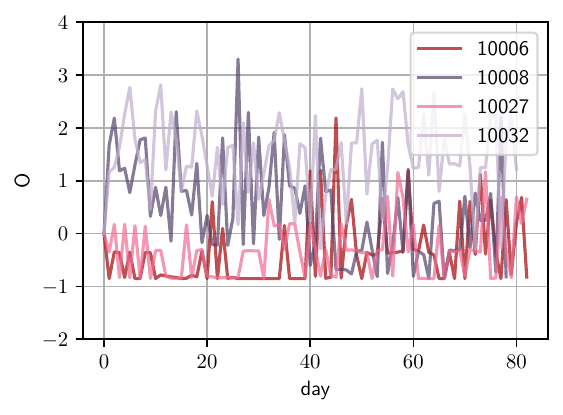}
        \caption{Generated 3-month episode.}
    \end{subfigure}
    \hfill
    \begin{subfigure}{0.32\textwidth}
        \centering
        \includegraphics[width=\textwidth]{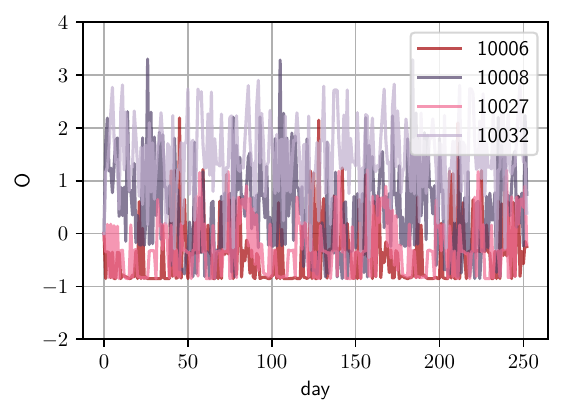}
        \caption{Generated 9-month episode.}
    \end{subfigure}
    \caption{Trends of the original and generated $O_{d}$ for the first four users in HeartSteps V2.}
    \label{fig:trend.O}
\end{figure}

\section{VARIANTS OF THE TESTBED} \label{sec:construct.testbed.variants}

\subsection{Standardized Treatment Effect (STE)} \label{sec:ste.estimate}

The standardized treatment effect (STE) is an indicator of the signal-to-noise ratio and reflects the difficulty of learning the optimal policy.
For each user, the STE is defined as
\begin{equation} \label{equ:ste}
    \frac{\bbE_{E_0, R_0 \sim \nu} \cV^{\bpi^*} (S_{1, 1}) - \bbE_{E_0, R_0 \sim \nu} \cV^{\bpi^0} (S_{1, 1})}{\sqrt{\Var_{E_0, R_0 \sim \nu} \cV^{\bpi^0} (S_{1, 1})}},
\end{equation}
where $\bpi^*$ is the optimal policy and $\bpi^0$ is the zero policy that always takes action zero.
The STE of a testbed is the average STE across all users.

When estimating the optimal policy of a given testbed, the environment is known to us. 
We can generate a large dataset as needed and find the optimal policy using flexible nonparametric methods like deep RL. 
The state vector is also $S^{\prime}_{d, k} = [E_{d - 1}, R_{d - 1}, M_{d, 1:(k-1)}, A_{d, 1:(k-1)}, C_{d, k}]$.
To apply standard methods that estimate the optimal policy for a stationary MDP, we expand the state vector as
\begin{equation*}
    \begin{split}
        \widetilde{S}^{\prime}_{d, k} 
        =& [E_{d - 1}, R_{d - 1}, \widetilde{M}_{d, 1:(K - 1)}, \widetilde{A}_{d, 1:(K - 1)}, C_{d, k}, \bbone (k = 2), \dots, \bbone (k = K)].
    \end{split}
\end{equation*}
Here, the indicators represent the time of day, with zeros added to pad the mediators to the same length.

To learn the optimal policy for each user, we use the offline deep Q-network (DQN) algorithm, implemented with the \texttt{d3rlpy} package \citep{d3rlpy}.
For each user, the training data consists of 5000 episodes.
Each episode contains $D = 252$ days and is randomly generated with actions taken with a probability of 0.5.
It is important to note that while this method offers a principled approach to identifying the optimal policy, we do not implement it in the online clinical trial.
Due to the large state space and the limited sample size, the learned policy is likely to have high variance in the trial.
This method is only employed when we have access to unlimited data in a known environment.

The mean and variance of the value function under the policies $\bpi^*$ and $\bpi^0$ are estimated using Monte Carlo methods.
For each user, the test data consists of 500 episodes.
Each episode contains $D = 252$ days, with actions taken according to the policy $\bpi^*$ or $\bpi^0$.
We then calculate the mean and variance of the total rewards over 9 months across the 500 episodes.

When generating training and test data, it is crucial to randomly sample the noise terms from the residual vectors $\br^C$, $\br^M$, $\br^E$, $\br^R$, and $\br^O$. 
Even when the signal is sufficiently large for the optimal policy to significantly outperform the zero policy over the 9-month horizon, the everyday noise remains much greater than the daily improvement in the reward. 
If the residual vector is used cyclically in the training data, each episode becomes nearly identical, causing an RL algorithm like DQN to struggle in effectively learning the policy, as the noise becomes indistinguishable from the signal.
Similarly, for test data, the variance of the zero policy estimated from it would be much smaller than the variance observed in a real trial, failing to accurately describe the STE and reflect the true difficulty of learning.

\subsection{Requirements for Constructing Variants of the Testbed} \label{sec:testbed.variant.requirements}

By plugging $M_{d, k}$ into the model of $R_d$, equation~(\ref{equ:testbed.model}) can be expressed as a vector autoregressive (VAR) process.
That is,
\begin{equation*}
\begin{split}
    R_{d} =& \theta^{R}_{0} + \sum_{k=1}^K \theta^{R}_{k} M_{d,k} + \theta^{R}_{K+1} E_{d} + \theta^{R}_{K+2} R_{d-1} + \epsilon^R_{d}, \\
    =& \theta^{R}_{0} 
    + \sum_{k=1}^K \theta^{R}_{k} \brck{\theta^{M}_{0} + \theta^{M}_{1} E_{d-1} + \theta^{M}_{2} R_{d-1} + \theta^{M}_{3} C_{d, k} + A_{d, k} \prth{\theta^{M}_{4} + \theta^{M}_{5} E_{d-1} + \theta^{M}_{6} R_{d-1} + \theta^{M}_{7} C_{d, k}} + \epsilon^M_{d, k}} \\
    & + \theta^{R}_{K+1} [\theta^{E}_{0} + \theta^{E}_{1} E_{d-1} + \sum_{k=1}^K \theta^{E}_{k+1} A_{d,k} + \sum_{k=1}^K \theta^{E}_{K + 1 + k} A_{d,k} E_{d-1} + \epsilon^E_{d}] 
    + \theta^{R}_{K+2} R_{d-1} + \epsilon^R_{d}, \\
    =& \brce{\theta^{R}_{0} + \sum_{k=1}^K \theta^{R}_{k} (\theta^{M}_{0} + \theta^{M}_{4} A_{d, k}) + \theta^{R}_{K+1} (\theta^{E}_{0} + \sum_{k=1}^K \theta^{E}_{k+1} A_{d,k})}
    + \brce{\sum_{k=1}^K \theta^{R}_{k} C_{d, k} (\theta^{M}_{3} + \theta^{M}_{7} A_{d, k})} \\
    & + E_{d-1} \brce{\sum_{k=1}^K \theta^{R}_{k} (\theta^{M}_{1} + \theta^{M}_{5} A_{d, k}) + \theta^{R}_{K+1} (\theta^{E}_{1} + \sum_{k=1}^K \theta^{E}_{K + 1 + k} A_{d,k})} \\
    & + R_{d-1} \brce{\theta^{R}_{K+2} + \sum_{k=1}^K \theta^{R}_{k} (\theta^{M}_{2} + \theta^{M}_{6} A_{d, k})}
    + \brce{\epsilon^R_{d} + \sum_{k=1}^K \theta^{R}_{k} \epsilon^M_{d, k} + \theta^{R}_{K+1} \epsilon^R_{d}} \\
    E_{d} &= \theta^{E}_{0} + \theta^{E}_{1} E_{d-1} + \sum_{k=1}^K \theta^{E}_{k+1} A_{d,k} + \sum_{k=1}^K \theta^{E}_{K + 1 + k} A_{d,k} E_{d-1} + \epsilon^E_{d}.
\end{split}
\end{equation*}
In the form of VAR,
\begin{equation*}
    \begin{bmatrix}
        R_d \\
        E_d \\
    \end{bmatrix}
    =
    \bphi
    + \bPhi
    \begin{bmatrix}
        R_{d-1} \\
        E_{d-1} \\
    \end{bmatrix}
    + \bvarepsilon_d,
\end{equation*}
where
\begin{align*}
    \bPhi = & 
    \begin{bmatrix}
          \theta^{R}_{K+2} + \sum_{k=1}^K \theta^{R}_{k} (\theta^{M}_{2} + \theta^{M}_{6} A_{d, k})
        & \sum_{k=1}^K \theta^{R}_{k} (\theta^{M}_{1} + \theta^{M}_{5} A_{d, k}) + \theta^{R}_{K+1} (\theta^{E}_{1} + \sum_{k=1}^K \theta^{E}_{K + 1 + k} A_{d,k}) \\
          0
        & \theta^{E}_{1} + \sum_{k=1}^K \theta^{E}_{K + 1 + k} A_{d,k}
    \end{bmatrix}, \\
    \bphi = & 
    \begin{bmatrix}
          \theta^{R}_{0} + \sum_{k=1}^K \theta^{R}_{k} (\theta^{M}_{0} + \theta^{M}_{4} A_{d, k}) + \theta^{R}_{K+1} (\theta^{E}_{0} + \sum_{k=1}^K \theta^{E}_{k+1} A_{d,k})
        + \sum_{k=1}^K \theta^{R}_{k} C_{d, k} (\theta^{M}_{3} + \theta^{M}_{7} A_{d, k}) \\
          \theta^{E}_{0} + \sum_{k=1}^K \theta^{E}_{k+1} A_{d,k} \\
    \end{bmatrix}, \\
    \bvarepsilon_d = & 
    \begin{bmatrix}
        \epsilon^R_{d} + \sum_{k=1}^K \theta^{R}_{k} \epsilon^M_{d, k} + \theta^{R}_{K+1} \epsilon^R_{d} \\
        \epsilon^E_{d}
    \end{bmatrix}.
\end{align*}

When constructing the testbed, it is essential to ensure that the following three requirements are met:
\begin{enumerate}
    \item The VAR model is stationary when either $A_{d,k} = 0$ or $A_{d,k} = 1$.
    \item In the limiting distribution, the mean of rewards $R_d$ should not exceed the maximum value observed in the real data. The rewards should fluctuate within a reasonable range, even without any variable truncation.
    \item The STE should adhere to the guidelines established in behavioral science.
\end{enumerate}
Since the vanilla testbed is directly fitted to the real data, these requirements are typically already satisfied.
However, we need to ensure that these requirements hold for variants of the testbed when the treatment effect is modified.

\paragraph{Stationary condition} \label{sec:stationary.condition}

The stationary condition for the VAR model requires that all the eigenvalues of $\bPhi$ lie inside the unit circle.
Since $\Phi_{2,1} = 0$, the eigenvalues of $\bPhi$ are simply $\lambda_1 = \Phi_{1, 1}$ and $\lambda_2 = \Phi_{2, 2}$.
Therefore, the VAR model is weakly stationary if
\begin{align*}
    \abs{ \theta^{R}_{K+2} + \sum_{k=1}^K \theta^{R}_{k} (\theta^{M}_{2} + \theta^{M}_{6} A_{d, k}) } &< 1, \\
    \abs{ \theta^{E}_{1} + \sum_{k=1}^K \theta^{E}_{K + 1 + k} A_{d,k} } &< 1,
\end{align*}
for all $A_{d, k} \in \{ 0, 1 \}$.

For each user $n \in \{1:N\}$, let $\btheta^M_n$, $\btheta^E_n$, $\btheta^R_n$, and $\btheta^O_n$ represent the coefficients in the vanilla testbed for this user.
Now consider the following changes, where $\xi_n$ is a shifting constant:
\begin{enumerate}
    \item For the arrow $A_{d, k} \to M_{d, k}$, shift $\theta^M_{4, n}$ to $\theta^M_{4, n} + \xi_n$.
    \item For the arrow $M_{d, k} \to R_d$, shift $\theta^R_{1:5, n}$ to $\theta^R_{1:5, n} + \xi_n$.
    \item For the arrow $A_{d, k} \to E_d$, shift $\theta^E_{2:6, n}$ to $\theta^E_{2:6, n} + \xi_n$.
    \item For the arrow $E_d \to R_d$, shift $\theta^R_{6, n}$ to $\theta^R_{6, n} + \xi_n$.
\end{enumerate}
Changes 1, 3, and 4 do not affect the eigenvalues.
For change 2, we need to ensure that $\abs{ \theta^{R}_{K+2} + \sum_{k=1}^K \theta^{R}_{k} (\theta^{M}_{2} + \theta^{M}_{6} A_{d,k}) } < 1$ when adjusting $\theta^R_{1:5, n}$ for $M_{d,k} \to R_d$.
If we consider the case where $A_{d,k} = 0 \text{ or } 1$ for all $k$ for simplicity, we have
\begin{equation*}
\begin{cases}
    \abs{ \theta^{R}_{K+2} + \sum_{k=1}^K (\theta^{R}_{k,n} + \xi_n) (\theta^{M}_{2} + \theta^{M}_{6}) } < 1, 
    & \text{when } A_{d,k} = 1 \text{ for all } k, \\
    \abs{ \theta^{R}_{K+2} + \sum_{k=1}^K (\theta^{R}_{k,n} + \xi_n) \theta^{M}_{2} } < 1,
    & \text{when } A_{d,k} = 0 \text{ for all } k. \\
\end{cases}
\end{equation*}

\paragraph{Limiting distribution} \label{sec:limiting.condition}

The expectation of the vector $[R_d, E_d]^T$ in the limiting distribution is given by
\begin{equation*}
    \balpha^{\text{stat}} = (\Ib - \bPhi)^{-1} \bphi.
\end{equation*}
Let $\bar{\balpha}^{\text{stat}}$ denote the expectation in the vanilla testbed and $\widetilde{\balpha}^{\text{stat}}$ denote the expectation in a testbed variant, where a positive or negative treatment effect has been added.

To ensure that the variables fluctuate within a reasonable range, suppose the requirement for the limiting distribution is $\balpha^{\text{stat}} \le \balpha^{\text{max}}$, where $\balpha^{\text{max}}$ represents the maximum value allowed for the expectation.
To check the limiting distribution of $R_d$ after varying $\theta^R_{1:5}$, the upper bound for $E_d$ can be fixed at $\alpha^{\text{max}}_2 = \bar{\alpha}^{\text{stat}}_2$.
Since $R_d$ and $E_d$ have been standardized, and the 99.7\% confidence interval for a standard normal distribution is $(-3, 3)$, we can set $\alpha^{\text{max}}_2 = 3$.
This condition is equivalent to
\begin{equation*}
    \widetilde{\bPhi} \balpha^{\text{max}} \le \balpha^{\text{max}} -  \widetilde{\bphi},
\end{equation*}
where $\widetilde{\bPhi}$ and $\widetilde{\bphi}$ are the parameter matrices of the testbed variant with varied $\theta^R_{1:5}$.
Thus, we only need to ensure that
\begin{equation*}
    \widetilde{\Phi}_{1,1} \alpha^{\text{max}}_1 
    + \widetilde{\Phi}_{1,2} \alpha^{\text{max}}_2 
    \le \alpha^{\text{max}}_1 - \widetilde{\phi}_1
\end{equation*}
after adjusting the coefficients for the reward $R_d$ to keep it within a reasonable range.

\paragraph{Standardized effect size condition} \label{sec:ste.condition}

We follow the steps below to examine the effect size:
\begin{enumerate}
    \item Choose a desired shifting constant $\xi$.
    \item Start by setting $\xi_n = \xi$ and check the stationary condition for user $n$. If the model for user $n$ becomes explosive under $\xi$, reduce $\xi_n$ to satisfy the stationary condition.
    \item Adjust the $R_{d-1} \to R_d$ coefficient $\theta^{R}_{K+2, n}$ if the expected rewards in the limiting distribution exceed the reasonable range, ensuring that the variables remain within acceptable limits.
    \item Estimate the effect size for this shifting constant $\xi$ as described in Section~\ref{sec:ste.estimate}.
\end{enumerate}
According to the guidelines in behavioral science \citep{cohen1988statistical}, an STE of 0.2, 0.5, and 0.8 corresponds to small, medium, and large effect sizes, respectively.

\subsection{Variants of the Testbed}

Based on the definition of STE in Section~\ref{sec:ste.estimate}, the STE of the vanilla testbed is 0.63, indicating a medium effect size.
We evaluate the performance of the proposed and baseline algorithms under different positive effects $A_{d, k} \to M_{d, k} \to R_d$ and negative effects $A_{d, k} \to E_{d} \to R_d$ by varying the coefficients in model~(\ref{equ:testbed.model}).
Specifically, we consider the following three changes. 
We choose to shift the coefficients by a constant rather than scaling them with a multiplier to introduce a common treatment effect across all users.

\paragraph{Increase the Positive Effect.}
We increase the positive effect $A_{d, k} \to M_{d, k} \to R_d$ through the proximal outcome $M_{d, k}$. 
Since the effect of $A_{d, k}$ on $M_{d, k}$ has been studied in the previous HeartSteps trial, our focus is primarily on the mediator effect from $M_{d, k} \to R_d$. 
To construct different variants of the testbed with varying strengths of $M_{d, k} \to R_d$, we increase the coefficients $\theta^R_{1:5}$. 
Let $\theta^R_{1:5, n}$ denote the coefficients in the vanilla testbed for user $n$.
For a constant $\xi$, we follow the procedures outlined in Section~\ref{sec:testbed.variant.requirements} to determine the shifting constant $\xi_n$ for each user and adjust $\theta^R_{1:5, n}$ to $\theta^R_{1:5, n} + \xi_n$.
In the current testbed, the stationary conditions and the requirements for the limiting distribution are satisfied for all $\xi$ values ranging from 0 to 0.05.
Therefore, in practice, we set $\xi_n = \xi$ for all users $n$.
Figure~\ref{subfig:ste.variant.MR} summarizes the estimated STE for $\xi = 0$ to 0.05.
Based on Figure~\ref{subfig:ste.variant.MR}, we select $\xi = 0.03$ so that this testbed variant has a large STE of 0.78.

\paragraph{Increase the Negative Effect.}
We increase the negative effect $A_{d,k} \to E_{d} \to R_d$ through the engagement $E_{d}$. 
To achieve this, we simultaneously decrease the coefficients $\theta^E_{2:6}$ for the arrow $E_{d} \to R_d$ and increase the coefficient $\theta^R_{6}$ for the arrow $A_{d,k} \to E_{d}$. 
According to Section~\ref{sec:testbed.variant.requirements}, modifying the arrows $A_{d,k} \to E_{d} \to R_d$ does not impact the stationary condition.
Thus, in practice, we set the constant $\xi_n = \xi$ for all users $n$.
Let $\theta^E_{2:6,n}$ and $\theta^R_{6,n}$ represent the coefficients in the vanilla testbed for user $n$.
We adjust $\theta^E_{2:6,n}$ to $\theta^E_{2:6,n} - \xi$ and $\theta^R_{6,n}$ to $\theta^R_{6,n} + 5\xi$.
Figure~\ref{subfig:ste.variant.AER} summarizes the estimated STE for $\xi = 0$ to 0.02.
Based on Figure~\ref{subfig:ste.variant.AER}, we select $\xi = 0.02$ so that this testbed variant has a small STE of 0.38.

\paragraph{Increase the Positive and Negative Effect.}
We simultaneously increase the positive effect and the negative effect while keeping the STE approximately the same as in the vanilla testbed.
Fix the coefficient $\theta^R_{1:5}$ at $\theta^R_{1:5, n} + 0.03$ for each user $n$, ensuring that the STE remains around 0.8 without any changes in the path $A_{d,k} \to E_{d} \to R_d$.
As noted in Section~\ref{sec:testbed.variant.requirements}, modifying the arrows $A_{d,k} \to E_{d} \to R_d$ does not affect the stationary condition.
Therefore, we set the constant $\xi_n = \xi$ for all users $n$ in practice.
Next, we shift $\theta^E_{2:6,n}$ to $\theta^E_{2:6,n} - \xi$ and shift $\theta^R_{6,n}$ to $\theta^R_{6,n} + 5\xi$.
Figure~\ref{subfig:ste.variant.AMER} summarizes the estimated STE for $\xi = 0$ to 0.02.
Based on Figure~\ref{subfig:ste.variant.AMER}, we select $\xi = 0.02$ so that this testbed variant has a medium STE of 0.59.

\begin{figure}[!htbp]
    \centering
    \begin{subfigure}{0.49\textwidth}
        \centering
        \includegraphics[width=0.7\textwidth]{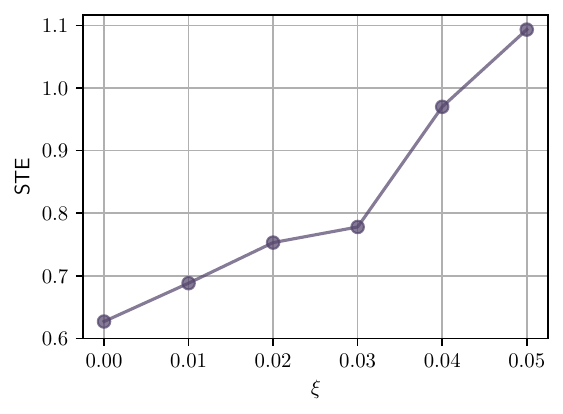}
        \caption{
        The STE for varying strengths of $M_{d,k} \to R_d$. 
        The coefficients $\theta^R_{1:5, n}$ are shifted to $\theta^R_{1:5, n} + \xi$ for each user $n$.
        \\ \;
        }
        \label{subfig:ste.variant.MR}
    \end{subfigure}
    \hfill
    \begin{subfigure}{0.49\textwidth}
        \centering
        \includegraphics[width=0.7\textwidth]{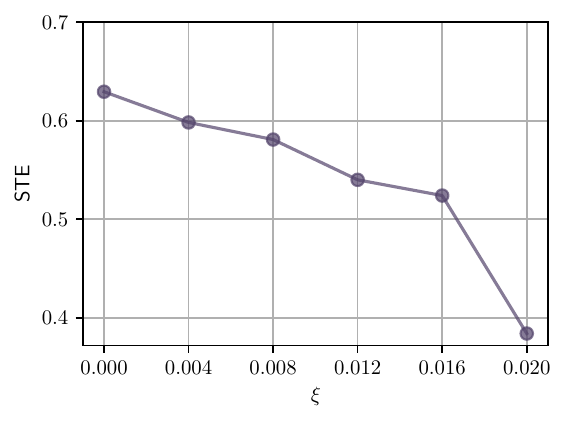}
        \caption{
        The STE for varying strengths of $A_{d,k} \to E_{d} \to R_d$. 
        The coefficients $\theta^E_{2:6, n}$ are shifted to $\theta^E_{2:6, n} - \xi$, and $\theta^R_{6, n}$ is shifted to $\theta^R_{6, n} + 5\xi$ for each user $n$.
        }
        \label{subfig:ste.variant.AER}
    \end{subfigure}
    \hfill
    \begin{subfigure}{0.49\textwidth}
        \centering
        \includegraphics[width=0.7\textwidth]{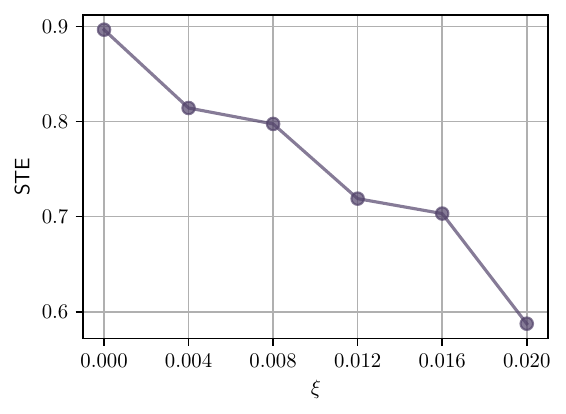}
        \caption{
        The STE for varying strengths of $A_{d,k} \to E_{d} \to R_d$ under increased positive effect $\theta^R_{1:5, n} + 0.04$.
        The coefficients $\theta^E_{2:6, n}$ are shifted to $\theta^E_{2:6, n} - \xi$, and $\theta^R_{6, n}$ is shifted to $\theta^R_{6, n} + 5\xi$ for each user $n$.
        }
        \label{subfig:ste.variant.AMER}
    \end{subfigure}
    \hfill
    \begin{subfigure}{0.49\textwidth}
        \centering
        \includegraphics[width=0.7\textwidth]{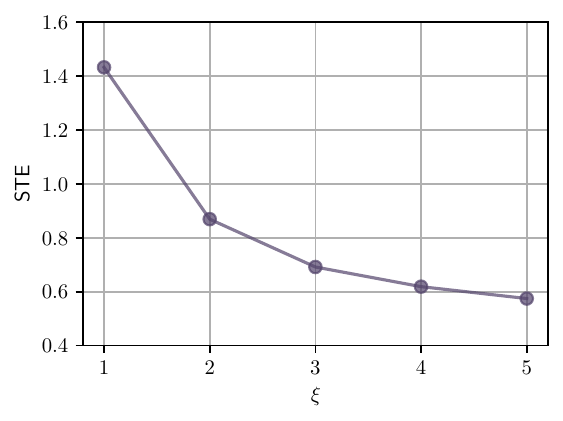}
        \caption{
        The STE for varying strengths of the direct effect $A_{d,k} \to R_d$ under a misspecified DAG. 
        The coefficients $\theta^R_{8:12, n}$ are scaled to $\theta^R_{8:12, n} / \xi$ for each user $n$.
        \\ \;
        }
        \label{subfig:ste.variant.AR}
    \end{subfigure}
    \caption{The STE for different variants of the testbed.}
    \label{fig:ste.variants}
\end{figure}

\subsection{Misspecified DAG}

To examine the impact of misspecified assumptions in Algorithm~\ref{alg:brlsvi}, we construct two variants of the testbed that violate the assumptions in the DAG in Figure~\ref{fig:dag}.

\paragraph{Add an arrow $R_{d-1} \to E_d$.}
We assume that $E_d$ satisfies
\begin{equation*}
    E_{d} = \theta^{E}_{0} + \theta^{E}_{1} E_{d-1} + \sum_{k=1}^K \theta^{E}_{k+1} A_{d,k} + \sum_{k=1}^K \theta^{E}_{K+1+k} A_{d,k} E_{d-1} + \theta^{E}_{2K+2} R_{d-1} + \epsilon^E_{d}.
\end{equation*}
All other assumptions remain the same as in model~(\ref{equ:testbed.model}).
We fit the model as described in Section~\ref{sec:model.fitting}.
The histograms of the estimated coefficients $\widehat{\btheta}^M$, $\widehat{\btheta}^E$, $\widehat{\btheta}^R$, and $\widehat{\btheta}^O$ for the 42 users are shown in Figure~\ref{fig:est.coefs.RE}.
The estimated STE of this testbed variant is 0.60, corresponding to a medium STE. 
No positive or negative effects are added in this variant.

\paragraph{Add arrows $A_{d, 1:K} \to R_d$.}
We assume that $R_d$ satisfies
\begin{equation*}
    R_{d} = \theta^{R}_{0} + \sum_{k=1}^K \theta^{R}_{k} M_{d,k} + \theta^{R}_{K+1} E_{d-1} + \theta^{R}_{K+2} R_{d-1} + \sum_{k=1}^K \theta^{R}_{K+2+k} A_{d,k} + \epsilon^R_{d}.
\end{equation*}
All other assumptions remain the same as in model~(\ref{equ:testbed.model}).
We fit the model as described in Section~\ref{sec:model.fitting}.
In the model for $R_d$, the adjacency matrix $\Omega^R$ is defined s.t. only the entries corresponding to $M_{d, k}$ and $M_{d, k^{\prime}}$ or $A_{d, k}$ and $A_{d, k^{\prime}}$ for $k \ne k^{\prime}$ are equal to one.
All other entries are set to zero.
The histograms of the estimated coefficients $\widehat{\btheta}^M$, $\widehat{\btheta}^E$, $\widehat{\btheta}^R$, and $\widehat{\btheta}^O$ for the 42 users are shown in Figure~\ref{fig:est.coefs.AR}.
Without any other modifications, the STE is 1.43, which exceeds the scientifically plausible range.
Therefore, we divide $\theta^R_{8:12}$ by a constant $\xi$, so that $\theta^R_{8:12,n}$ is adjusted to $\theta^R_{8:12,n} / \xi_n$, where $\xi_n = \xi$ for all $n$.
Figure~\ref{subfig:ste.variant.AR} summarizes the estimated STE for $\xi = 1$ to 5.
Based on Figure~\ref{subfig:ste.variant.AR}, we select $\xi = 5$ to achieve a medium STE of 0.57 for this testbed variant.

\paragraph{Add arrows $R_{d-1}, E_{d-1} \to C_{d, 1:K}$.}
We assume that $C_{d, k}$ satisfies
\begin{equation*}
    C_{d, k} = \theta^{C}_{0} + \theta^{C}_{1} E_{d-1} + \theta^{C}_{2} R_{d-1} + \epsilon^C_{d, k}.
\end{equation*}
All other assumptions remain the same as in model~(\ref{equ:testbed.model}).
We fit the model as described in Section~\ref{sec:model.fitting}.
The histograms of the estimated coefficients $\widehat{\btheta}^M$, $\widehat{\btheta}^E$, $\widehat{\btheta}^R$, $\widehat{\btheta}^O$ and $\widehat{\btheta}^C$ for the 42 users are shown in Figure~\ref{fig:est.coefs.RC}.
The estimated STE of this testbed variant is 0.58, corresponding to a medium STE. 
No positive or negative effects are added in this variant.

\begin{figure}[!htbp]
    \centering
    \includegraphics[width=\textwidth]{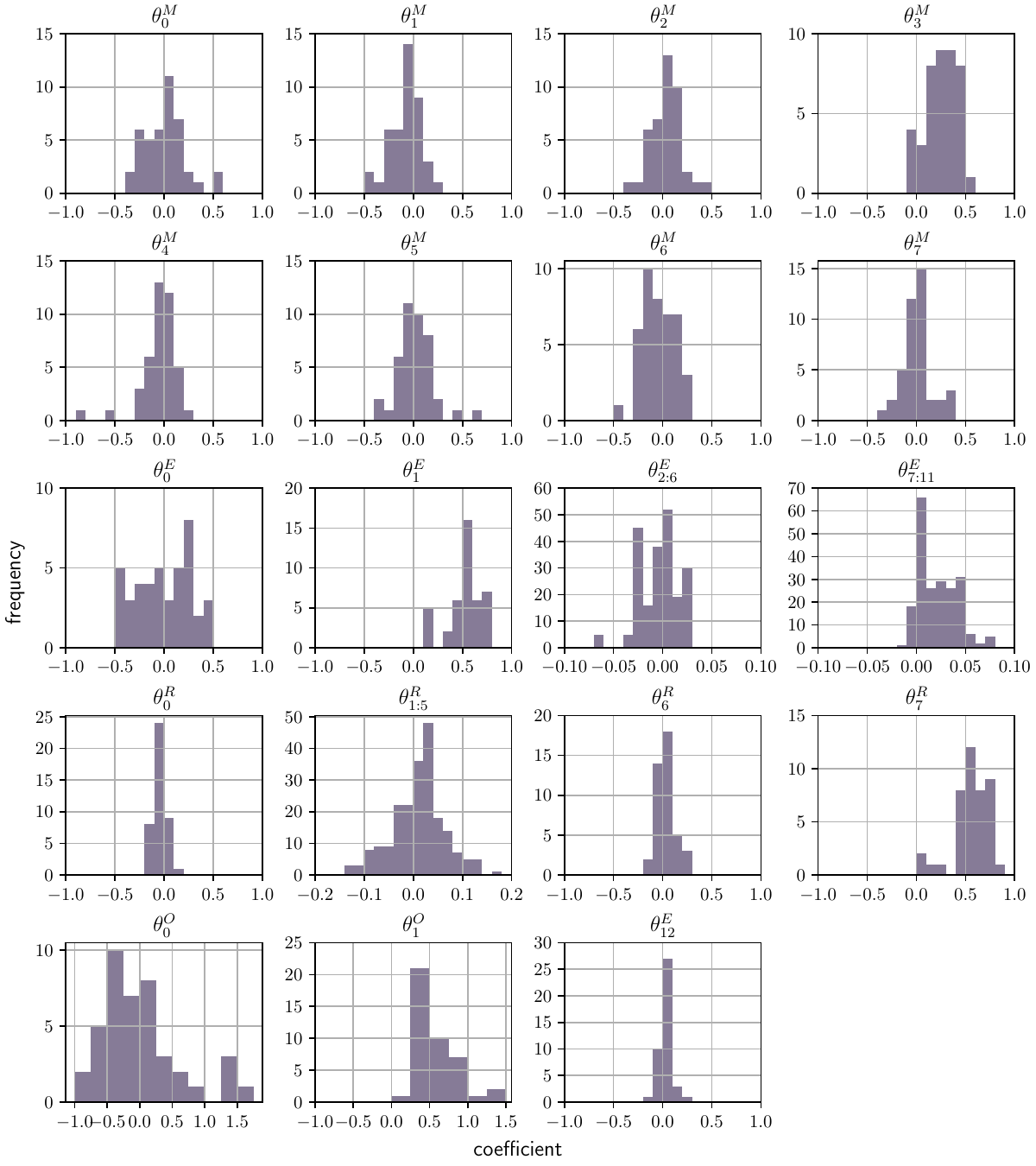}
    \caption{
    Histograms of the estimated coefficients $\widehat{\btheta}^M$, $\widehat{\btheta}^E$, $\widehat{\btheta}^R$, and $\widehat{\btheta}^O$ for the 42 users in the testbed Variant (\subref{subfig:v5}). 
    The first two rows represent the coefficients $\widehat{\btheta}^M$, while the third to fifth rows represent the coefficients $\widehat{\btheta}^E$, $\widehat{\btheta}^R$, and $\widehat{\btheta}^O$, respectively.
    The coefficients $\theta^E_{2:6}$, $\theta^E_{7:11}$, and $\theta^R_{1:5}$, corresponding to $K$ actions or $K$ proximal outcomes, are plotted in the same histogram.
    The additional parameter $\theta^E_{12}$ that represents the effect from $R_{d-1}$ to $E_d$ is shown in the last subplot.
    }
    \label{fig:est.coefs.RE}
\end{figure}

\begin{figure}[!htbp]
    \centering
    \includegraphics[width=\textwidth]{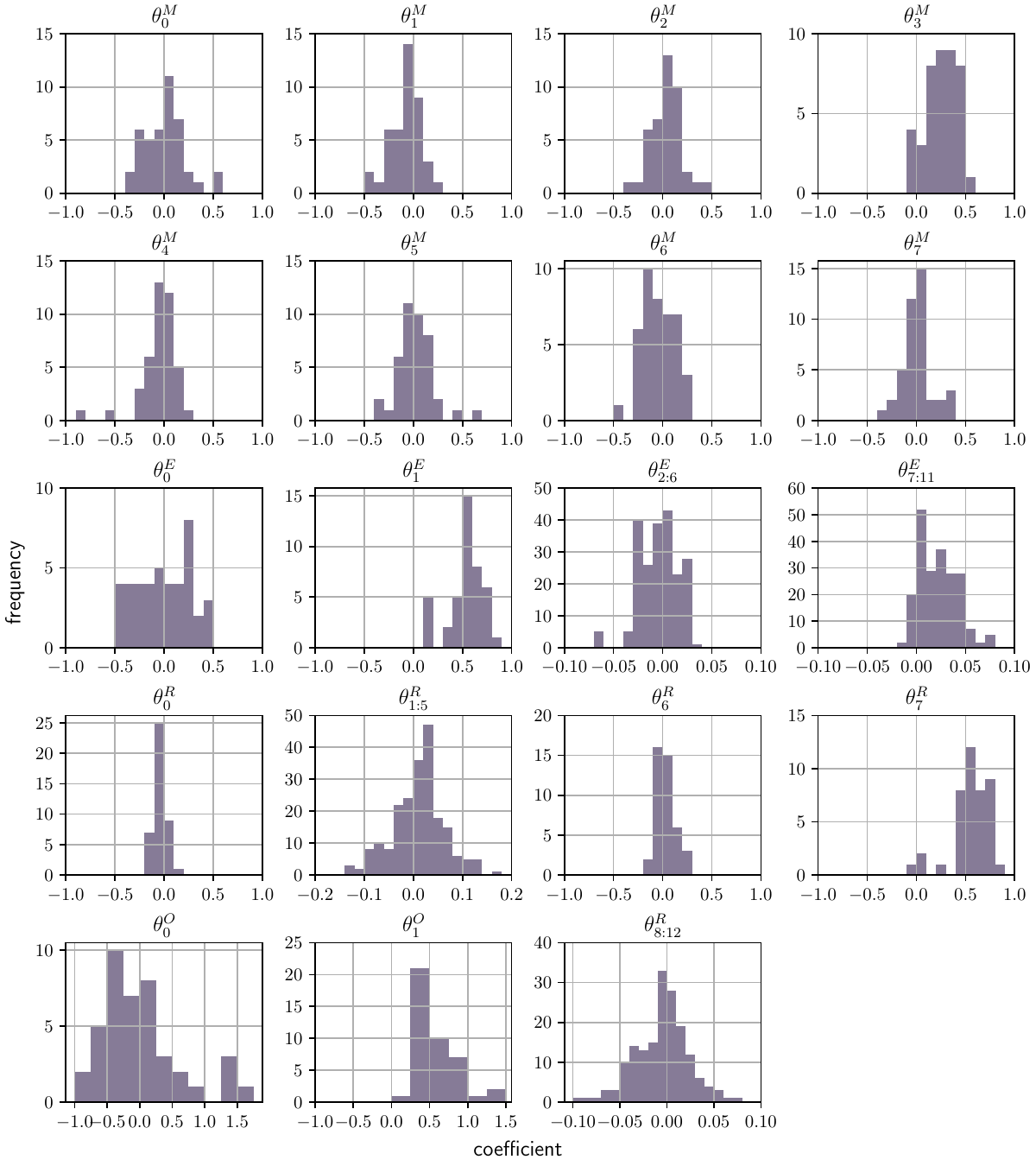}
    \caption{
    Histograms of the estimated coefficients $\widehat{\btheta}^M$, $\widehat{\btheta}^E$, $\widehat{\btheta}^R$, and $\widehat{\btheta}^O$ for the 42 users in the testbed Variant (\subref{subfig:v6}). 
    The first two rows represent the coefficients $\widehat{\btheta}^M$, while the third to fifth rows represent the coefficients $\widehat{\btheta}^E$, $\widehat{\btheta}^R$, and $\widehat{\btheta}^O$, respectively.
    The coefficients $\theta^E_{2:6}$, $\theta^E_{7:11}$, and $\theta^R_{1:5}$, corresponding to $K$ actions or $K$ proximal outcomes, are plotted in the same histogram.
    The additional parameters $\theta^R_{8:12}$ that represent the effect from $A_{d, 1:K}$ to $R_d$ are shown in the last subplot.
    }
    \label{fig:est.coefs.AR}
\end{figure}

\begin{figure}[!htbp]
    \centering
    \includegraphics[width=0.97\textwidth]{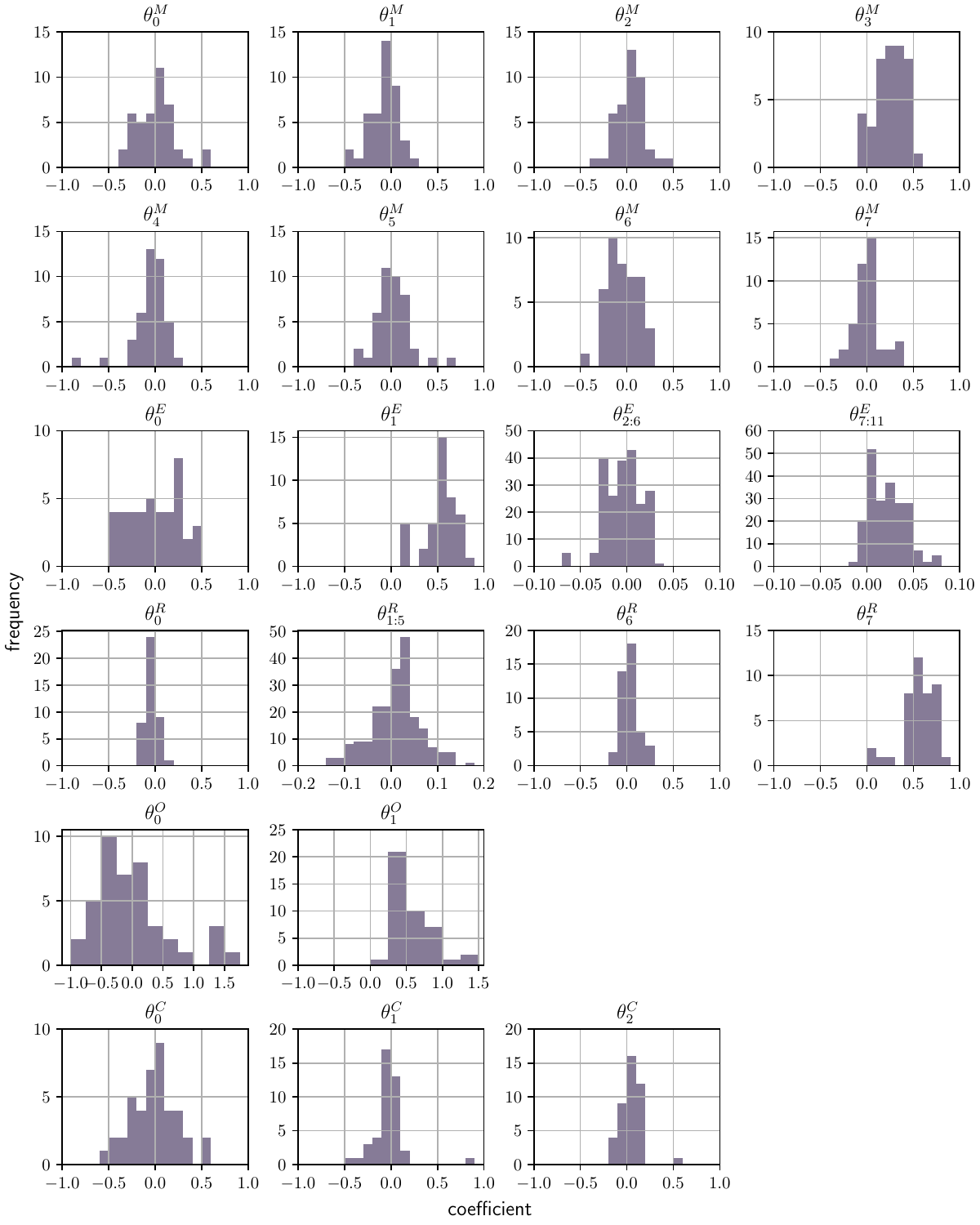}
    \caption{
    Histograms of the estimated coefficients $\widehat{\btheta}^M$, $\widehat{\btheta}^E$, $\widehat{\btheta}^R$, $\widehat{\btheta}^O$, and $\widehat{\btheta}^C$ for the 42 users in the testbed Variant (\subref{subfig:v6}). 
    The first two rows represent the coefficients $\widehat{\btheta}^M$, while the third to fifth rows represent the coefficients $\widehat{\btheta}^E$, $\widehat{\btheta}^R$, and $\widehat{\btheta}^O$, respectively.
    The coefficients $\theta^E_{2:6}$, $\theta^E_{7:11}$, and $\theta^R_{1:5}$, corresponding to $K$ actions or $K$ proximal outcomes, are plotted in the same histogram.
    The additional parameters $\theta^C_{0:2}$ are shown in the last line.
    }
    \label{fig:est.coefs.RC}
\end{figure}

\subsection{Previous Mediators Affect the Optimal Actions} \label{sec:interaction.effect.MA}

To demonstrate the advantage of constructing the state $S^{\prime}_{d, k}$ with previous mediators $M_{d, 1:(k-1)}$, we assume that
\begin{equation} \label{equ:testbed.model.MA}
    \begin{split}
        M_{d, k} = & \theta^{M,k}_{0} + \theta^{M,k}_{1} E_{d-1} + \theta^{M,k}_{2} R_{d-1} + \theta^{M,k}_{3} C_{d, k} \\
        & + A_{d, k} \prth{\theta^{M,k}_{4} + \theta^{M,k}_{5} E_{d-1} + \theta^{M,k}_{6} R_{d-1} + \theta^{M,k}_{7} C_{d, k} + \sum_{j=1}^{k - 1} \theta^{M,k}_{7 + j} M_{d, j}} + \epsilon^M_{d, k}.
    \end{split}
\end{equation}
All other assumptions remain the same as in model~(\ref{equ:testbed.model}).
Since the coefficients in (\ref{equ:testbed.model.MA}) depend on the time $k$, we fit a ridge regression separately for each $k$ with $L_2$ penalty.
After fitting the model, all coefficients have been clipped to the range $[-1, 1]$.
Due to the limited sample size at each time $k$ of each user, we have also included the data when users are not available when estimating the coefficients.
The histograms of the estimated coefficients $\widehat{\btheta}^{M,k}$ are shown in Figure~\ref{fig:est.coefs.MA}.
When generating the step count $M_{d, k}$ using the testbed, the truncation in (\ref{equ:step.truncation}) is removed in this variant.
Similar as described in Section~\ref{sec:data.generation}, the residuals $\br^{M, k}$ are sequentially used as the noise terms $\epsilon^M_{d, k}$.
If the residual is missing at some time, the noise $\epsilon^M_{d, k}$ is sampled from observed values in $\br^{M, k}$ at the same time $k$.

\begin{figure}[!htbp]
    \centering
    \includegraphics[width=0.68\textwidth]{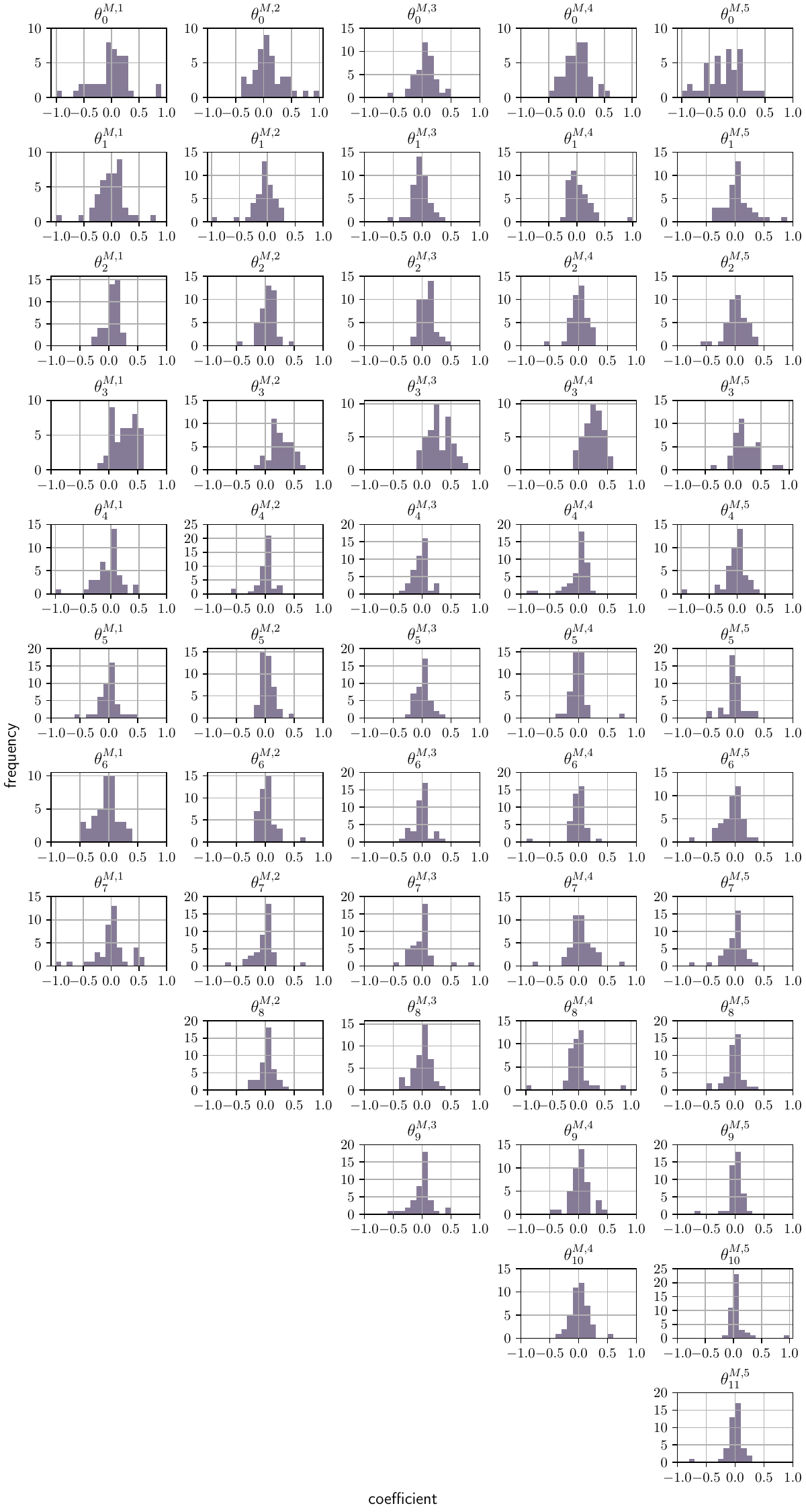}
    \caption{
    Histograms of the estimated coefficients $\widehat{\btheta}^M$ for the 42 users under model (\ref{equ:testbed.model.MA}). 
    Each column represents the coefficients of a time $k$.
    }
    \label{fig:est.coefs.MA}
\end{figure}

\end{document}